\newtheorem{remark}{Remark}[section]
\newtheorem{definition}{Definition}[section]
\newtheorem{lemma}{Lemma}[section]
\newtheorem{theorem}{Theorem}[section]
\newtheorem{proposition}{Proposition}[section]
\def\OV{\overline V}
\def\RR{\mathbb R}
\def\EE{\mathcal E}
\newcommand{\BS}{\mathbb S}
\def\theta{\vartheta}
\def\argmin{{\rm arg}\!\min}
\def\be{\begin{equation}}
\def\ee{\end{equation}}
\def\bea{\begin{eqnarray}}
\def\eea{\end{eqnarray}}
\def\Pi{P}
\newtheorem{assu}{Assumption}[section]
\newcommand{\mc}[1]{\mathcal{#1}}
\newcommand{\la}{\langle}
\newcommand{\ra}{\rangle}
\title{Consensus-Based Optimization on the Sphere: Convergence to Global Minimizers and Machine Learning}
\author{Massimo Fornasier \footnote{Department of Mathematics, Technical University of Munich, Boltzmannstra{\ss}e 3, 85748 Garching (Munich), Germany
(massimo.fornasier{@}ma.tum.de).}\qquad
Hui Huang \footnote{Department of Mathematics and Statistics, University of Calgary
	(hui.huang1@ucalgary.ca), 2500 University Drive NW
Calgary, AB, Canada.}\qquad
Lorenzo Pareschi\footnote{Department of Mathematics \& Computer Science, University of Ferrara, Via Machiavelli 30, Ferrara, 44121, Italy (lorenzo.pareschi{@}unife.it).}\qquad
Philippe S\"{u}nnen \footnote{Department of Mathematics, Technical University of Munich, Boltzmannstra{\ss}e 3, 85748 Garching (Munich), Germany
	(philippe.suennen{@}ma.tum.de).}
}
\begin{document}
\maketitle

\begin{abstract}
 We investigate the implementation of a new stochastic Kuramoto-Vicsek-type model for global optimization of nonconvex functions on the sphere. This model belongs to the class of Consensus-Based Optimization. In fact, particles move on the sphere driven by a drift towards an instantaneous consensus point, which is computed as a convex combination of particle locations, weighted by the cost function according to Laplace's principle, and it represents an approximation to a global minimizer. The dynamics is further perturbed by a random vector field to favor exploration, whose variance is a function of the distance of the particles  to the consensus point. In particular, as soon as the  consensus is reached the stochastic component vanishes.
The main results of this paper are about the proof of convergence of the numerical scheme to global minimizers provided conditions of well-preparation of the initial datum. The proof combines previous results of mean-field limit with a novel asymptotic analysis, and classical convergence results of numerical methods for SDE.
We present several numerical experiments, which show that the algorithm proposed in the present paper scales well with the dimension and is extremely versatile. To quantify the performances of the new approach, we show that the algorithm is able to perform essentially as good as {\it ad hoc} state of the art methods  in challenging problems in signal processing and machine learning, namely the phase retrieval problem and the robust subspace detection.  
\end{abstract}

{\bf Keywords}:  global optimization, consensus-based optimization, asymptotic convergence analysis, stochastic Kuramoto-Vicsek model, mean-field limit, numerical methods for SDE.

\tableofcontents

\section{Introduction}

\subsection{Derivative-free optimization and metaheuristics}

Machine learning is about parametric nonlinear algorithms, whose parameters are optimized towards several tasks such as feature selection, dimensionality reduction, clustering, classification, regression, and generation. In view of the nonlinearity of the algorithms and the use of often nonconvex data misfits or penalizations/regularizations, the training phase is most commonly a nonconvex optimization. Moreover, 
the efficacy of such methods is often determined by considering a large amount of parameters, which makes the optimization problem high dimensional and therefore quite hard.
Often first order methods, such as gradient descent methods, are preferred both because of speed and scalability and because they are considered generically able to escape the trap of saddle points \cite{recht19}, and in some cases they are able even to compute global minimizers \cite{Chen_2019,liu2019bad,bah2019}.
Nevertheless,   for some models, such as training of certain feed-forward deep neural networks, the gradient tends to explode or vanish, \cite{bengio1994learning}. 
For many other problems the derivative of the objective function can be extremely computational expensive  to compute or the objective function may not be even differentiable at all. 
Finally, gradient descent methods do not offer in general guarantees of global convergence and, in view of high dimensionality and nonconvexity, a large amount of local minimizers are expected to possibly trap the dynamics (see Section \ref{sec:robsub} for concrete examples).\\
Long before the current uses in machine learning, nonconvex optimizations have been considered in optimal design of any sort of processes and several solutions  have been proposed to tackle these problems. In this paper we are concerned with those which fall into the class of  \textit{metaheuristics} 
\cite{Aarts:1989:SAB:61990,Back:1997:HEC:548530,Blum:2003:MCO:937503.937505,Gendreau:2010:HM:1941310}, which provide empirically robust solutions to hard optimization problems with fast algorithms. Metaheuristics are  methods that orchestrate an interaction between local improvement procedures and global/high level strategies, and combine random and deterministic decisions,
to create a process capable of escaping from local optima and performing a robust search of a solution space.
Starting with the groundbreaking work of Rastrigin on Random Search in 1963 \cite{rastrigin63}, 
numerous mechanisms for multi-agent global optimization have been  considered, among the most prominent instances we recall the Simplex Heuristics \cite{NeldMead65}, Evolutionary Programming \cite{Fogel:2006:ECT:1202305},  the  Metropolis-Hastings sampling algorithm \cite{hastings70}, Genetic Algorithms \cite{Holland:1992:ANA:531075}, Particle Swarm Optimization (PSO) \cite{kennedy2010particle,poli2007particle}, Ant Colony Optimization (ACO) \cite{dorigo2005ant}, Simulated Annealing (SA), \cite{holley1988simulated,kirkpatrick1983optimization}. 
Despite the tremendous empirical success of these techniques, it is still quite difficult to provide guarantees of robust convergence to global minimizers, because of the random component of metaheuristics, which would require to discern the stochastic dependencies. Such analysis is often a very hard task, especially for those methods that combine instantaneous decisions with memory mechanisms. 

\subsection{Consensus-based optimization}

Recent work  by Pinnau, Carrillo et al. \cite{pinnau2017consensus,carrillo2018analytical}  on Consensus-based Optimization (CBO) focuses on instantaneous stochastic and deterministic decisions in order to establish a consensus among particles on the location of the global minimizers within a domain.
In view of the instantaneous nature of the dynamics, the evolution can be interpreted as a system of first order stochastic differential equations (SDEs), whose large particle limit is approximated by a deterministic partial differential equation of mean-field type. The large time behavior of such a deterministic PDE can be analyzed by classical techniques of large deviation bounds and the global convergence of the mean-field model can be mathematically proven in a rigorous way for a large class of optimization problems, see \cite{fornasier2021consensusbased}. Certainly CBO is a significantly simpler mechanism with respect to more sophisticated metaheuristics, which can include different features including memory of past exploration. Nevertheless, it is general enough to explain other metaheuristics methods such as particle swarm optimization \cite{cipriani2021zero,grassipareschi21} and powerful and robust enough to tackle many interesting nonconvex optimizations of practical relevance in machine learning. In particular CBO and variants have been recently tested as optimization methods for the training of artificial neural networks, showing competitive results over stochastic gradient descent, also in terms of generalization error, see \cite[Section 5.5 and Figure 6 and Figure 7]{benfenati2021binary} and \cite[Section 4.3 and Figure 6]{carrillo2019consensus}. From the theoretical side, one may refer, for instance, to the recent paper \cite{Cao_Gu_2020} for theoretical estimates of the generalization error  in training deep neural networks. If one really inspects carefully the results and the proofs, one realizes that \cite[Theorem 3.2]{Cao_Gu_2020} is all about the {\it global} optimization by gradient descent of the empirical risk. Since CBO methods are precisely designed to achieve global optimization, they also allow for same guarantees of generalization errors as gradient descent, as one could simply substitute \cite[Theorem 3.2]{Cao_Gu_2020} with any global convergence result of CBO and obtain the same bounds. We mention also that CBO with adaptive momentum has been recently proposed in \cite{chen2020consensusbased}, providing better generalization results than the state-of-the-art method Adam in solving a deep learning task for partial differential equations with low-regularity solutions.
Some theoretical gaps remain open in the analysis of CBO though, in particular the lack of a rigorous derivation of the mean-field limit  as in \cite{carrillo2018analytical}, which as been very recently established just 
in a non-quantitative form in \cite{huang2021meanfield}.

\subsection{Consensus-based optimization on the sphere}

Motivated by lack of a quantitative mean-field limit and by the several potential applications in machine learning, in the companion paper \cite{fhps20-1} we introduced  {for the first time in the literature a new CBO approach to solve the following constrained optimization problem
\begin{equation}\label{typrob}
v^\ast \in \argmin\limits_{v\in \Gamma}\EE(v)\,,
\end{equation}
where $\EE:\mathbb R^{d} \to \mathbb R$ is a given continuous cost function, which we wish to minimize over a compact hypersurface $\Gamma$. In this paper we consider the particular case of
$\Gamma=\mathbb S^{d-1}$ being the hypersphere, for which we formulate} a system of $N$ interacting particles  $((V_t^i)_{t\geq 0 })_{i=1,\dots,N}$ satisfying  the following stochastic Kuramoto-Vicsek-type dynamics expressed in It\^{o}'s form
\begin{align} \label{stochastic Kuramoto-Vicsek}
dV_t^i &= \lambda P(V_t^i)v_{\alpha,\EE}(\rho_t^N)dt + \sigma |V_t^i - v_{\alpha,\EE}(\rho_t^N)| P(V_t^i)dB_t^i-\frac{\sigma^2}{2}(V_t^i-v_{\alpha,\EE}(\rho_t^N))^2\frac{(d-1)V_t^i}{|V_t^i|^2}dt\,,
\end{align}
where $\lambda>0$ is a suitable drift parameter, $\sigma>0$ is a diffusion parameter,
\begin{equation}
\rho_t^N=\frac{1}{N}\sum_{i=1}^{N}\delta_{V_t^i}
\end{equation}
is the empirical measure of the particles ($\delta_v$ is the Dirac measure at $v\in\RR^d$), and
\begin{equation}\label{ValphaE}
v_{\alpha,\EE}(\rho_t^N)=\sum_{j=1}^{N}\frac{V_t^j\omega_\alpha^\EE(V_t^j)}{\sum_{i=1}^{N}\omega_\alpha^\EE(V_t^i)}=\frac{\int_{\mathbb R^{d}}v\omega_\alpha^\EE(v)d\rho_t^N}{\int_{\mathbb R^{d}}\omega_\alpha^\EE(v)d\rho_t^N}\,, \qquad  \omega_\alpha^\EE(v):=e^{-\alpha\EE(v)}\,.
\end{equation}
{Here and below we denote with $\int f(v) d \mu(v)$ or equivalently $\int f(v) \mu(dv)$ the integration of an arbitrary function with respect to a measure $\mu$.}
This stochastic system is considered complemented with independent and identically distributed (i.i.d.) initial data $V_0^i\in\BS^{d-1}$ with $i=1,\cdots,N$, and the common law is denoted by $\rho_0\in \mc{P}(\BS^{d-1})$. The trajectories $((B_t^i)_{t\geq0})_{i=1,\dots N}$ denote $N$ independent standard Brownian motions in $\RR^d$.
In \eqref{stochastic Kuramoto-Vicsek} the projection operator $P(\cdot)$ is defined by
\begin{equation}\label{defP}
P(v)=I-\frac{vv^{T}}{|v|^2}\,.
\end{equation}
It is easy to check that
\begin{equation}\label{orthproj}
P(v)v=0
\mbox{ and } v\cdot P(v)y=0 \mbox{ for all }y\in \RR^d\,.
 \end{equation}
The choice of the weight function $\omega_\alpha^\EE$ in \eqref{ValphaE} comes from  the  well-known Laplace principle \cite{miller2006applied,Dembo2010,pinnau2017consensus}, a classical asymptotic method for integrals, which states that for any probability measure $\rho\in\mc{P}_{\rm{ac}}(\RR^d)$ (absolutely continuous), it holds
\begin{equation}\label{Laplace}
\lim\limits_{\alpha\to\infty}\left(-\frac{1}{\alpha}\log\left(\int_{\RR^d}e^{-\alpha\EE(v)}d\rho(v)\right)\right)=\inf\limits_{v \in \rm{supp }\rho} \EE(v)\,.
\end{equation}

Let us discuss the mechanism of the dynamics. The right-hand-side of the equation \eqref{stochastic Kuramoto-Vicsek} is made of three terms. {The first deterministic term $ \lambda P(V_t^i)v_{\alpha,\EE}(\rho_t^N)dt =-\lambda P(V_t^i)(V_t^i-v_{\alpha,\EE}(\rho_t^N)) dt$, because of \eqref{orthproj} $P(V_t^i)V_t^i=0$ , imposes a drift to the dynamics towards $v_{\alpha,\EE}$, which is the current consensus point at time $t$ as an approximation to the global minimizer, and the term disappears when $V_t^i=v_{\alpha,\mathcal{E}}$. In fact, the consensus point $v_{\alpha,\EE}$ is explicitly computed as in \eqref{ValphaE} and it may lay in general outside $\mathbb S^{d-1}$. This choice of an embedded weighted barycenter is  very simple, compatible with fast computations, and, for a compact manifold as $\mathbb S^{d-1}$, it is a good proxy for a minimizer $v^*$. One could alternatively consider the computation of a weighted barycenter on the manifold
$$
v_{\alpha,\mathcal E}^{\mathbb S^{d-1}}(\rho_t^N) = \arg \min_{v \in \mathbb S^{d-1}} \int_{\mathbb S^{d-1}} d_{\mathbb S^{d-1}}(v,w)^p e^{-\alpha \mathcal E(w)} d\rho_t^N(w),
$$
where $d_{\mathbb S^{d-1}}$ is the (Riemannian) distance on $\mathbb S^{d-1}$, $p>0$, and $\rho_t^N$ is again the particle distribution. However, the computation of $v_{\alpha,\mathcal E}^{\mathbb S^{d-1}}$ is in general not explicit and one may have to solve at each time $t$ a nontrivial optimization problem over the sphere in order to compute $v_{\alpha,\mathcal E}^{\mathbb S^{d-1}}$, the so-called Weber problem. These are all good reasons for choosing the simpler embedded alternative \eqref{ValphaE}.}

The second stochastic term $\sigma |V_t^i - v_{\alpha,\EE}(\rho_t^N)| P(V_t^i)dB_t^i$ introduces a random decision to favor the exploration, whose variance is a function of the distance of particles to the consensus points. In particular, as soon as the consensus is reached, then the stochastic component vanishes. The last term $-\frac{\sigma^2}{2}(V_t^i-v_{\alpha,\EE}(\rho_t^N))^2\frac{(d-1)V_t^i}{|V_t^i|^2}dt$, combined with $P(\cdot)$, it is needed to ensure that the dynamics stays on the sphere despite the Brownian motion component.  
{Namely, this third term stems from It\^{o}'s formula to ensure $d |V_t^i|^2=0$, see \cite[Theorem 2.1]{fhps20-1}.}
We further notice that the dynamics does not make use of any derivative of $\EE$, but only of its pointwise evaluations, which appear integrated in \eqref{ValphaE}. 
Hence, the equation can be in principle numerically implemented at discrete times also for cost functions $\EE$ which are just continuous and with no further smoothness {and the resulting numerical scheme is fully derivative-free}. We require more regularity of $\EE$ exclusively to ensure formal well-posedness of the evolution and for the analysis of its large time behavior, but it is not necessary for its numerical implementation. {A possible discrete-time approximation and resulting numerical scheme, which we consider in this paper is given by the projected Euler-Maruyama method as follows: generate i.i.d. $V_0^i$, $i=1,\ldots,N$ sample vectors according to $\rho_0 \in \mathcal P(\mathbb S^{d-1})$ and iterate for $n=0,1,\dots$
\begin{eqnarray}
\tilde V^i_{n+1} &\gets& V^i_n +\Delta t \lambda P(V_n^i)V_n^{\alpha, \EE} + \sigma |V_n^i - V_n^{\alpha, \EE}| P(V_n^i)\Delta B_n^i \nonumber\\ 
&& \phantom{XXXXXXXXXXX} -\Delta t\frac{\sigma^2}{2}(V_n^i-V_n^{\alpha, \EE})^2(d-1)V_n^i, \label{Intro KViso num}\\
V^i_{n+1} &\gets& \tilde V^i_{n+1}/|\tilde V^i_{n+1}|, \quad i=1,\ldots,N, \nonumber
\end{eqnarray}
for 
\be
V_n^{\alpha, \EE} = \frac1{N_\alpha} \sum_{j=1}^N w_\alpha^{\EE}(V^j_n)V^j_n,\qquad N_\alpha = \sum_{j=1}^N w_\alpha^{\EE}(V^j_n),
\label{eq:valpha}
\ee
where $w_\alpha^{\EE}(V^j_n)=\exp(-\alpha\EE(V^j_n))$, and 
 $\Delta B_n^i$ are independent normal random vectors $N(0,\Delta t)$. Let us stress however that this is by no means the only possible discretization and we refer to, e.g., \cite{Platen}, for picking a favorite alternative scheme.}
 
 \subsection{Main result and sketch of its proof}

The main result of the present paper establishes the convergence of the discrete- and continuous-time dynamics to global minimizers of $\EE$ under mild smoothness conditions and local coercivity of the function around global minimizers. The analysis goes in two steps:\\
First of all, one needs to establish the large particle limit of the stochastic dynamics \eqref{stochastic Kuramoto-Vicsek}. This first step was already obtained in \cite{fhps20-1}, whose main results  are about the  well-posedness of \eqref{stochastic Kuramoto-Vicsek} and its rigorous mean-field limit - which is an open issue for unconstrained CBO \cite{carrillo2018analytical} -  to the following nonlocal, nonlinear Fokker-Planck equation
\begin{equation}\label{PDE}
\partial_t \rho_t= \lambda \nabla_{\BS^{d-1}} \cdot ((\langle v_{\alpha, \EE }(\rho_t), v \rangle v - v_{\alpha,\EE}(\rho_t) )\rho_t)+\frac{\sigma^2}{2}\Delta_{\BS^{d-1}} (|v-v_{\alpha,\EE}(\rho_t) |^2\rho_t),\quad t>0,~v\in\BS^{d-1}\,,
\end{equation}
with the initial datum $\rho_0\in\mc{P}(\BS^{d-1})$. Here $\rho_t=\rho(t,v)\in \mc{P}(\BS^{d-1})$ is a Borel probabilty measure on $\BS^{d-1}$ and
\[
v_{\alpha,\EE}(\rho_t)  = \frac{\int_{\BS^{d-1}} v \omega_\alpha^\EE(v)\,d\rho_t}{\int_{\BS^{d-1}} \omega_\alpha^\EE(v)\,d\rho_t}.
\]
The operators $\nabla_{\BS^{d-1}} \cdot$ and $\Delta_{\BS^{d-1}} $ denote the divergence and Laplace-Beltrami operator on the sphere $\BS^{d-1}$ respectively. The mean-field limit is achieved through the coupling method \cite{sznitman1991topics,fetecau2019propagation,huang2017error} by introducing the mean-filed dynamics 
satisfying
\begin{align} \label{monoparticle}
d\OV_t^i &= \lambda P(\OV_t^i)v_{\alpha,\EE}(\rho_t)dt + \sigma |\OV_t^i - v_{\alpha,\EE}(\rho_t)| P(\OV_t^i)dB_t^i-\frac{\sigma^2}{2}(\OV_t^i-v_{\alpha,\EE}(\rho_t))^2\frac{(d-1)\OV_t^i}{|\OV_t^i|^2}dt\,,
\end{align}
where $((\OV_t^i)_{t\geq 0 })_{i=1,\dots,N}$ are i.i.d. with common law $(\rho_t)_{t\geq 0}$ satisfying \eqref{PDE}. It  yields the following quantitative form of mean-field limit
\begin{equation}\label{rateN}
\sup_{t \in [0, T]} \sup_{i=1,\dots,N}\mathbb E \left[ |V_t^i-\OV_t^i|^2\right] \leq C N^{-1}, \quad N \to \infty,
\end{equation}
for any $T>0$ time horizon, see \cite[Theorem 3.1 and Remark 3.1]{fhps20-1}. The rate of convergence \eqref{rateN} is not affected by the curse of dimension and, for $\Gamma=\mathbb S^{d-1}$ the constant $C$ depends at most linearly in $d$ and, as a worst case analysis, exponentially in $\alpha$ and in $T$, see \cite[Remark 3.2 and Lemma 3.1]{fhps20-1} respectively.
Besides the well-posedness of \eqref{PDE} in the space of probability measures established in \cite[Section 2.3]{fhps20-1},  for more regular datum $\rho_0$, we prove additionally in the present paper existence and uniqueness of distributional solutions $\rho \in L^2([0,T],H^1(\mathbb S^{d-1}))$ at any finite time $T>0$, see Theorem \ref{thmregularity}. This auxiliary regularity results is needed 
in our convergence analysis.\\
The second step to establish global convergence, which is also carried out in the present paper, is about proving the large time asymptotics of the PDE solution $\rho_t(v)=\rho(t,v)$. In Theorem \ref{thm:mainresult} we show that, for any $\epsilon>0$ there exists suitable parameters $\alpha,\lambda,\sigma$ and well-prepared initial densities $\rho_0$ such that for $T^*>0$ large enough the expected value of the distribution
$E(\rho_{T^*})=\int v d \rho_{T^*}(v)$ is {near a global minimizers $v^*$ of $\EE$, i.e.,
\begin{equation}\label{largetime}
|E(\rho_{T^*}) - v^*| \leq C \epsilon.
\end{equation}}
   The convergence to $E(\rho_{T^*})$ is exponential in time and the rate depends on the parameters $\epsilon, \alpha,\lambda,\sigma$. {We summarize the main result as follows.
\begin{theorem}\label{mainresult00}
Assume $\EE \in C^2(\mathbb S^{d-1})$ and that for any $v \in \mathbb S^{d-1}$ 
		there exists  a minimizer $v^*\in \mathbb S^{d-1}$ of $\EE$ (which may depend on $v$) such that  it holds 
		\begin{equation}\label{coerc}
		|v-v^\ast| \leq  C_0|\EE(v)-\underline \EE|^\beta\,,
		\end{equation}
		where  $\beta, C_0$ are some positive constants and $\underline \EE:=\inf_{v \in \mathbb S^{d-1}}\EE(v)$. We also denote 
$\overline \EE:=\sup_{v \in \mathbb S^{d-1}}\EE(v)$, $C_{\alpha,\EE}=e^{\alpha (\overline \EE-\underline \EE)}$, and $C_{\sigma,d}=\frac{(d-1)\sigma^2}{2}$.
Additionally for any $\epsilon>0$ assume that the initial datum $\rho_0$ and parameters $\lambda, \sigma$ are {\it well-prepared} in the sense of Definition \ref{def:wellprep} for a time horizon $T^*>0$  and a parameter $\alpha^*>0$ large enough, depending on $C_0$ and $\beta$. Then the iterations $\{V_{n}^i:=V_{\Delta t, n}^i: n=0,\dots,n_{T^*}; i=1\dots N\}$ generated by a discrete-time approximation of \eqref{stochastic Kuramoto-Vicsek}  fulfill the following error estimate 
\begin{eqnarray}\label{mainresultXX}
 \mathbb E \left   [\left|\frac{1}{N} \sum_{i=1}^N V_{ n_{T^*}}^i - v^* \right|^2 \right ] 
&\leq&  \underbrace{C_1 (\Delta t)^{2m}}_{Discr.\, err.}+  \underbrace{C_2 N^{-1}}_{{
Mean-field\, lim.}} +  \underbrace{C_3 \epsilon^2}_{{Laplace\, princ.}} \,,
\end{eqnarray}
where $m$ is the order of approximation of the numerical scheme. (For the projected Euler-Maruyama scheme the order is $m=1/2$.) The constant $C_1$ depends linearly on the dimension $d$ and the number of particles $N$, and possibly exponentially on $T^*$ and the parameters $\lambda$ and $\sigma$;  the constant $C_2$ depends linearly on the dimension $d$,  polynomially in $C_{\alpha^*,\EE}$, and exponentially in $T^*$; the constant $C_3$ depends on $C_0$ and $\beta$. The convergence is exponential with rate
\begin{equation}\label{rateofconv}
\lambda\theta- 4C_{\alpha^*,\EE}C_{\sigma,d}>0,
\end{equation}
for a suitable $0<\theta<1$.
\end{theorem}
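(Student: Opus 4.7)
The plan is to decompose the total error via the triangle inequality into sources that correspond precisely to the three terms on the right-hand side of \eqref{mainresultXX}. Introducing the i.i.d.\ mean-field monoparticles $\overline V_t^i$ of \eqref{monoparticle} with common law $\rho_t$, one inserts the continuous-time particles $V_{T^*}^i$, their mean-field couplings $\overline V_{T^*}^i$, and the first moment $E(\rho_{T^*})$ to obtain
\begin{equation*}
\frac{1}{N}\sum_{i=1}^N V_{n_{T^*}}^i - v^*
= \Big(\tfrac{1}{N}\!\sum_{i=1}^N V_{n_{T^*}}^i - \tfrac{1}{N}\!\sum_{i=1}^N V_{T^*}^i\Big)
+ \Big(\tfrac{1}{N}\!\sum_{i=1}^N V_{T^*}^i - \tfrac{1}{N}\!\sum_{i=1}^N \overline V_{T^*}^i\Big)
+ \Big(\tfrac{1}{N}\!\sum_{i=1}^N \overline V_{T^*}^i - E(\rho_{T^*})\Big)
+ \big(E(\rho_{T^*}) - v^*\big),
\end{equation*}
so that, after squaring and taking expectations, one is left with four contributions to be controlled individually.

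The first contribution is the numerical discretisation error $\mathbb E|V_{n_{T^*}}^i - V_{T^*}^i|^2$, which for the projected Euler--Maruyama scheme can be estimated by classical strong-convergence results for SDEs with sufficiently regular coefficients; since the particles live on the compact sphere $\mathbb S^{d-1}$, the coefficients of \eqref{stochastic Kuramoto-Vicsek} are bounded and locally Lipschitz along trajectories, which yields the $(\Delta t)^{2m}$ bound with $m=1/2$ and a constant of type $C_1$. The second contribution is the particle-to-mean-field error, which is directly bounded by $C N^{-1}$ via the quantitative estimate \eqref{rateN} from \cite[Theorem 3.1]{fhps20-1}. The third (Monte-Carlo-type) contribution $\mathbb E\big|\tfrac{1}{N}\sum_i \overline V_{T^*}^i - E(\rho_{T^*})\big|^2$ is $O(N^{-1})$ by independence of the $\overline V_{T^*}^i$ and the sphere constraint $|\overline V_{T^*}^i|=1$, and is absorbed into the $C_2 N^{-1}$ term. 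In both the first and second bounds the constants inherit the exponential dependence on $T^*$ and at most polynomial dependence on $C_{\alpha^*,\EE}$ stated in the theorem.

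The heart of the argument is the fourth (Laplace-principle) term $|E(\rho_{T^*}) - v^*|$, which is handled through the large-time asymptotic analysis of the mean-field Fokker--Planck equation \eqref{PDE} provided by Theorem \ref{thm:mainresult}. The plan is to introduce the variance functional $V(\rho_t) := \tfrac{1}{2}\int_{\mathbb S^{d-1}} |v-E(\rho_t)|^2\, d\rho_t(v)$, differentiate it in time along \eqref{PDE} (the regularity Theorem \ref{thmregularity} legitimises the computation), and exploit the projection identities \eqref{orthproj} together with the drift towards $v_{\alpha,\EE}(\rho_t)$ to derive a Gr\"onwall-type differential inequality
\begin{equation*}
\frac{d}{dt} V(\rho_t) \leq -\big(2\lambda\theta - 8 C_{\alpha,\EE} C_{\sigma,d}\big) V(\rho_t)
\end{equation*}
for some $0<\theta<1$. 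Under the rate condition \eqref{rateofconv}, this gives exponential decay of $V(\rho_t)$. A quantitative version of the Laplace principle \eqref{Laplace} combined with the coercivity assumption \eqref{coerc} then converts smallness of $V(\rho_{T^*})$ into closeness of $E(\rho_{T^*})$ to an actual minimizer, yielding $|E(\rho_{T^*})-v^*|\leq C\epsilon$ once $\alpha=\alpha^*$ (depending on $C_0$ and $\beta$) and $T^*$ are taken large enough in the sense of Definition \ref{def:wellprep}.

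The main obstacle is to maintain this differential inequality uniformly on $[0,T^*]$. It demands that the weighted barycenter $v_{\alpha,\EE}(\rho_t)$ stay in a stable neighborhood of a global minimizer, so that the normalising denominator $\int_{\mathbb S^{d-1}} e^{-\alpha\EE(v)} d\rho_t(v)$ remains bounded away from zero and the drift contribution indeed dominates the diffusive $C_{\sigma,d}$ term rather than being overwhelmed by it. Translating \eqref{Laplace} into a quantitative bound coupling $\epsilon$, $\alpha^*$, $\lambda$, $\sigma$ and $\rho_0$ through the mass- and moment-control encoded in Definition \ref{def:wellprep} is the delicate step of the proof, and it is what ultimately determines how restrictive the well-preparation condition on $\rho_0$ must be.
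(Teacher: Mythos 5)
Your decomposition into discretization, mean-field, Monte-Carlo and Laplace-principle errors, with the last term handled by the variance-decay and Laplace-principle analysis of Theorem \ref{thm:mainresult}, is exactly the argument the paper gives in Section \ref{sec:mainproof}, so the approach is essentially identical. The only detail-level imprecisions are that the paper's differential inequality for $V(\rho_t)$ carries an extra additive term $\lambda C_T\delta^{\frac{d-2}{4}}$ coming from the hyperspherical-cap estimate (which is where the $L^2$-regularity of Theorem \ref{thmregularity} is actually needed, rather than merely to legitimise differentiating the variance), and that the post-projection step $V^i_{n+1}\gets \tilde V^i_{n+1}/|\tilde V^i_{n+1}|$ requires a separate order-$\Delta t$ estimate on top of the classical Euler--Maruyama convergence; neither changes the structure of the proof.
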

The detailed proof of this result is reported in Section \ref{sec:mainproof}. We provide here a sketch of it.
\begin{proof} {\bf (Sketch)}. 
We recall the definitions of  expectation and variance of $\rho_t$ as
\begin{equation*}
E(\rho_t):=\int_{\BS^{d-1}} v d \rho_t (v) \quad V(\rho_t):=\frac{1}{2}\int_{\BS^{d-1}} |v-E(\rho_t)|^2 d\rho_t (v).
\end{equation*}
By combining the coercivity condition \eqref{coerc} and the Laplace principle \eqref{Laplace} we show that
$$
\left|\frac{E(\rho_t)}{|E(\rho_t)|}- v^\ast\right|\leq C(C_0,\|\nabla \EE\|_\infty,\beta)\left((C_{\alpha,\EE})^{\beta}V(\rho_t)^{\frac{\beta}{2}}+\varepsilon^\beta\right).
$$
Hence, in order to prove the large time convergence to a global minimizer \eqref{largetime}, we may want to show that the variance is monotonically decreasing to zero with an exponential rate. An explicit computation leveraging the PDE \eqref{PDE} yields
\begin{align*}
\frac{d}{dt}V(\rho_t)&=-\lambda V(\rho_t)\la E(\rho_t),v_{\alpha,\EE}\ra -\frac{\lambda}{2}\frac{v_{\alpha, \EE}^2+1}{2}2V(\rho_t)+\frac{\lambda}{4}\int_{\BS^{d-1}} (E(\rho_t)-v)^2 (v-v_{\alpha, \EE})^2 d\rho_t\\
&\quad+C_{\sigma,d}\int_{\BS^{d-1}}(v-v_{\alpha, \EE})^{2}\la E(\rho_t),v \ra d\rho_t\\
&\leq -\lambda V(\rho_t)\left(\la E(\rho_t),v_{\alpha,\EE}\ra +\frac{v_{\alpha, \EE}^2+1}{2}\right)\\ &\quad+\frac{\lambda}{4}\int_{\BS^{d-1}} (E(\rho_t)-v)^2 (v-v_{\alpha, \EE})^2 d\rho_t+4C_{\alpha,\EE}C_{\sigma,d}V(\rho_t)\,.
\end{align*}
The idea is to balance all the terms on the right-hand side by using the parameters $\lambda,\sigma$ in such a way of obtaining a negative sign. Under assumptions of well-preparation, $V(\rho_t)$ is actually small enough for ensuring
$|E(\rho_t)|\approx \la E,v_{\alpha,\EE}\ra \approx |v_{\alpha, \EE}| \approx 1$, and, thanks to Theorem \ref{thmregularity}, for any $\delta>0$ arbitrarily small 
$$
\frac{\lambda}{4}\int_{\BS^{d-1}} (E(\rho_t)-v)^2 (v-v_{\alpha, \EE})^2 d\rho_t \leq \gamma \lambda V(\rho_t) + \delta,
$$
for a suitable $0<\gamma<2$, $\gamma$ depending on $\delta$. Hence, 
$$V(\rho_t) \leq V(\rho_0)e^{-(\lambda \theta - 4C_{\alpha,\EE}C_{\sigma,d})t} + \delta$$
for $\theta\approx 2-\gamma$, and one concludes the convergence  in finite time $T^*$ as in \eqref{largetime}, under the given error threshold $\epsilon>0$.
By combining now classical results of convergence of numerical approximations\footnote{In this paper we consider numerical approximations by Euler-Maruyama scheme, which converges strongly with order $m=1/2$ \cite{Platen}, see Algorithm \ref{algo:sKV-CBO} in Section 2.2.}  $({V}_{\Delta t,n}^i)_{i=1,\dots,N}$ \cite{Platen}
with the the mean-field approximation \eqref{rateN} \cite{fhps20-1} and the large time behavior \eqref{largetime}, which is proven in detail in Theorem \ref{thm:mainresult} below, we obtain that the expected large time outcome of the numerical approximation to \eqref{stochastic Kuramoto-Vicsek} is about a global minimizer of $\EE$
\begin{align}\label{mainresult}
&\mathbb E \left   [\left|\frac{1}{N} \sum_{i=1}^N V_{\Delta t, n_{T^*}}^i - v^* \right|^2 \right ] \nonumber \\
\lesssim& \mathbb E \left   [\left|\frac{1}{N} \sum_{i=1}^N( V_{\Delta t,n_{ T^*}}^i -  V_{T^*}^i) \right|^2 \right ]  +  \mathbb E \left   [\left|\frac{1}{N} \sum_{i=1}^N (V_{T^*}^i -  \OV_{T^*}^i) \right|^2 \right ]  \notag\\
&+  \mathbb E \left   [\left|\frac{1}{N} \sum_{i=1}^N \OV_{T^*}^i - E(\rho_{T^*})  \right|^2 \right ] + |E(\rho_{T^*}) - v^*|^2 \nonumber  \\
\lesssim& (\Delta t)^{2m} + N^{-1} + \epsilon^2, 
\end{align}
where $m$ is the order of strong convergence of the numerical method. 
\end{proof}
}
{

\subsection{Discussion}
Some comments about the result and its proof are in order.
First of all, we stress that Theorem \ref{mainresult00} is the first and so far the unique complete result of convergence of consensus-based optimizations in the literature.
In fact, the results in \cite{carrillo2018analytical,carrillo2019consensus} are exclusively addressing the large time behavior of the mean-field PDE, because, for consensus-based optimization in the Euclidean space, a mean-field approximation of the type \eqref{rateN} has not been established yet for unbounded $\EE$. Similarly, the convergence proof of the purely numerical scheme in \cite{ha2019convergence} establishes convergence in $\mbox{ess} \inf_{\omega}$ over all possible realizations $\omega$ (significantly weaker than \eqref{mainresultXX}). In particular it
does not provide a rate of convergence in terms of number $N$ of particles. Moreover the result is established under the
simplified assumption that the noise is equal for all particles.} \\
Our proof strategy described above made of a numerical approximation, mean-field limit, and asymptotic analysis parallels a similar approach by Montanari et al. \cite{MeiE7665,javanmard2019analysis} for proving the convergence of stochastic gradient descent to global minimizers in the  training of two-layer neural networks.\\
{The initial datum $\rho_0$ has to be interpreted as the uncertainty on the location of a global minimizer.} The condition of Definition \ref{def:wellprep} of well-preparation of $\rho_0$ may have a locality flavour, i.e., they  essentially require that $\rho_0$ has small variance and simultaneously it not centered too far from a global minimizers $v^*$ of $\EE$. However, in the case the function $\EE$ is symmetric, i.e., $\EE(v)= \EE(-v)$ (as it happens in numerous applications, in particular the ones we present in this paper) and {$C_0>0$ is relatively large for $\beta\geq1$}, then the condition is generically/practically satisfied at least for one of the two global minimizers $\pm v^*$. The convergence result is based on proving the monotone decay of the variance $V(\rho_t)=\int |v- E(\rho_t)|^2 d\rho_t(v)$, see Proposition \ref{mainp}, and this cannot be achieved unless the initial condition is well-prepared. 
In fact, for a non-symmetric function $\EE$, a given unique global minimizer $v^*$, and for a datum $\rho_0$ fully concentrated around the opposite vector $-v^*$, i.e.,  on the other side of the sphere, the variance may start small, but it must grow well before getting small again. Hence, it is not possible for arbitrary $\EE$ and initial datum to have monotone decay of the variance, and we conjecture that the result can be further improved to obtain even more generic initial conditions, but one needs to use a different proving technique.\\
Let us now discuss  the interplay between the different approximations and the constants appearing in \eqref{mainresultXX}. While the constants $C_1,C_2$ in \eqref{mainresultXX} depend explicitly only linearly on the dimension $d$, the constants $C_2$ may depend polynomially on $C_{\alpha^*,\EE}=e^{\alpha^* (\overline \EE-\underline \EE)}$, hence, exponentially in $\alpha^*$. This exponential dependence stems from the worst case analysis due to \cite[Lemma 3.1]{fhps20-1}. Moreover, at this level of generality it is difficult to establish how $\alpha^*$ depends on $d$ as such dependence is strongly affected by the particular objective function $\EE$ and $\rho_0$: to clarify the predicament, in the extreme case where $\rho_0 \approx \delta_{v^*}$ it is
$$\left(-\frac{1}{\alpha}\log\left(\int_{\RR^d}e^{-\alpha\EE(v)}d\rho_0(v)\right)\right)\approx \underline \EE,
$$
independently of $\alpha$ (also for $\alpha$ very small!). Also one does not expect a strong dependence of $\alpha^*$ on $d$ for the case where $C_0>0$ is large for $\beta \geq 1$ with a symmetric behavior of $\EE$ around $v^*$.  Instead, in the worst case scenario we may need $\alpha^* \geq d$ and our estimates may simply reflect the fact that the optimization problem at hand is NP-hard or intrinsically affected by the curse of dimensionality. However, this is by no means the typical situation, as in our numerical experiments in Section \ref{sec:robsub} we show that the method scales well with the dimension also for problems with $d\approx 3000$.
Due to the worst case analysis and the use of Gronwall's inequalities in the literature, constants $C_1,C_2$ may depend also exponentially on $T^*$; however $T^*$ is fixed at the beginning and the initial datum and parameters are assumed to be well-prepared so that the algorithm reaches precisely at the time $T^*$ the expected accuracy. Hence, $T^*$ does not need to be very large if we assume that our initial datum $\rho_0$ offers already a reasonable confidence on the location of a global minimizer. In particular the rate of convergence $\lambda\theta- 4C_{\alpha,\EE}C_{\sigma,d}$ is completely determined by the choices of $\lambda$ and $\sigma$.
The choice of $\lambda>0$ large necessarily implies the discretization parameter $\Delta t$ small, as $C_1$ depends by the worst case analysis and the use of Gronwall's inequalities exponentially in $\lambda$.
Recent work \cite{carrillo2018analytical} on unconstrained consensus-based optimization with {\it anisotropic} noise suggests the possibility of having parameters completely independent of the dimension.

\begin{figure}[tb]
	\includegraphics[scale=0.4]{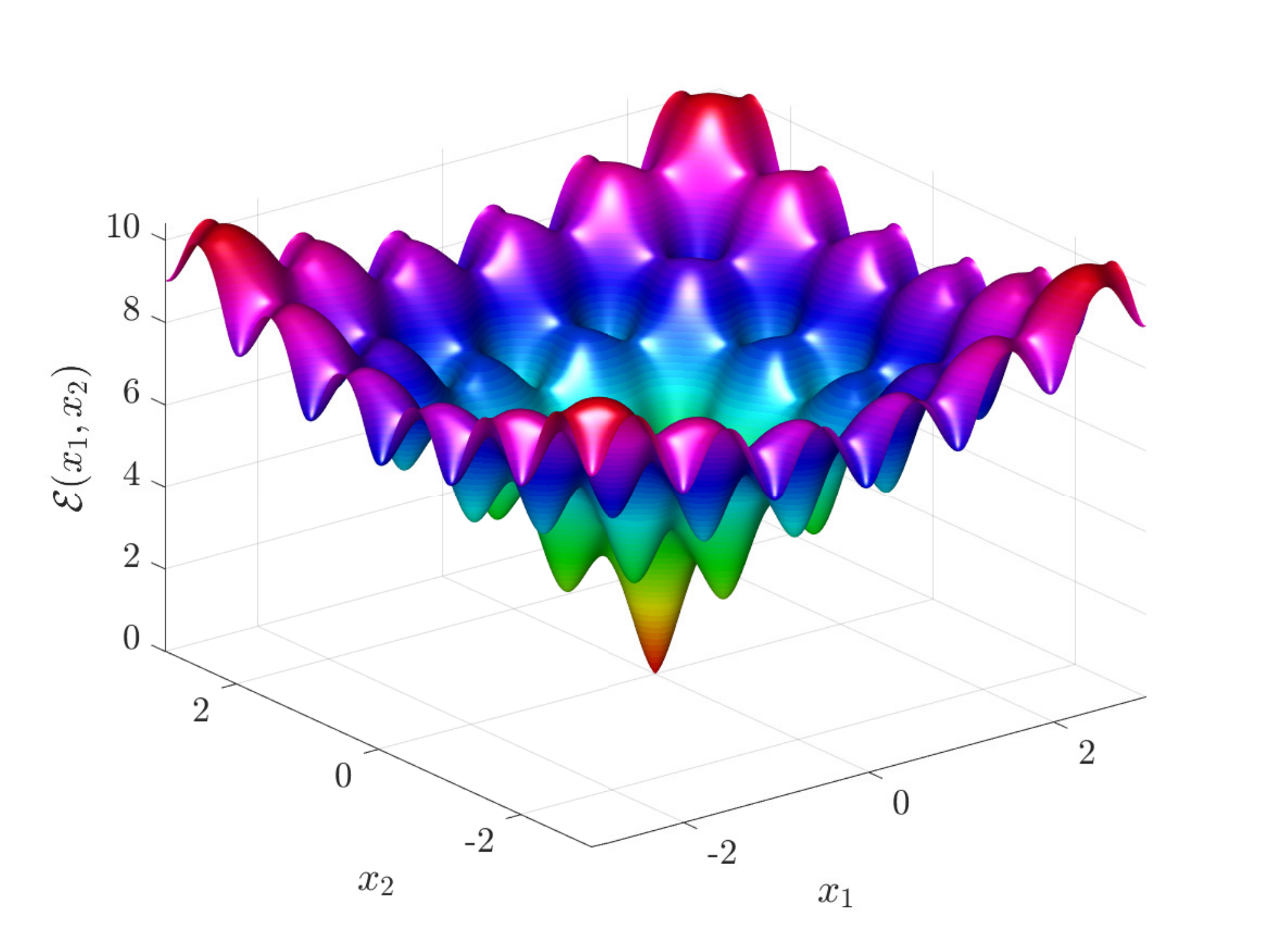}\,
	\includegraphics[scale=0.4]{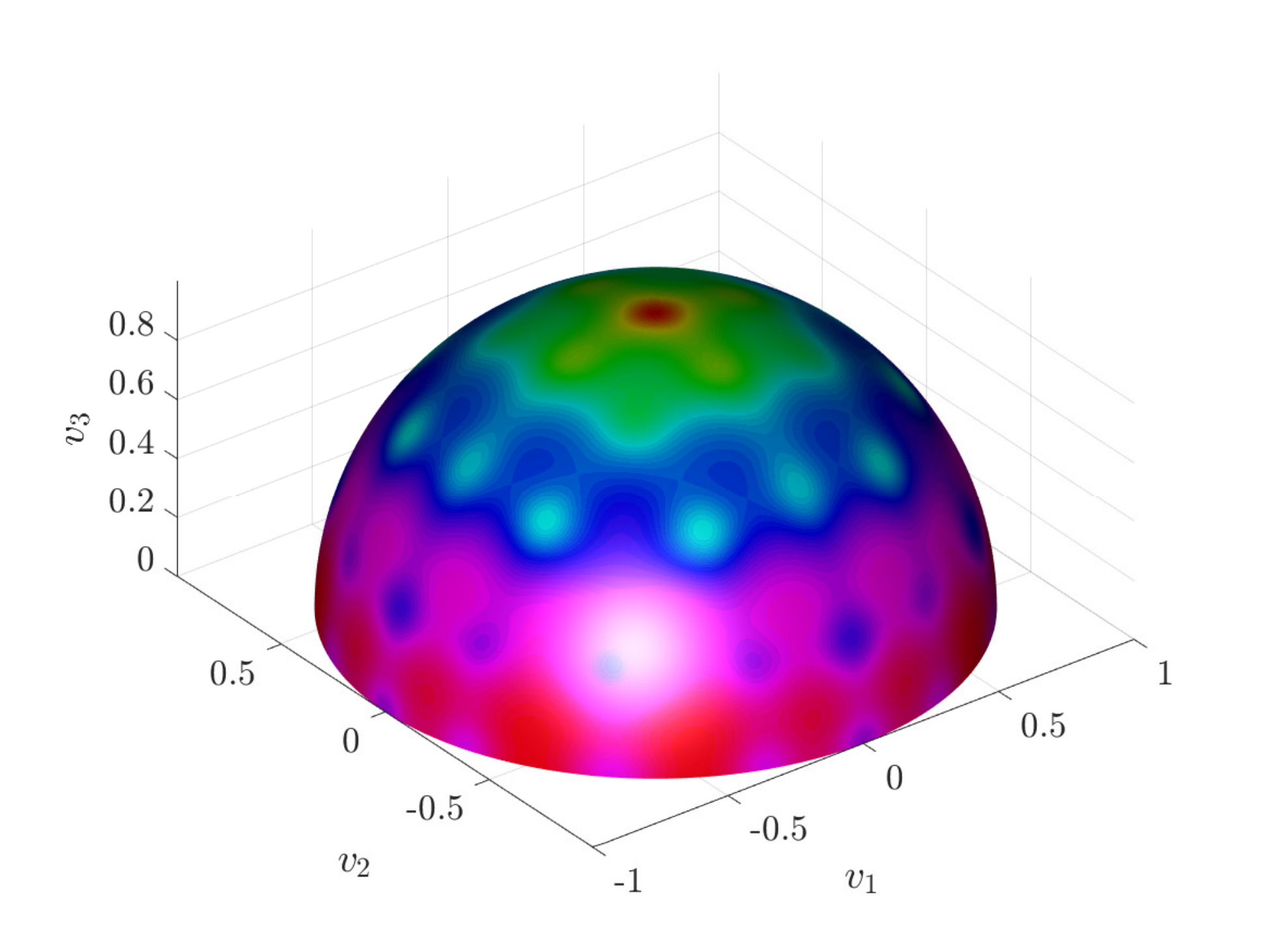}\,
	\caption{The Ackley function for $d=2$ on $[-3,3]^2$ and its representation for $d=3$ in the constrained case over the half sphere $\BS^2$ (right). The global minimum corresponds to the direction $v^*=(0,0,1)^T$.}
	\label{fg:ackley}
\end{figure}

\subsection{Organization of the paper}

The rest of the paper is organized as follows: in Section \ref{numsec} we present and explain right away the numerical implementation, Algorithm \ref{algo:sKV-CBO}, of the  stochastic Kuramoto-Vicsek (sKV) system \eqref{stochastic Kuramoto-Vicsek}. We further propose a few relevant speed-ups, which will be implemented in Algorithm \ref{algo:sKV-CBOfc}. As a warm up, we illustrate the behavior of the algorithms on the synthetic example of the Ackley function over the sphere (see Figure \ref{fg:ackley}) in dimension $d=3$. In the second part of this section, we present applications in signal processing and machine learning, namely the phase retrieval problem and the robust subspace detection and we provide comparisons with state of the art methods. For the robust subspace detection we test the algorithm also in dimension $d\approx 3000$ on the {\it Adult Faces Database} \cite{10kUSAdultFaces} for the  computation of eigenfaces.
These experiments show that the algorithm scales well with the dimension and is extremely versatile (one just needs to modify the definition of the function $\EE$ and the rest  goes with the same code!). The algorithm is able to perform essentially as good as {\it ad hoc} state of the art methods and in some instances it obtains quantitatively better results. 
For the sake of  reproducible research, in the repository {\em https://github.com/PhilippeSu/KV-CBO} we provide the Matlab code, which implements the algorithms on the test cases of this paper. 
In Section \ref{analsec} we provide the analysis of global optimization guarantees, which yield the main error estimate \eqref{mainresultXX}. In Section \ref{sec:aux} we collect proofs of a few auxiliary results.

\section{Numerical Implementation and Tests}\label{numsec}

In this section we report several tests and examples of application of the consensus based optimization (CBO) method based on the stochastic Kuramoto-Vicsek (sKV) system. First, we discuss fast first order discretization methods for the stochastic system, which preserve the dynamics on the multi-dimensional sphere. Implementation aspects and speed-ups are also analyzed. In particular, we derive fast algorithms, which permit to obtain an exponentially diminishing computational cost in time. Next, we test the method and its sensitivity to the choice of the computational parameters with respect to some well-known prototype test functions in high dimensions. Real-life applications are also provided to sustain  the versatility and scalability of the method.

\subsection{Discretization of the sKV system}

We discuss the discretization of the sKV system in It\^{o}'s form
\begin{align} 
dV_t^i &= \lambda P(V_t^i)V_t^{\alpha, \EE} dt + \sigma |V_t^i - V_t^{\alpha, \EE}| P(V_t^i)dB_t^i-\frac{\sigma^2}{2}(V_t^i-V_t^{\alpha, \EE})^2\frac{(d-1)V_t^i}{|V_t^i|^2}dt\,,
\label{eq:SKVI}
\end{align}
with $V{_t^i }\in\BS^{d-1}$, $i=1,\ldots,N$, and 
$$
V_t^{\alpha, \EE}=\sum_{j=1}^{N}\frac{V_t^j\omega_\alpha^\EE(V_t^j)}{\sum_{i=1}^{N}\omega_\alpha^\EE(V_t^i)}=v_{\alpha,\EE}(\rho_t^N). 
$$
First let us remark that for $d=2$ the problem is considerably simpler since the passage to spherical coordinates permits an easy integration of the system by preserving its geometrical nature of motion on $\BS^1$. However, for arbitrary dimensions this is more complicated and we must integrate the stochastic system in the vector form  \eqref{eq:SKVI}. We refer to \cite{Platen} for an introduction to numerical methods for SDEs and to \cite{HLW} for deterministic time discretizations, which preserve some geometrical properties of the solution. 

Let us denote $|V|=\|V\|_2=\langle V, V\rangle^{\frac12}$ the Euclidean norm. A simple geometrical argument allows to prove the following observation:
\begin{lemma}\label{lemnorm}
	Let us consider a one step time discretization of \eqref{eq:SKVI} in the general form
	\be
	V_{n+1}^i=V_{n}^i+\Phi(\Delta t, V^i_n,V^i_{n+1},\xi^i_n)
	\label{eq:gen}
	\ee
	where the function $\Phi(\Delta t,\cdot,\cdot,\xi^i_n):\RR^{2d}\to\RR^{d}$ defines the method, $\Delta t>0$ is the time step, $V_n^i \approx V^i_t|_{t=t^n}$, $t^n=n\Delta t$ and $\xi^i_n$ are independent random variables.
	
	Then 
	\be
	| V^i_{n+1} |^2= |V^i_n|^2 
	\label{eq:npres}
	\ee
	if and only if 
	\be
	\langle \Phi(\Delta t, V^i_n,V^i_{n+1},\xi^i_n), V^i_{n+1}+V^i_n \rangle=0.
	\ee
\end{lemma}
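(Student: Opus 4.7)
The plan is essentially a one-line algebraic manipulation using the polarization identity, so the proof proposal is short and the main work is simply to identify the correct factorization.

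First, I would rewrite the difference of squared norms using the identity
\[
|V_{n+1}^i|^2 - |V_n^i|^2 = \langle V_{n+1}^i - V_n^i,\; V_{n+1}^i + V_n^i \rangle,
\]
which is valid in any inner-product space since the right-hand side expands via bilinearity to $|V_{n+1}^i|^2 - |V_n^i|^2$ (the cross terms $\langle V_{n+1}^i, V_n^i\rangle$ and $\langle V_n^i, V_{n+1}^i\rangle$ cancel by symmetry of the real inner product).

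Next, I would substitute the defining relation \eqref{eq:gen}, namely $V_{n+1}^i - V_n^i = \Phi(\Delta t, V_n^i, V_{n+1}^i, \xi_n^i)$, into the first slot of the inner product, yielding
\[
|V_{n+1}^i|^2 - |V_n^i|^2 = \langle \Phi(\Delta t, V_n^i, V_{n+1}^i, \xi_n^i),\; V_{n+1}^i + V_n^i \rangle.
\]
The equivalence \eqref{eq:npres} $\Leftrightarrow$ $\langle \Phi, V_{n+1}^i + V_n^i\rangle = 0$ is then immediate: the left-hand side vanishes if and only if the right-hand side does.

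There is really no obstacle here; the lemma is a purely algebraic observation and requires no probabilistic or geometric arguments beyond the definition of $|\cdot|$ as the norm induced by the Euclidean inner product. The content of the statement is that any one-step scheme on the sphere preserves the norm exactly when its increment is orthogonal to the midpoint direction $V_{n+1}^i + V_n^i$, a condition that will be useful in the sequel to design norm-preserving projected schemes.
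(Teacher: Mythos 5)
Your proof is correct and follows essentially the same route as the paper: the paper expands $\langle V^i_{n+1},V^i_{n+1}\rangle=\langle V^i_n+\Phi,V^i_n+\Phi\rangle$ and regroups to obtain $\langle\Phi,V^i_{n+1}+V^i_n\rangle=0$, which is just your factorization $|V^i_{n+1}|^2-|V^i_n|^2=\langle V^i_{n+1}-V^i_n,\,V^i_{n+1}+V^i_n\rangle$ read in a different order. Your phrasing makes the "if and only if" slightly more transparent, but there is no substantive difference.
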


This shows that $\Phi(\Delta t, V^i_n,V^i_{n+1},\xi^i_n)$ must be orthogonal to $V^i_{n+1}+V^i_n$ in order to preserve the norm and, consequently, to obtain one step methods satisfying \eqref{eq:npres} we have to resort to implicit methods.

For example, it is immediate to verify that the Euler-Maruyama method 
\be
V^i_{n+1} = V^i_n +\Delta t\lambda P(V_n^i)V_n^{\alpha, \EE} dt + \sigma |V_n^i - V_n^{\alpha, \EE}| P(V_n^i)\Delta B_n^i-\Delta t\frac{\sigma^2}{2}(V_n^i-V_n^{\alpha, \EE})^2\frac{(d-1)V_n^i}{|V_n^i|^2}\,,
\label{eq:EM}
\ee
where $\Delta B_n^i=B^i_{t^{n+1}}-B^i_{t^{n}}$ are independent normal random variables $N(0,\Delta t)$ with mean zero and variance $\Delta t$, is not invariant with respect to the norm of $V^i_n$.

A method that preserves the norm is obtained by modifying the Euler-Maryuama method as follows
\[
V^i_{n+1} = V^i_n +\Delta t\lambda P(V_{n+\frac12}^i)(V_n^{\alpha, \EE} - V^i_n) dt + \sigma |V_n^i - V_n^{\alpha, \EE}| P(V_{n+\frac12}^i)\Delta B_n^i-\Delta t\frac{\sigma^2}{2}(V_n^i-V_n^{\alpha, \EE})^2\frac{(d-1)V_{n+\frac12}^i}{|V_{n+\frac12}^i|^2}\,,
\label{eq:MP1}
\]
where $V^i_{n+\frac12}={V^i_{n+1}+V^i_n}$ and, for consistency, we have the term $-V^i_n$ in the alignment process since now $P(V^i_{n+\frac12})V^i_n \neq 0$.
By similar arguments, we can construct implicit methods of weak order higher than one which preserve the norm of the solution.  

Implicit methods, however, due to the nonlinearity of the projection operator $P(\cdot)$ require the inversion of a large nonlinear system. This represents a serious drawback for our purposes, where efficiency of the numerical solver is fundamental. 

In order to promote efficiency, we consider instead explicit one-step methods that preserve the geometric properties  by 
adopting a projection method at each time step for the iterations to stay on the sphere \cite{HLW}. This corresponds to solve the stochastic differential problem under the algebraic constraint to preserve the norm.

Since we are on the unit hypersphere, we simply divide the numerical approximation by its Euclidean norm to get a vector of length one. This class of schemes has the general form
\be
\left\lbrace
\begin{aligned}
	\widetilde V^i_{n+1}&=V^i_{n}+\Phi(\Delta t, V^i_n,\widetilde V^i_{n+1},\xi^i_n),\\
	V^i_{n+1}&=\frac{\widetilde V^i_{n+1}}{|\widetilde V^i_{n+1}|}.
	\label{eq:gen2}
\end{aligned}
\right.
\ee
We keep the dependence from $\widetilde V^{n+1}$ on the right hand side to include semi-implicit methods with better stability properties then the Euler-Maruyama scheme. One example is obtained by the following integration scheme
\[
\widetilde V^i_{n+1}=V^i_{n} + \Delta t \lambda P(V_n^i)V_n^{\alpha, \EE} + \sigma |V_n^i - V_n^{\alpha, \EE}| P(V_n^i)\Delta B_n^i-\Delta t \frac{\sigma^2}{2}(V_n^i-V_n^{\alpha, \EE})^2(d-1)\widetilde V_{n+1}^i
\] 
which can be written explicitly as
\be
\widetilde V^i_{n+1}=\frac{1}{1+\Delta t \frac{\sigma^2}{2}(V_n^i-V_n^{\alpha, \EE})^2(d-1)}\left(V^i_{n} + \Delta t \lambda P(V_n^i)V_n^{\alpha, \EE} + \sigma |V_n^i - V_n^{\alpha, \EE}| P(V_n^i)\Delta B_n^i\right).
\label{eq:SI}
\ee
In our experiments, since efficiency of the numerical solver is of paramount importance, we rely on projection methods of the type \eqref{eq:gen2} based on the simple Euler-Maruyama scheme \eqref{eq:EM} or the semi-implicit scheme \eqref{eq:SI}. 
\begin{remark}
	Another popular approach is based on simulating the two fundamental processes characterizing the dynamics by a splitting method on the time interval $[n\Delta t,(n+1)\Delta t]$ 
	\be
	\left\lbrace
	\begin{aligned} 
		d \tilde V_t^i &= \lambda P(\tilde V_t^i)\tilde V_t^{\alpha, \EE} dt\,,\qquad \tilde V_0^i = V_t^i|_{t=n\Delta t},\\
		dV_t^i &= \sigma |V_t^i - V_t^{\alpha, \EE}| P(V_t^i)dB_t^i-\frac{\sigma^2}{2}(V_t^i-V_t^{\alpha, \EE})^2\frac{(d-1)V_t^i}{|V_t^i|^2}dt\,,\qquad V_0^i = \tilde V_t^i|_{t=(n+1)\Delta t},
		\label{eq:EMs}
	\end{aligned}
	\right.
	\ee

	where the first step is a standard alignment dynamics over the hypersphere and the second step corresponds to solve a Brownian motion with variance $\sigma^2(V_t^i-V_t^{\alpha, \EE})^2$ on the unit hypersphere. Typically, the approximated value of $V_t^{\alpha, \EE}$ is kept constant in a splitting time step to avoid computing it twice and increasing the computational cost. This approach would allow to solve the first step using standard structure preserving ODEs approaches \cite{HLW} and to use specific simulation methods for the Brownian motion over the hypersphere in the second step \cite{GPB, MMB}. We will leave to further study the possibility to apply methods in the splitting form \eqref{eq:EMs}.
\end{remark}

\subsection{Implementation aspects and generalizations}

First let us point out that the set of three computational parameters, $\Delta t$, $\sigma$ and $\lambda$, defining the discretization scheme can be reduced since we can rescale the time by setting
\[
\tau=\lambda \Delta t ,\qquad \nu^2 = \frac{\sigma^2}{\lambda},  
\]
to obtain a scheme which depends only on two parameters $\tau$ and $\nu$. In practice, we can simply assume $\lambda=1$ and keep the original notations. Starting from a set of computational parameters and a given objective function ${\EE}(\cdot)$ defined on $\BS^{d-1}$, the simplest KV-CBO method is described in Algorithm \ref{algo:sKV-CBO}.

\begin{algorithm}[h]
	\KwIn{$\Delta t$, $\sigma$, $\alpha$, $d$, $N$, $n_T$ and the function ${\EE}(\cdot)$}
	Generate $V_0^i$, $i=1,\ldots,N$ sample vectors uniformly on $\BS^{d-1}$; 
	
	\For{$n=0$ {\bf to} $n_T$}{
		Generate $\Delta B_n^i$ independent normal random vectors $N(0,\Delta t)$;
		
		Compute $V_n^{\alpha, \EE}$;
		
		$\tilde V^i_{n+1} \gets V^i_n +\Delta t P(V_n^i)V_n^{\alpha, \EE} + \sigma |V_n^i - V_n^{\alpha, \EE}| P(V_n^i)\Delta B_n^i-\displaystyle\Delta t\frac{\sigma^2}{2}(V_n^i-V_n^{\alpha, \EE})^2(d-1)V_n^i$,
		$V^i_{n+1} \gets \tilde V^i_{n+1}/|\tilde V^i_{n+1}|$, $i=1,\ldots,N$;
	}
	\caption{KV-CBO}
	\label{algo:sKV-CBO}
\end{algorithm}

{The approximation order of the projected Euler-Maruyama method as in KV-CBO is $m=1/2$. In fact, the only difference with respect to the classical Euler-Maruyama method is the post-projection onto the sphere $V^i_{n+1} \gets \tilde V^i_{n+1}/|\tilde V^i_{n+1}|$. We show in the proof of Theorem \ref{mainresult00} in Section \ref{sec:mainproof} that this may introduce an error of at most order $\Delta t$, preserving the order of convergence.}

{Let us now discuss briefly about the complexity of the scheme as optimization method in order to place the discussion in the correct frame. It is of utmost importance to stress once again that the method is of $0$-order, i.e., it is derivative-free. Hence, for those problems for which computing derivatives of the objective function is an unfeasible task, either because of complexity or because of non-differentiability, the KV-CBO is necessarily superior in terms of complexity than  first  order methods such as gradient descent or second order methods such as Newton method.
Note, in particular, that the computational cost for a single time step of KV-CBO is ${\mathcal O}(N)$, the minimum cost to evolve a system of $N$ particles since $V_n^{\alpha, \EE}$ is the same for all agents. Let us however mention that the KV-CBO is highly and very easily parallelizable and therefore, on a multi-processor parallel machine, the method can be easily reduced to complexity $\mathcal O(N/\mathcal P)$, where $\mathcal P$ is the number of processors.} The algorithm may be complemented with a suitable stopping criterion, for example checking consensus using the quantity
\be
\frac1{N}\sum_{i=1}^N |V^i_n-V_n^{\alpha,\EE}| \leq \varepsilon, 
\label{eq:cons}
\ee
or checking, as in \cite{carrillo2019consensus}, for $p\geq 0$  that
\be
| V_{n+1}^{\alpha,\EE}-V_{n-p}^{\alpha,\EE}| \leq \varepsilon,
\ee
for a given tolerance $\varepsilon$. In point 5 of Algorithm \ref{algo:sKV-CBO} we used the Euler-Maruyama discretization \eqref{eq:EM}, similarly one could use the semi-implicit method \eqref{eq:SI}. The computational parameters $\Delta t$, $\sigma$ and $\alpha$ can in practice be adaptively modified from step to step to improve the performance of the method.
In the sequel we analyze in more detail some computational aspects and speed ups related to Algorithm \ref{algo:sKV-CBO}. 

\subsubsection*{Sampling over $\BS^{d-1}$}

First let us discuss point $1$ of algorithm \ref{algo:sKV-CBO}, namely how to generate points uniformly over the $d$-dimensional sphere. 
Despite the fact that our theoretical results would suggest to use a more concentrated measure $\rho_0$ to generate the initial points, see Definition \ref{def:wellprep}, the uniform distribution is likely the simplest to be realized and it does certainly not induce initial bias towards any direction. 
Even though many methods have been designed for low dimension $d \leq 3$, very few of them can be extended to large dimensions. Therefore, the one that is often used for a $d$-dimensional sphere is the method of normalized Gaussians first proposed by Muller and later by Marsaglia \cite{Muller, Marsaglia}. The method is extremely simple, and exploits the non-obvious relationship between a uniform distribution on the sphere and the normal distribution. More precisely, to pick a random point on a $d$-dimensional sphere one first generates $d$ standard normal random variables $\xi_1, \xi_2, \ldots, \xi_d \sim N(0,1)$, then the distribution of the vectors of components
\be
v_k=\frac{\xi_k}{\sqrt{\xi_1^2+\ldots+\xi_d^2}}, \qquad k=1,\ldots,d
\ee
coincides with the uniform one over the hypersphere $\BS^{d-1}$.

\subsubsection*{Evaluation of $V_n^{\alpha,\EE}$}

Let us observe that the computation of $V_n^{\alpha, \EE}$, points 2 and 6 of Algorithm \ref{algo:sKV-CBO}, is crucial and that a straightforward evaluation using
\be
V_n^{\alpha, \EE} = \frac1{N_\alpha} \sum_{j=1}^N w_\alpha^{\EE}(V^j_n)V^j_n,\qquad N_\alpha = \sum_{j=1}^N w_\alpha^{\EE}(V^j_n),
\label{eq:valpha}
\ee
where $w_\alpha^{\EE}(V^j_n)=\exp(-\alpha\EE(V^j_n))$, is generally numerically unstable since for large values of $\alpha \gg 1$ the value of $N_\alpha$ is close to zero. On the other hand, the use of large values of $\alpha$ is essential for the performance of the method. A practical way to overcome this issue is based on the following numerical trick
\begin{eqnarray*}
	\frac{w_\alpha^{\EE}(V^j_n)}{N_{\alpha}} &=& \frac{\exp(-\alpha\EE(V^j_n))}{\sum_{j=1}^N \exp(-\alpha\EE(V^j_n))}\cdot \frac{\exp(\alpha\EE(V_n^*))}{\exp(\alpha\EE(V_n^*))}\\
	&=& \frac{\exp(-\alpha(\EE(V_n^j)-\EE(V_n^*)))}{\sum_{j=1}^N \exp (-\alpha(\EE(V_n^j)-\EE(V_n^*)))}
\end{eqnarray*}
where
\be
V_n^{*} := \argmin_{V\in\{V_n^i\}^N_{i=1}} \EE(V)
\label{eq:min}
\ee
is the location of the particle with the minimal function value in the current population. This ensures that for at least one particle $V^j_n=V_n^{*}$ , we
have $\EE(V_n^j)-\EE(V_n^*) = 0$ and therefore, $\exp(-\alpha(\EE(V_n^j)-\EE(V_n^*)))=1$. For the sum this
leads to $\sum_{j=1}^N \exp (-\alpha(\EE(V_n^j)-\EE(V_n^*)))\geq 1$, so that the division does not induce a
numerical problem. In the numerical simulations we will always compute the weights by the above strategy. Note that, the evaluation of \eqref{eq:min} has linear cost, and does not affect the overall cost. The computation of $V_n^{\alpha, \EE}$ may be accelerated by using the random approach presented in \cite{AlPa} (see Algorithm 4.7). Namely, by considering a random subset $J_M$ of size $M < N$ of the indexes $\{1,\ldots,N\}$ and computing
\be
V_n^{\alpha, \EE,J_M} = \frac1{N^{J_M}_\alpha} \sum_{j\in J_M} w_\alpha^{\EE}(V^j_n)V^j_n,\qquad N^{J_M}_\alpha = \sum_{j\in J_M} w_\alpha^{\EE}(V^j_n).  
\label{eq:valphaM}
\ee
Similarly, we will stabilize the above computation by centering it to
\be
V_n^{J_M,*} := \argmin_{V\in\{V_n^j\}_{j\in J_M}} \EE(V).
\label{eq:min2}
\ee
The random subset is typically chosen at each time step in the simulation.  

\begin{remark}
	As a further randomization variant, at each time step, we may partition particles into disjoint subsets $J^k_M$, $k=1,\ldots,S$ of size $M$ such that $SM=N$ and compute the evolution of each batch separately (see \cite{JLJ, carrillo2019consensus} for more details). Since the computational cost of the CBO method is linear, unlike \cite{AlPa, JLJ, HVP} these randomization techniques can accelerate the simulation process (and eventually improve the particles exploration dynamic thanks to additional stochasticity), but do not reduce the overall asymptotic cost ${\mathcal O}(N)$. 
\end{remark}

\subsubsection*{Fast method}

Using a constant number of particles is not the most efficient way to simulate the trend towards equilibrium of a system, typically because we can use some (deterministic) information on the steady state to speed up the method. In the case of CBO methods, asymptotically the variance of the system tends to vanish because of the consensus dynamics, see Proposition \ref{mainp}. So, we may accelerate the simulation by discarding particles in time accordingly to the variance of the system \cite{AlPa}. This also influences the computation of $V_n^{\alpha,\EE}$ by increasing the randomness and reducing the possibilities to get trapped in a local minimum. For a set of $N_n$ particles we define the empirical variance at time $t^n = n\Delta t$ as
\[
\Sigma_n = \frac1{N_n}\sum_{j=1}^{N_{n}} (V_n^j-\bar{V}_n)^2,\qquad \bar{V}_n = \frac1{N_{n}}\sum_{j=1}^{N_n} V^j_n.
\]
When the trend to consensus is monotone, that is $\Sigma_{n+1} \leq \Sigma_n$, we can discard particles uniformly in the next time step $t^{n+1}=(n+1)\Delta t$ accordingly to the ratio $\Sigma_{n+1}/\Sigma_n \leq 1$, without affecting their theoretical distribution. One way to realize this is to define the new number of particles as 
\be
N_{n+1}=\left[\!\!\left[N_n \left(1+\mu\left(\frac{\widehat{\Sigma}_{n+1}-\Sigma_{n}}{\Sigma_{n}}\right)\right)\right]\!\!\right]
\ee
where $[\![\,\cdot\,]\!]$ denotes the integer part, $\mu\in [0,1]$ and
\[
\widehat{\Sigma}_{n+1} = \frac1{N_n}\sum_{j=1}^{N_{n}} (V_{n+1}^j-\widehat{{V}}_{n+1})^2,\qquad \widehat{{V}}_{n+1} = \frac1{N_{n}}\sum_{j=1}^{N_n} V^j_{n+1}.
\]
For $\mu=0$ we have the standard algorithm where no particles are discarded whereas for $\mu=1$ we achieve the maximum speed up. We implement the details of the method, which includes the speed-up techniques just discussed, in Algorithm \ref{algo:sKV-CBOfc}. As before we fix $\lambda=1$. 

\begin{algorithm}
	\KwIn{$\Delta t$, $\sigma$, $\alpha$, $d$, $N$, $n_T$, $\mu$, $M$ and the function ${\EE}(\cdot)$}
	Generate $V_0^i$, $i=1,\ldots,N_0$ sample vectors uniformly on $\BS^{d-1}$; 
	
	Compute the variance $\Sigma_0$ of $V_0^i$ and set $N_0=N$;
	
	\For{$n=0$ {\bf to} $n_T$}{
		Generate $\Delta B_n^i$ independent normal random vectors $N(0,\Delta t)$;
		
		Compute $V_n^{\alpha, \EE}$ from \eqref{eq:valphaM} if $M \leq N_n$ otherwise use \eqref{eq:valpha};
		
		$\tilde V^i_{n+1}\gets V^i_n +\Delta t P(V_n^i)V_n^{\alpha, \EE} + \sigma |V_n^i - V_n^{\alpha, \EE}| P(V_n^i)\Delta B_n^i-\displaystyle\Delta t\frac{\sigma^2}{2}(V_n^i-V_n^{\alpha, \EE})^2(d-1)V_n^i$
		$V^i_{n+1}\gets \tilde V^i_{n+1}/|\tilde V^i_{n+1}|$, $i=1,\ldots,N_n$;
		
		Compute the quantity $\widehat{\Sigma}_{n+1}$ from $V_{n+1}^i$;
		
		Set $N_{n+1}\gets [\![N_n \left(1+\mu\left((\widehat{\Sigma}_{n+1}-\Sigma_{n})/\Sigma_{n}\right)\right)]\!]$ and discard uniformly $N_{n}-N_{n+1}$ samples;
		
		Compute the variance $\Sigma_{n+1}$ of $V_{n+1}^i$;
	}
	\caption{Fast KV-CBO}
	\label{algo:sKV-CBOfc}
\end{algorithm}
Typically, a minimum bound $N_{min}$ of the number of particles is adopted to guarantee that $N_n \geq N_{min}$ during the simulation and the variance reduction test is performed every fixed amount of iterations to avoid fluctuations effects.

\subsubsection*{Adaptive Parameters}

Our main theoretical result Theorem \ref{thm:mainresult} and condition \eqref{lamsig} establish that, once $N$ is large, for $\sigma$ small enough and $\alpha$ large enough,  Algorithm \ref{algo:sKV-CBO} will converge near to a global minimizer. One important aspect, as in many metaheuristic algorithms, concerns 
the choice of the parameters in the method. The adaptation of hyperparameters in multi-particle optimization is a well-known problem, which deserves a proper discussion, see, e.g., \cite{escalante2009:psms}.  In our case, we observed
that decreasing $\sigma$ and increasing $\alpha$ during the iterative
process leads to improved results in term of convergence and accuracy. 
One strategy, therefore, would be to start with a large $\sigma$ and  
to reduce it progressively over time 
as a function of a suitable indicator of
convergence, for example the average variance of the solution or the
relative variation of $V_{\alpha}$ over time. This can be realized starting from
$\sigma_0$ and by decreasing it as
\begin{equation}
\sigma_{n+1} = \frac{\sigma_{n}}{\tau}, 
\label{eq:sigmar}
\end{equation}
where $\tau>1$ is a constant. Other techniques, of course, can be used to decrease 
$\sigma$, for example following a cooling strategy as in the Simulated Annealing approach \cite{holley1988simulated}. 
In \cite{carrillo2019consensus} it has been proposed to reduce $\sigma$ independently
of the solution behavior, as a function of the initial value $\sigma_0$ and the 
number of iterations. This corresponds to take $\sigma_{n+1}={\sigma_n}/({\sigma_0}\log(n+1))$
in \eqref{eq:sigmar}. As a result of these strategies, the noise level in the system will
decrease in time. Note that, since we need $\lambda \gg \sigma^2(d-1)$ (see formula \eqref{lamsig} below) to achieve consensus
in the system, this approach allows to start initially with a larger $\sigma$ which permits to
explore the surrounding area well before entering the consensus regime. 

Similarly, it might not be beneficial to start with a large $\alpha$ from the beginning. In fact, in this case the $V_\alpha$ would right away equal the particle with the lowest energy and all the other particles will be forced to move towards this particle, with a lower impact on the initial exploration mechanism. Therefore, we can start with an initial value $\alpha_0$ and gradually increase it to a maximum value $\alpha_{\max}$ accordingly to an appropriate convergence indicator, or independently as a function of the number 
of iterations. In particular, large values of $\alpha$ at the end of the simulation process are essential to achieve high accuracy in the computation of the minimum.

\subsection{Numerical experiments for the Ackley function}
\subsubsection*{Minimizing the Ackley function in dimension $d=3$}

First we consider the behavior of the model and its mean field limit in the case $d=3$ for computing the minimum of the Ackley function\footnote{\it https://en.wikipedia.org/wiki/Test$\_$functions$\_$for$\_$optimization} constrained over the sphere
\be
\EE(V)= -A \exp\left(-a\sqrt{\frac{b^2}{d}\sum_{k=1}^{d} (V_k-v^*_k)^2}\right)-\exp\left(\frac1{d}\sum_{k=1}^{d} \cos(2\pi b(V_k-v^*_k))\right)+e+B,
\ee
with $A=20$, $a=0.2$, $b=3$, $B=20$ and $V=(V_1,\ldots,V_d)^T$ with $|V|=1$.

The global minimum is attained at $V=v^*$. In Figure \ref{fg:ackley} we report the Ackley function for $d=3$ over the half sphere $V_3 \geq 0$.
Note that, this problem differs from the standard minimization of the Ackley function over the whole space $\RR^d$ since KV-CBO  operates through unitary vectors over the hypersphere.

\begin{figure}[htb]
	\hskip .2cm
	\includegraphics[scale=0.18]{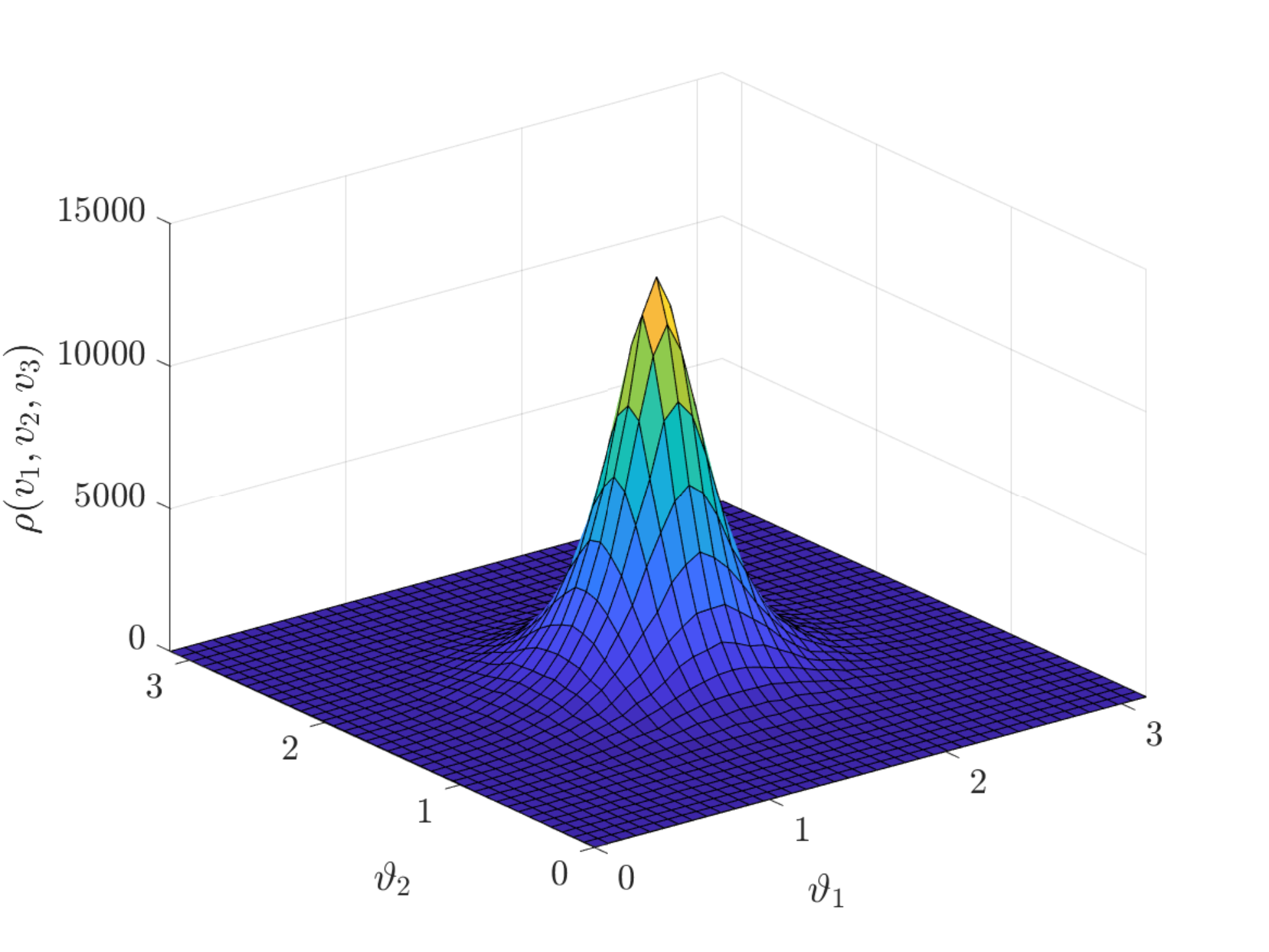}\hskip -.25cm
	\includegraphics[scale=0.18]{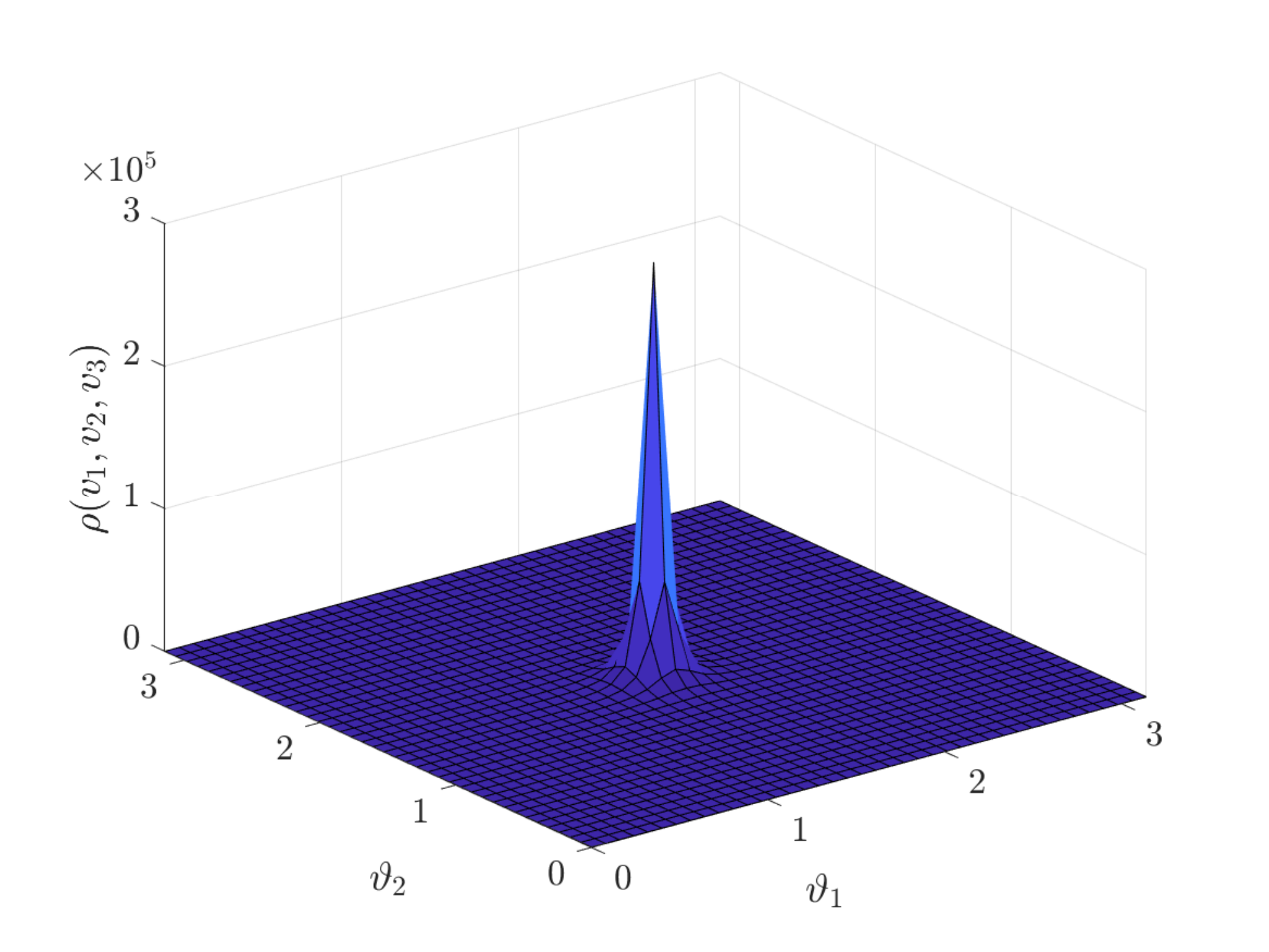}\hskip .4cm
	\includegraphics[scale=0.18]{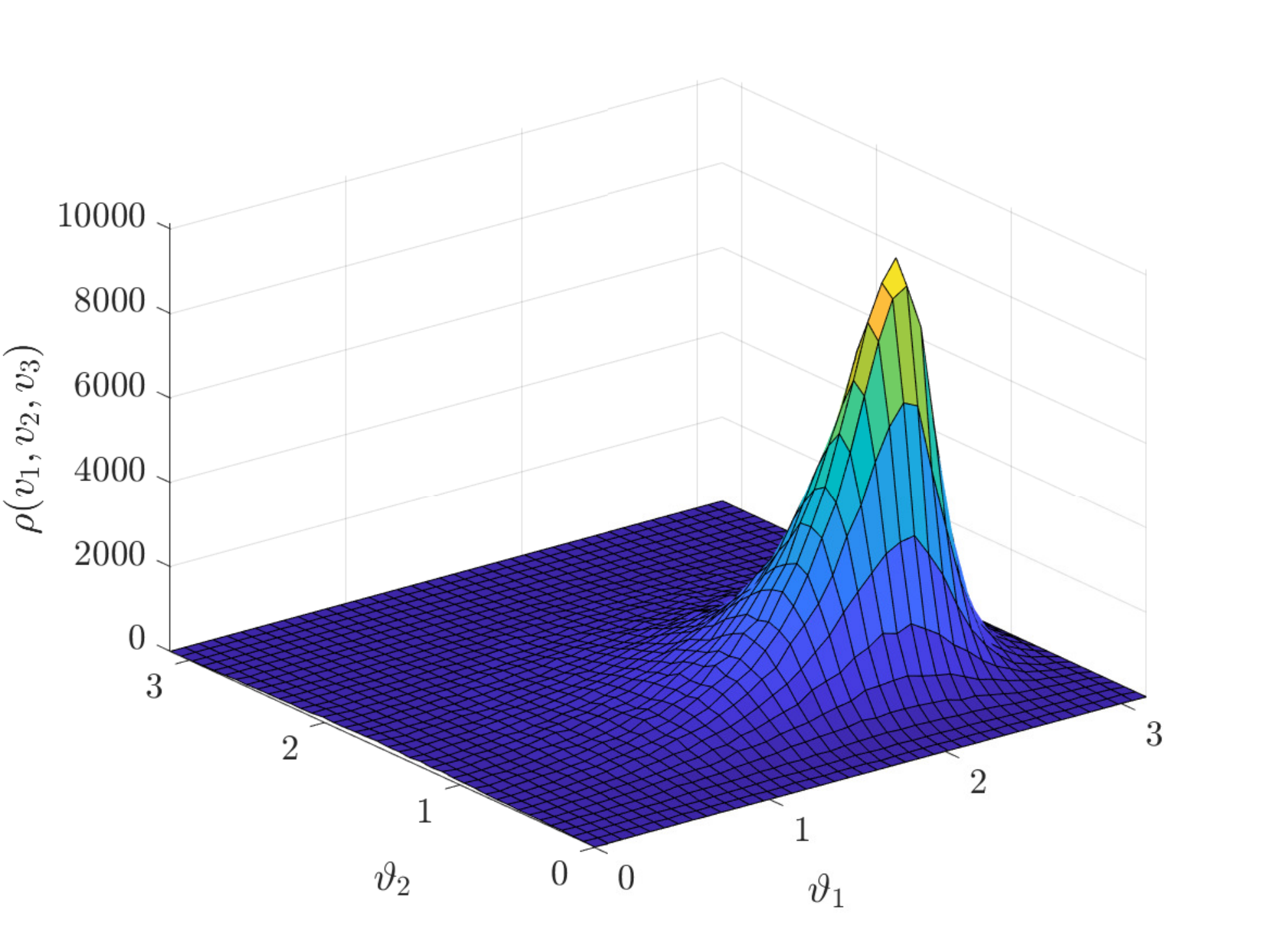}\hskip -.25cm
	\includegraphics[scale=0.18]{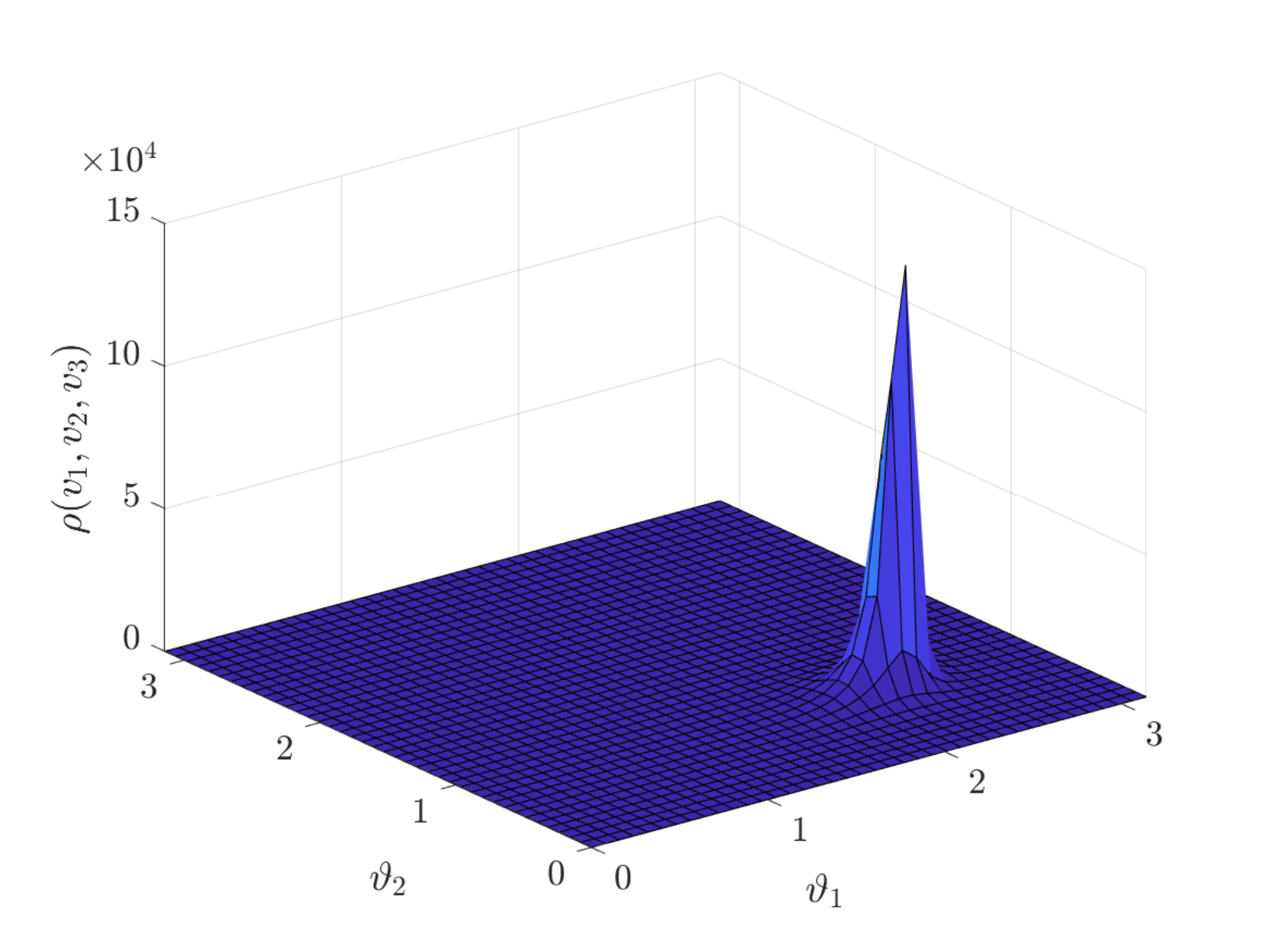}\\[-.2cm]
	\includegraphics[scale=0.41]{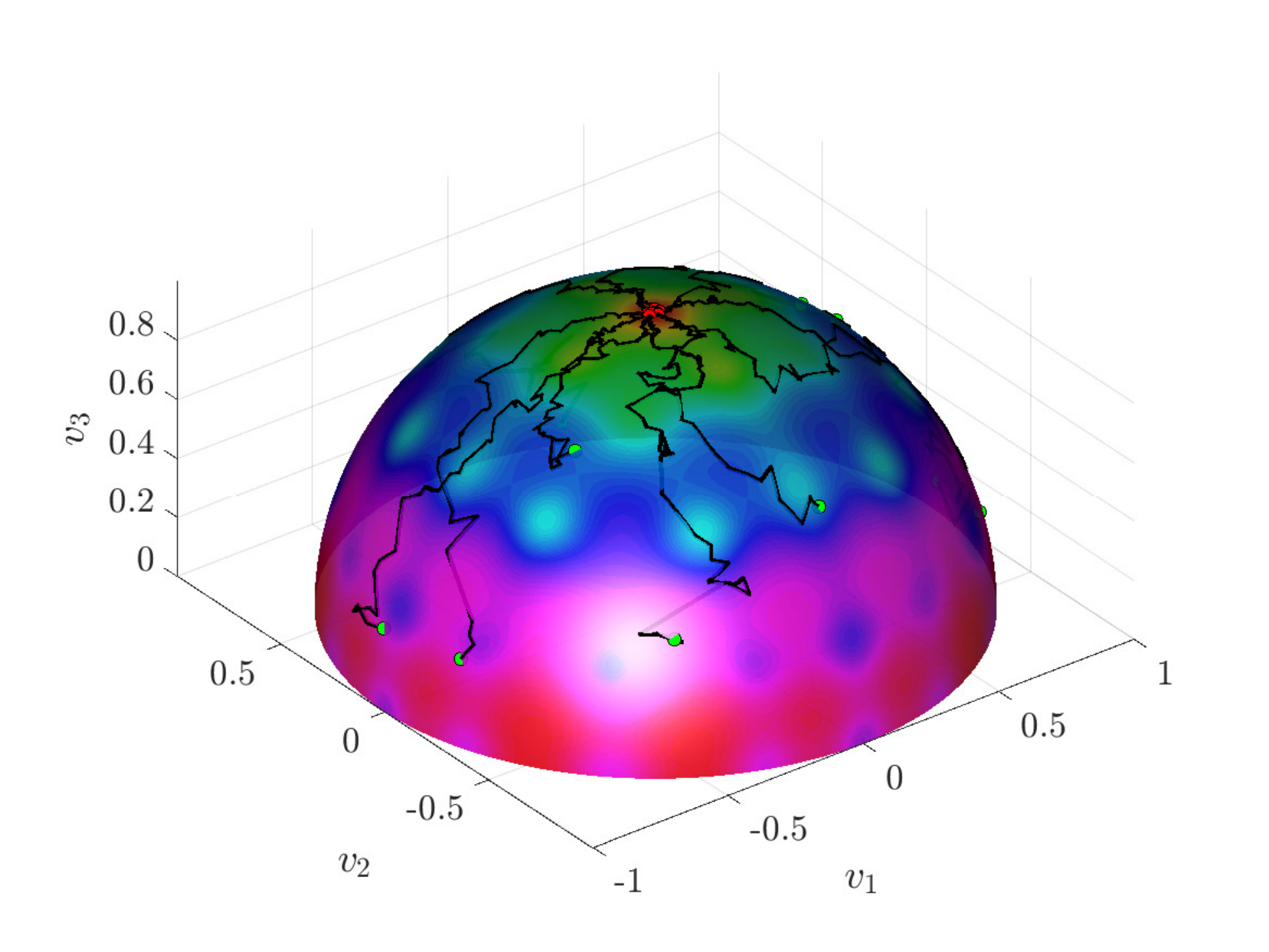}\,\hskip .2cm
	\includegraphics[scale=0.41]{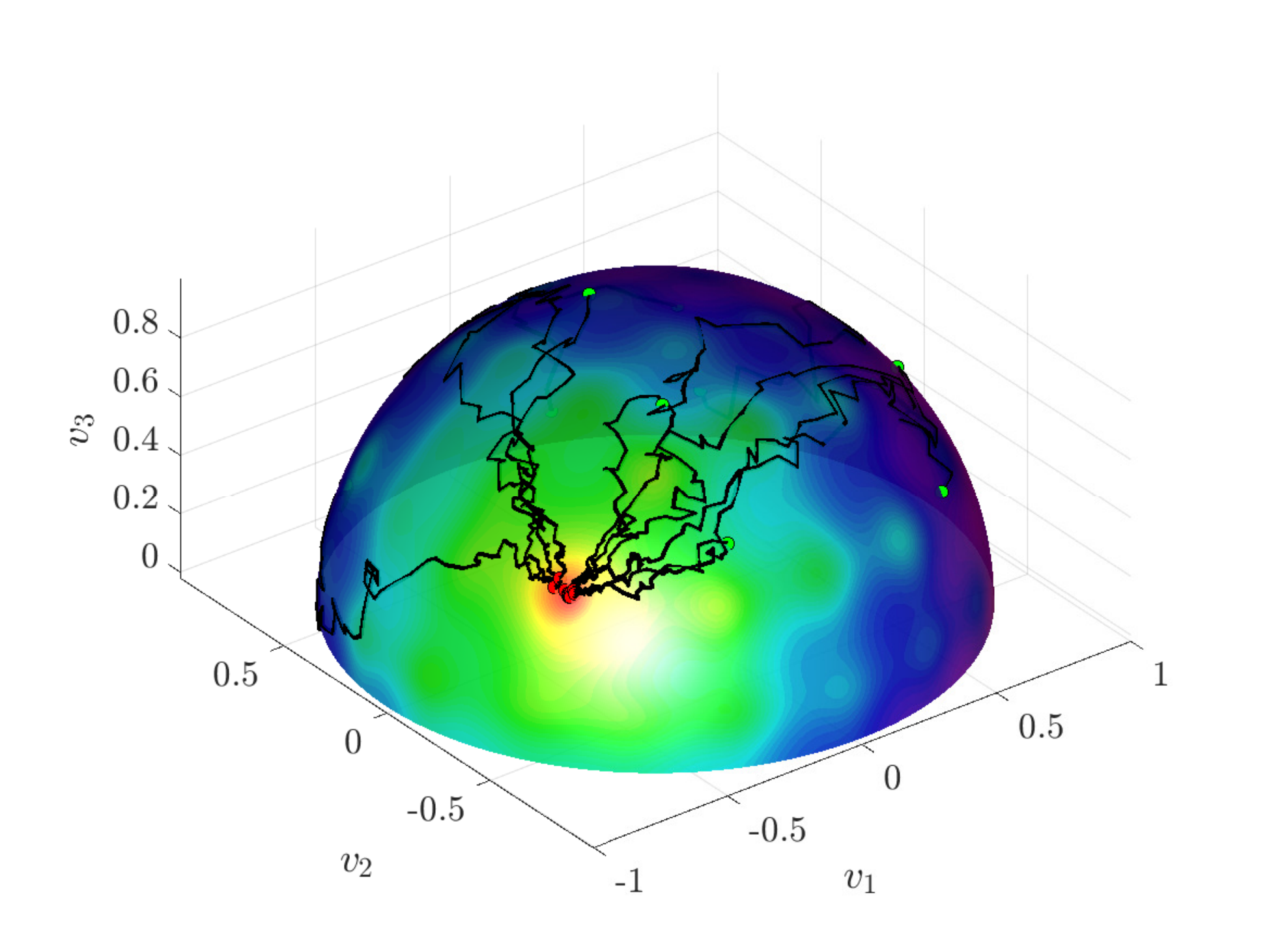}\,
	\caption{Particles trajectories along the simulation for the Ackley function in the case $d=3$, $N=20$ with minimum at $v^*=(0,0,1)^T$ (left) and $v^*=(-1/\sqrt{2},-1/2,1/2)^T$ (right). On the top corresponding time evolution of the particle distribution $\rho(v,t)$ in angular coordinates at $t=1$ and $t=2.5$ for $N=10^6$. The simulation parameters are $\Delta t=0.05$, $\sigma=0.25$ and $\alpha=50$.}
	\label{fg:ackley_traj}
\end{figure}

In all our simulations we initialize the particles with a uniform distribution over the half sphere characterized by $V_3\geq 0$ and employ the simple Euler-Maruyama scheme with projection. 
We report in Figure \ref{fg:ackley_traj} the particle trajectories for $t\in [0,5]$ in the case of $N=20$, $\Delta t=0.05$, $\sigma=0.25$ and $\alpha=50$. On the left we consider the case with minimum at $v^*=(0,0,1)^T$, on the right the case with minimum at $v^*=(1/\sqrt{2},-1/2,1/2)^T$. The time evolution of the particle distribution $\rho(v,t)$ in the numerical mean field limit for $N=10^6$ is also reported in the upper part of the same figure. 

Next in Figure \ref{fg:ackley_conv}, we consider the convergence to consensus measured using various indicators for $N=50$, $\Delta t=0.1$, $v^*=(0,0,1)^T$ and various values of $\sigma$ and $\alpha$. The results have been averaged $1000$ times with a success rate of $100\%$ in all test cases considered.
Following \cite{carrillo2019consensus, pinnau2017consensus}, we consider a run successful if $V_n^{\alpha,\EE}$ at the final time is such that 
\[
\|V_{n_T}^{\alpha,\EE}-v^*\|_\infty:=\sup_{k=1,\dots,d} |(V_{n_T}^{\alpha,\EE})_k-(v^*)_k| \leq 1/4.
\]
We also compute the expected error in the computation of the minimum by considering time averages of $\|V^{\alpha,\EE}-v^*\|_\infty$ and we report the quantity $|V^{\alpha,\EE}-v^*|^2/d$ used in \cite{carrillo2019consensus, pinnau2017consensus}.
As can be seen from Figure \ref{fg:ackley_conv} (top) where $\sigma=0.7$ the influence of large values $\alpha$ in the accuracy of the computation of the minimum is clear when passing from $\alpha=5$ to $\alpha=500$.

\begin{figure}[htbp]
	\includegraphics[scale=0.45]{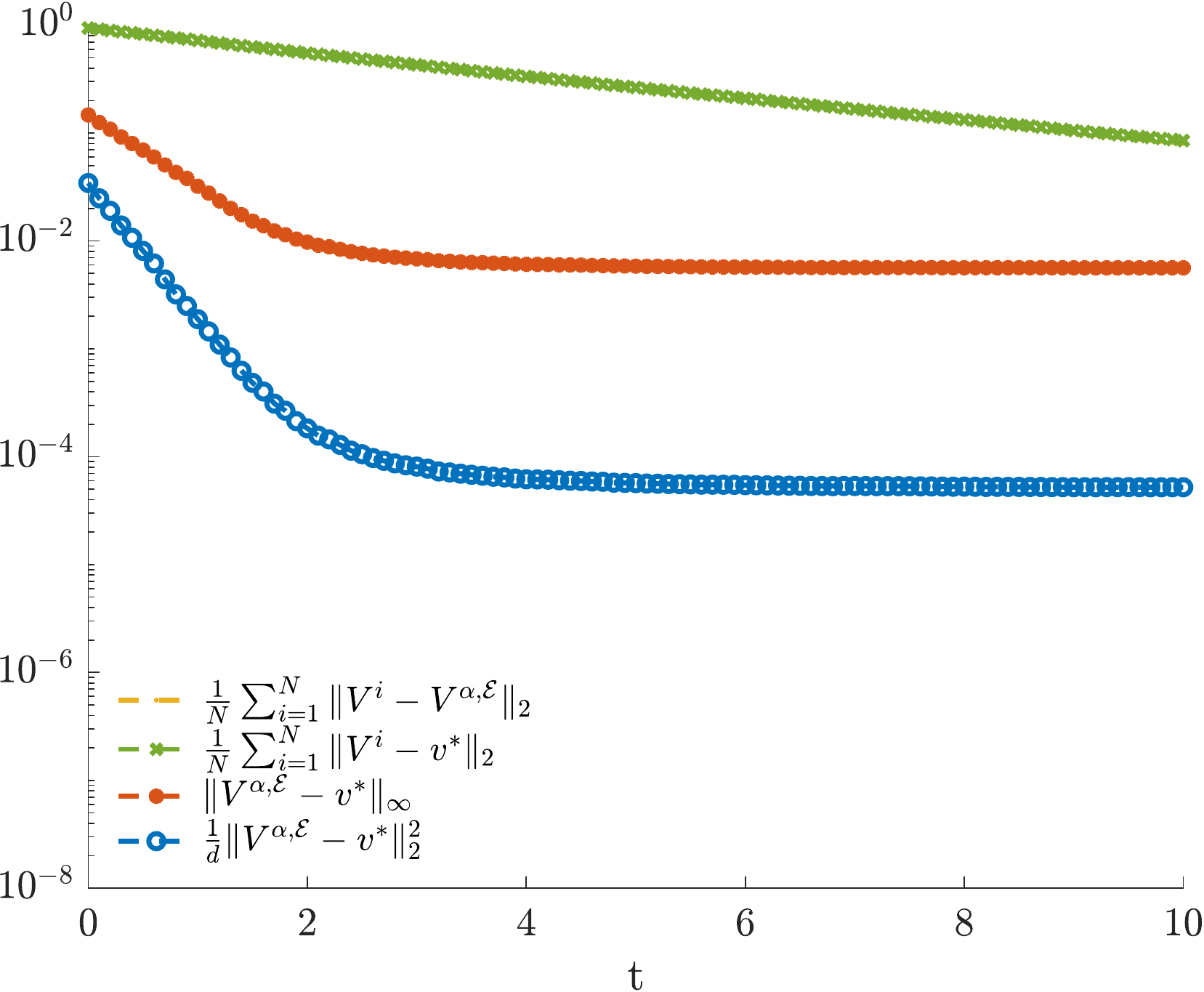}\,
	\includegraphics[scale=0.45]{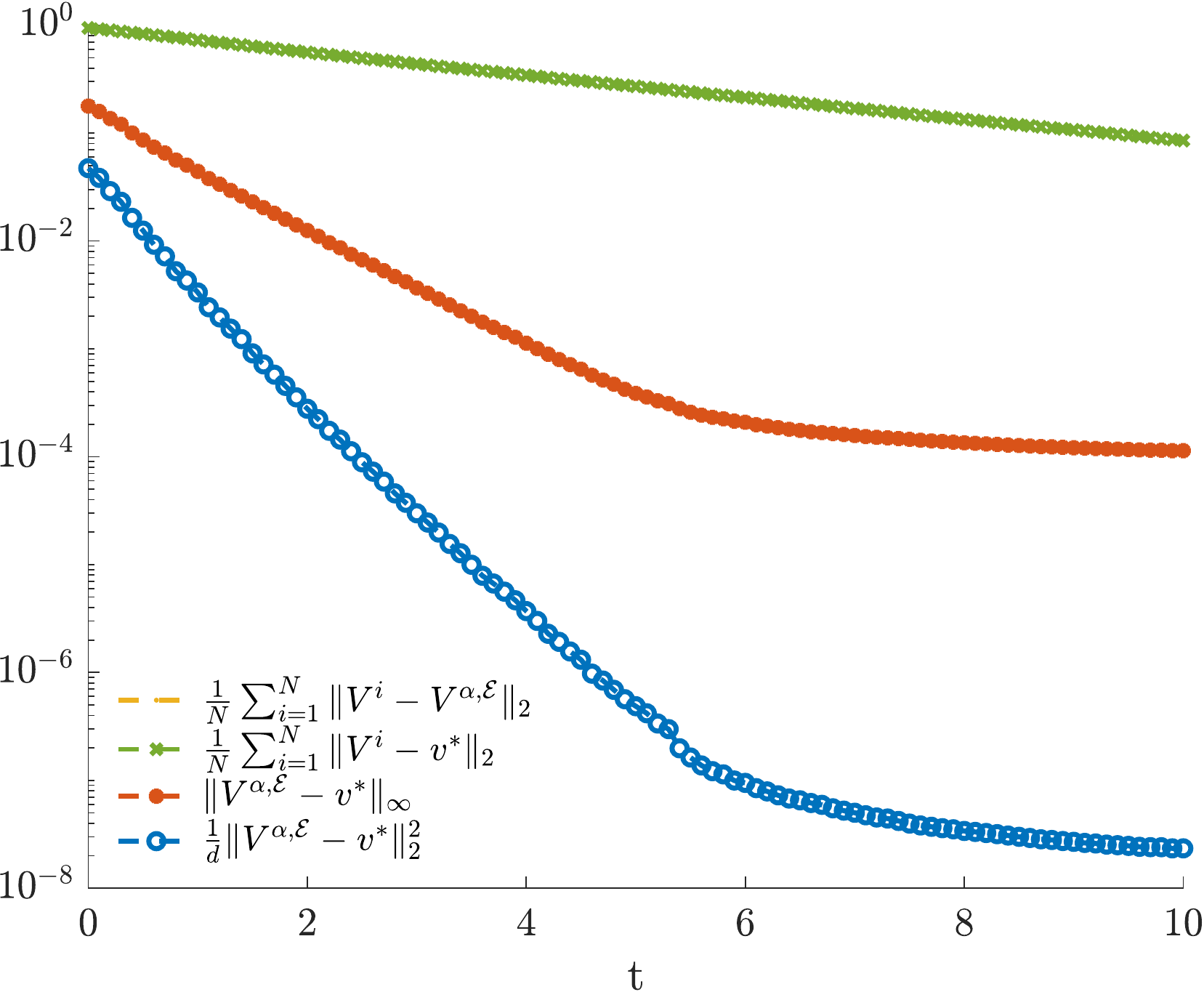}\\
	\includegraphics[scale=0.45]{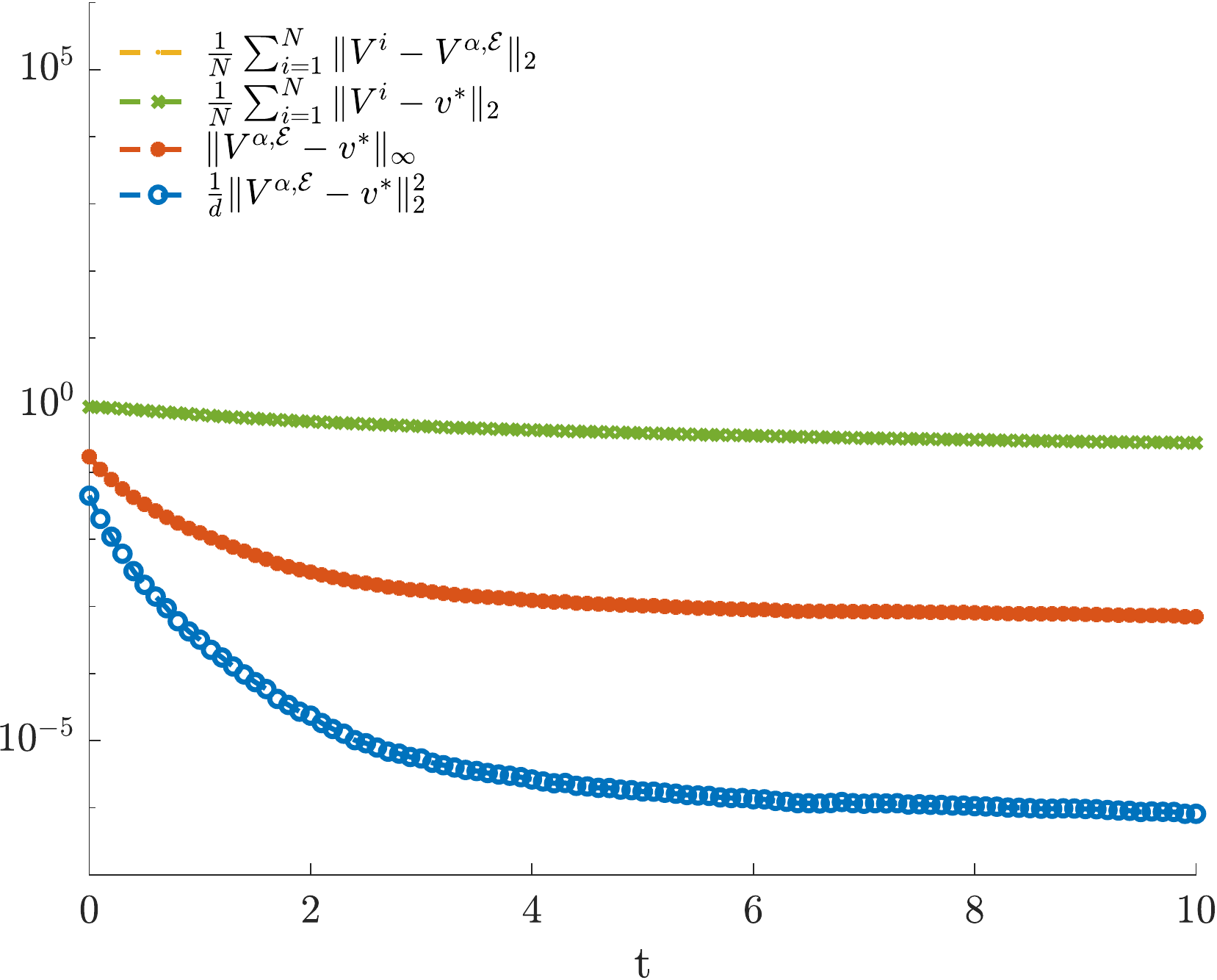}\,
	\includegraphics[scale=0.45]{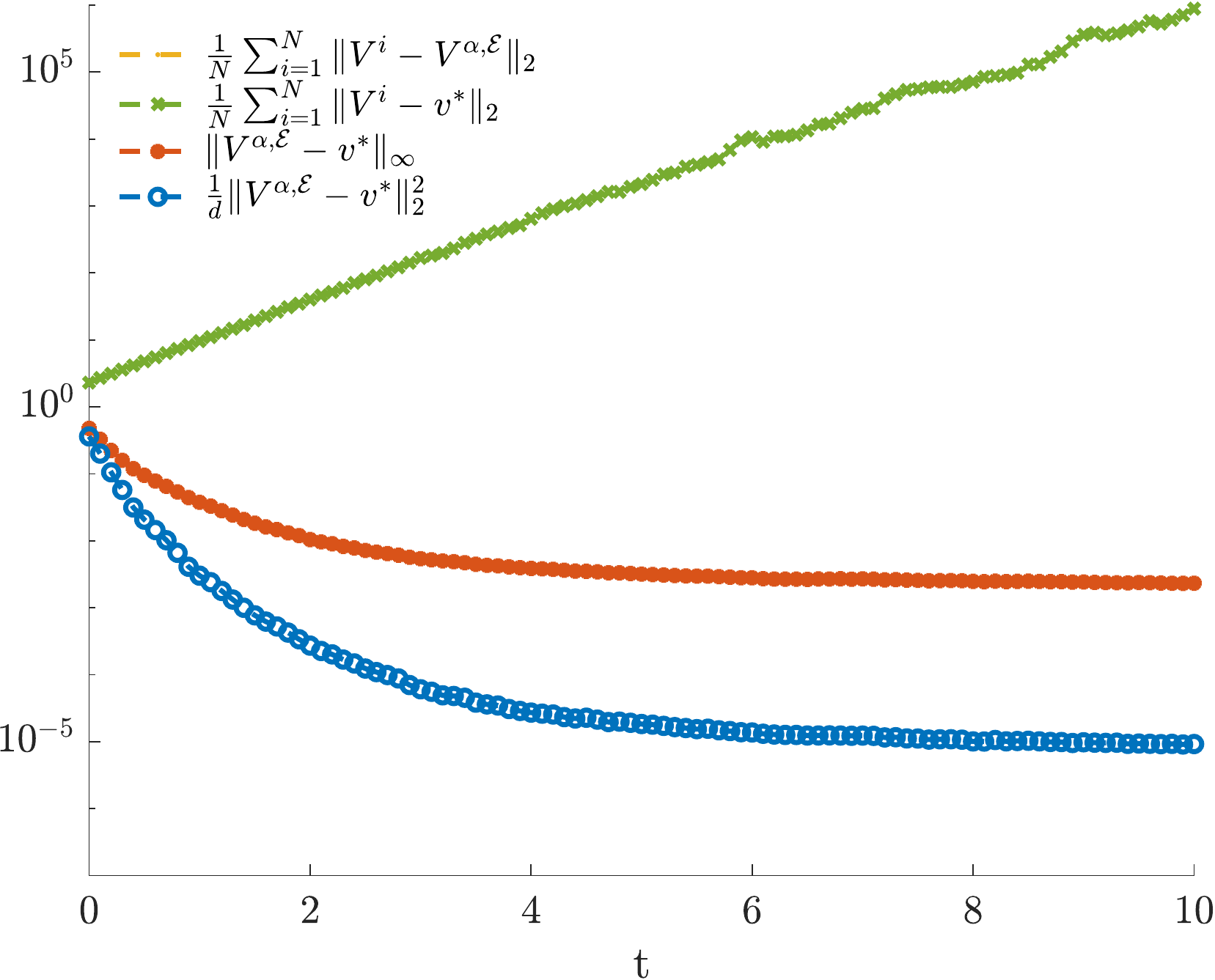}
	\caption{Behavior of various convergence indicators in time for the Ackley function in the case $d=3$ for $N=50$, $\Delta t=0.1$. The two graphs on top show the accuracy of KV-CBO for $\sigma=0.7$, which satisfies $\sigma^2(d-1) \ll \lambda=1$ as dictated by \eqref{lamsig}. We chose $\alpha=5$ (top, left) and $\alpha=500$ (top, right); the yellow line on the top right plot is superimposed by the green line. It is seen that the accuracy is much better for the choice $\alpha = 500$. For the two graphs on the bottom we chose $\sigma=2$, which violates $\sigma^2(d-1) \ll \lambda$, and used KV-CBO with $\alpha=30$ (bottom, left) and the CBO method from \cite{pinnau2017consensus} with $\alpha=30$ (bottom, right). Again, the green line is superimposing the yellow line. The results have been averaged $1000$ times with a success rate of $100\%$ in all cases.}
	\label{fg:ackley_conv}
\end{figure}
In Figure \ref{fg:ackley_conv} (bottom) we show the same computations for a larger value $\sigma=2$ of the diffusion coefficients,  which violate the consensus bound $\sigma^2(d-1) \ll \lambda$, see \eqref{lamsig}. We compare our results with the ones computed using the CBO method in \cite{pinnau2017consensus}.
Even if both methods yield a success rate of $100\%$, the methods clearly do not reach consensus, in the sense that the consensus error \eqref{eq:cons} is not diminishing in time. This behavior is common also to the CBO solvers in \cite{carrillo2019consensus} where the above quantity may even diverge since it is not bounded by the geometry of the sphere. 

\subsubsection*{Minimizing the Ackley function in dimension $d=20$}

Next we consider the more difficult case of the Ackley function in dimension $d=20$. 

In Table \ref{tb:1} we report the results for $\sigma=0.3$, $\Delta t=0.05$, $\alpha=5\times 10^4$, $T=100$ and various values of $N$ and $M$. The rate of success and the expectation of the error $|V^{\alpha,\EE}-v^*|^2$ have been measured over $100$ runs and the minimum has been considered in two different positions 
\[
v^*=(0,\ldots,0,1)^T,\qquad v^*=(d^{-1/2},\ldots,d^{-1/2})^T.
\]
In the first case the minimum is at the center of our initial distribution (so $V^{\alpha,\EE}$ initially is not too far from $v^*$) whereas the second choice is more difficult for the CBO solver since the minimum is shifted with respect to the center of the initial particle distribution, uniformly in all coordinates. 

In all test cases considered the success rate is close to $100\%$. In particular, let us observe (see Table \ref{tb:2}) that the fast method for $\mu=0.3$ and $\mu=0.2$ with $N_{min}=10$ permits to achieve better performances for a given computational cost. We have selected a final computation time lower than the optimal computation time that would have allowed us to achieve maximum precision in the computation of the minimum, this to avoid unnecessary iterations with a small number of particles that would have created a bias in the final average particle number $N_{avg}$.

\begin{table}[htb]
	\caption{Ackley function in $d=20$: $\mu=0$, $\sigma=0.3$, $\Delta t=0.05$ and $T=100$}
	\begin{center}
		\begin{tabular}{c|ccccc}
			$v^*$ & & $N=50$ & $N=100$ & $N=200$\\
			& & $M=40$ & $M=70$  & $M=100$\\
			\hline
			\hline
			& & & &\\[-.25cm]
			$(0,\ldots,0,1)^T$ & Rate & $100\%$ & $100\%$ & $100\%$\\
			& Error & $2.24118e-08$ & $1.3364e-09$ & $3.51083e-09$ \\
			\hline
			& & & &\\[-.25cm]
			$(d^{-1/2},\ldots,d^{-1/2})^T$ & Rate & $98\%$ & $99\%$ & $100\%$\\
			& Error & $1.15704e-06$ & $1.476e-09$ & $5.09216e-09$ \\
			\hline
			\hline
		\end{tabular}
	\end{center}
	\label{tb:1}
\end{table}%

\begin{table}[htb]
	\caption{Ackley function in $d=20$: $\sigma=0.3$, $\Delta t=0.05$ and $T=100$}
	\begin{center}
		\begin{tabular}{c|ccccc}
			$v^*$ & & $N_0=100$ & $N_0=200$ & $N_0=400$\\
			& & $M=70$ & $M=100$  & $M=150$\\
			\hline
			\hline
			& & & &\\[-.25cm]
			$(0,\ldots,0,1)^T$ & Rate & $100\%$ & $100\%$ & $100\%$\\
			$\mu=0.3$        & Error & $1.20639e-07$ & $3.73419e-08$ & $2.24362e-08$ \\
			& $N_{avg}$ & $21.6$ & $38.7$ & $71.4$ \\        
			\hline
			& & & &\\[-.25cm]
			$(d^{-1/2},\ldots,d^{-1/2})^T$  & Rate & $100\%$ & $100\%$ & $100\%$\\
			$\mu=0.2$       & Error & $1.34745e-06$ & $2.02787e-08$ & $8.06536e-09$ \\
			& $N_{avg}$ & $27.3$ & $53.1$ & $103.0$ \\ 
			\hline
			\hline
		\end{tabular}
	\end{center}
	\label{tb:2}
\end{table}%

\subsection{Challenging applications in signal processing and machine learning}

In this section we consider two  applications of KV-CBO, namely, the phase retrieval problem and the robust subspace detection problem.  For the former we consider only synthetic data, for the latter we consider synthetic as well as real-life data in dimension up to $d=2880$. 
The solution to these problems can be reformulated in terms of a high dimensional nonconvex optimization over the sphere with unique symmetric solutions.
Both these problems have by now {\it ad hoc} state of the art methods for their solution. The aim of this section is to demonstrate that Algorithms \ref{algo:sKV-CBO} or \ref{algo:sKV-CBOfc} can be used
in a versatile and scalable way to solve several and diverse problems and achieve state of the art performances by comparison with the more specific methods.

\subsubsection{Phase Retrieval}\label{sec:phaseretr}

Recently there has been growing interest in recovering an input vector $z^*\in \mathbb R^d$ from quadratic measurements 
\begin{equation}
y_i = |\langle z^*, a_i\rangle |^2+ w_i, \quad i=1,...,M \label{def yi}
\end{equation}
where $w_i$ is adversarial noise, and $a_i$ are a set of known vectors. Since only the magnitude of $\langle z^*,a_i\rangle$ is measured, and not the phase (or the sign, in the case of real valued vectors), this problem is referred to as phase retrieval. Phase retrieval problems arise in many areas of optics, where the detector can only measure the magnitude of the received optical wave.  Important applications of phase retrieval include X-ray crystallography, transmission electron microscopy and coherent diffractive imaging \cite{qui10,hurt2001phase,Harrison:93,wa63}. Several algorithms have been devised for robustly computing 
$z^*$ from measured information $y=(y_i)_{i=1,\dots,M}$ based on different principles, such as alternating projections, lifting and convex relaxation, and simple gradient descent for empirical risk minimization \cite{Gerchberg72,fien82,Yang:94,ca13,ca14,Chen_2019}. Despite the wide range of solutions, most of these algorithms fail to tackle robustly the crystallographic problem which is both the leading application and one of the hardest forms of phase retrieval \cite{els18}. One of the reasons is that the phase retrieval problem is intrinsically ill-posed for $M$ small. Recent work  \cite{mondelli2017fundamental} explains even by information theoretical arguments that no estimator can do better than a random estimator for $M \leq d - o(d)$. Uniqueness results of the solution $z^*$ of the real-valued phase retrieval problem in the case of no noise has been established in \cite{BALAN2006345} for sets of measurement vector $\{a_i: i=1, \dots,M \}$ forming a frame for $\mathbb{R}^d$, i.e., there are constants $0<A \leq B < \infty$ such that
\begin{equation}
A|z|^2 \leq \sum_{i=1}^M |\langle z,a_i \rangle|^2 \leq B |z|^2
\end{equation}
holds for any $z \in \mathbb{R}^d$. Specifically, \cite[Theorem 2.2]{BALAN2006345} ensures that for generic frames unique identifiability occurs for $M\geq 2d -1$, as the map $\mathbb R^d \backslash \{\pm 1\} \ni z\to y(z):=(|\langle z, a_i\rangle |^2)_{i=1,\dots,M}$ is in fact injective.
In order to tackle the robust identifiability, empirical risk minimization has been considered in \cite{ELDAR2014473}, i.e., the minimization of the discrepancy
\begin{equation}\label{empiricalrisk}
\EE(z) = \sum_{i=1}^M  \left | |\langle z, a_i\rangle|^2 - y_i\right |^2. 
\end{equation}
Guarantees of stable reconstruction via empirical risk minimization are obtained under the assumption that the measurements vectors $\{a_i: i=1, \dots,M \}$ fulfill the stability property
\begin{equation}\label{stabilityprop}
\sum_{i=1}^M \left | |\langle z, a_i\rangle|^2 - |\langle \hat z, a_i\rangle|^2\right | \geq \kappa |z-\hat z||z+ \hat z|,
\end{equation}
for all $z, \hat z \in \RR^d$ and some fixed $\kappa>0$. In particular, \cite[Theorem  2.4]{ELDAR2014473} ensures that for  measurement
vectors $\{a_i: i=1, \dots,M \}$ generated at random, e.g.,  as i.i.d. Gaussian vectors, for $M \geq \gamma d$ , the stability estimate \eqref{stabilityprop} holds for a suitable $\kappa>0$ with high probability depending on the constant $\gamma>0$. As a broad disquisition about the phase retrieval problem is not the focus of this paper, we omit here  details about stability under adversarial noise and we refer to \cite{BANDEIRA2014106,ELDAR2014473} for further insights.
However, we should notice at this point that the empirical risk  $\EE$ in \eqref{empiricalrisk} fulfills then all the requests of Assumptions \ref{assumas} below, in particular the stability estimate \eqref{stabilityprop} naturally induces the {\it inverse continuity property} 4. of Assumptions \ref{assumas}. Hence, the minimization of \eqref{empiricalrisk}  is a challenging nonconvex optimization problem, which falls precisely in the realm of problems for which Algorithm \ref{algo:sKV-CBO} or Algorithm \ref{algo:sKV-CBOfc} are expected to work at best. Before presenting numerical experiments of the use of Algorithm \ref{algo:sKV-CBO} or Algorithm \ref{algo:sKV-CBOfc} and comparisons with state of the art methods, we should perhaps clarify that the empirical risk minimization can without loss of generality be  restricted to vectors on the sphere  as soon as the lower frame constant $A$ is known: for the sake of simplicity, let us assume again that the noise $w\equiv 0$ and we observe that 
\begin{equation}
\sum_{i=1}^M y_i = \sum_{i=1}^M |\langle z^*,a_i \rangle|^2 \geq A|z^*|^2 \quad \text{and} \quad |z^*| \leq \sqrt{\frac{1}{A}\sum_{i=1}^M y_i} =: R
\end{equation}
where we take $A$ to be the optimal lower frame bound. We define the vectors $\tilde{a}_{i}$ by one zero padding, i.e.,  
\begin{equation}
\tilde{a}_i = [a_i, 0] \in \mathbb{R}^{d+1},
\end{equation}
and we further denote
\begin{align}
\tilde z^* = [z^*, \sqrt{R^2 - |z^*|^2}] \in R~\BS^{d}, \quad  v^*= \frac{\tilde{z}^*}{R} \in \BS^{d}, \quad \mbox{and} \quad  \tilde y_i = \frac{y_i}{R^2}. \label{zbar}
\end{align}

\begin{figure}[tb]
	\centering
	\includegraphics[width = 6cm]{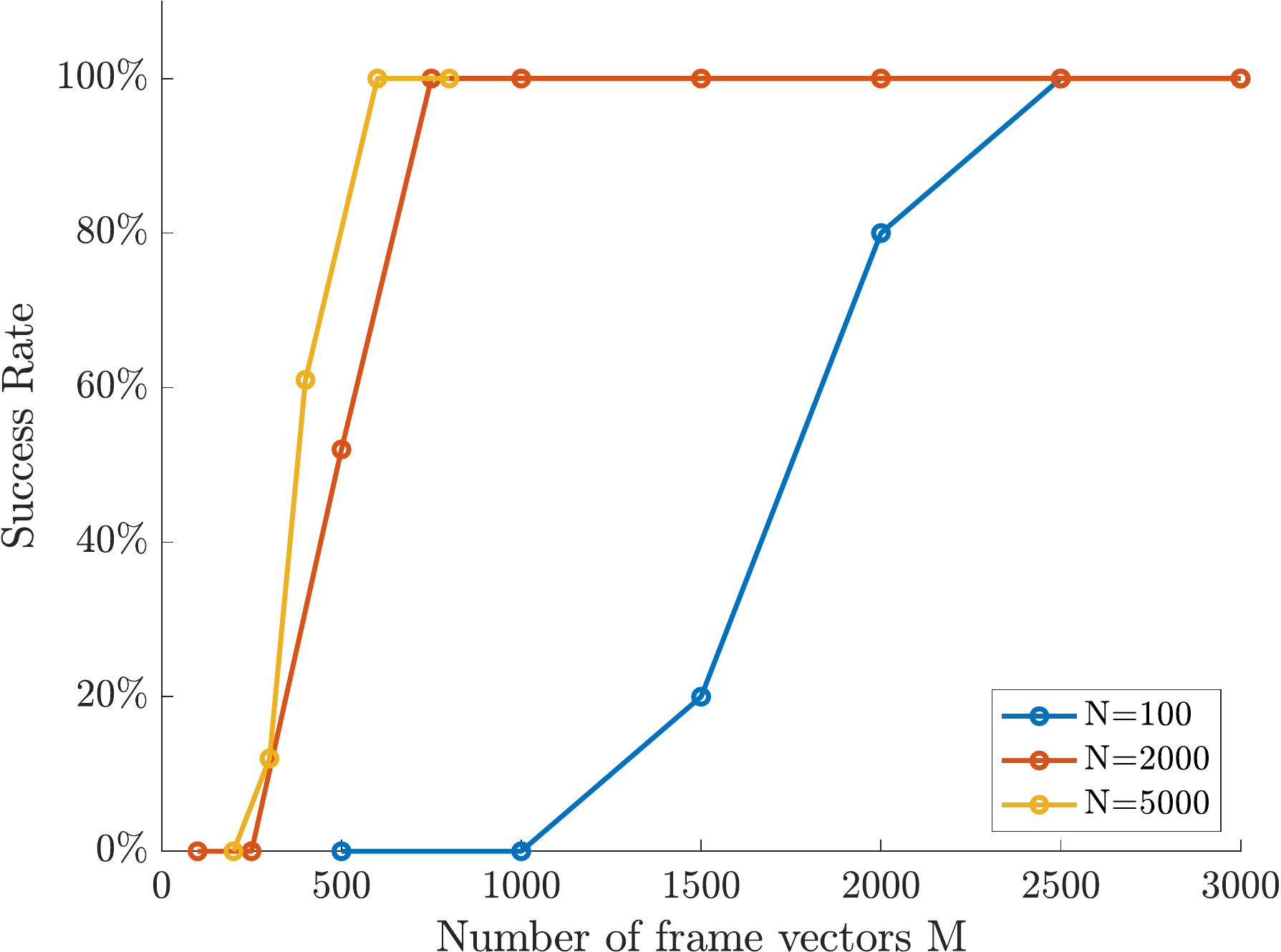}
	\includegraphics[width = 7.5cm]{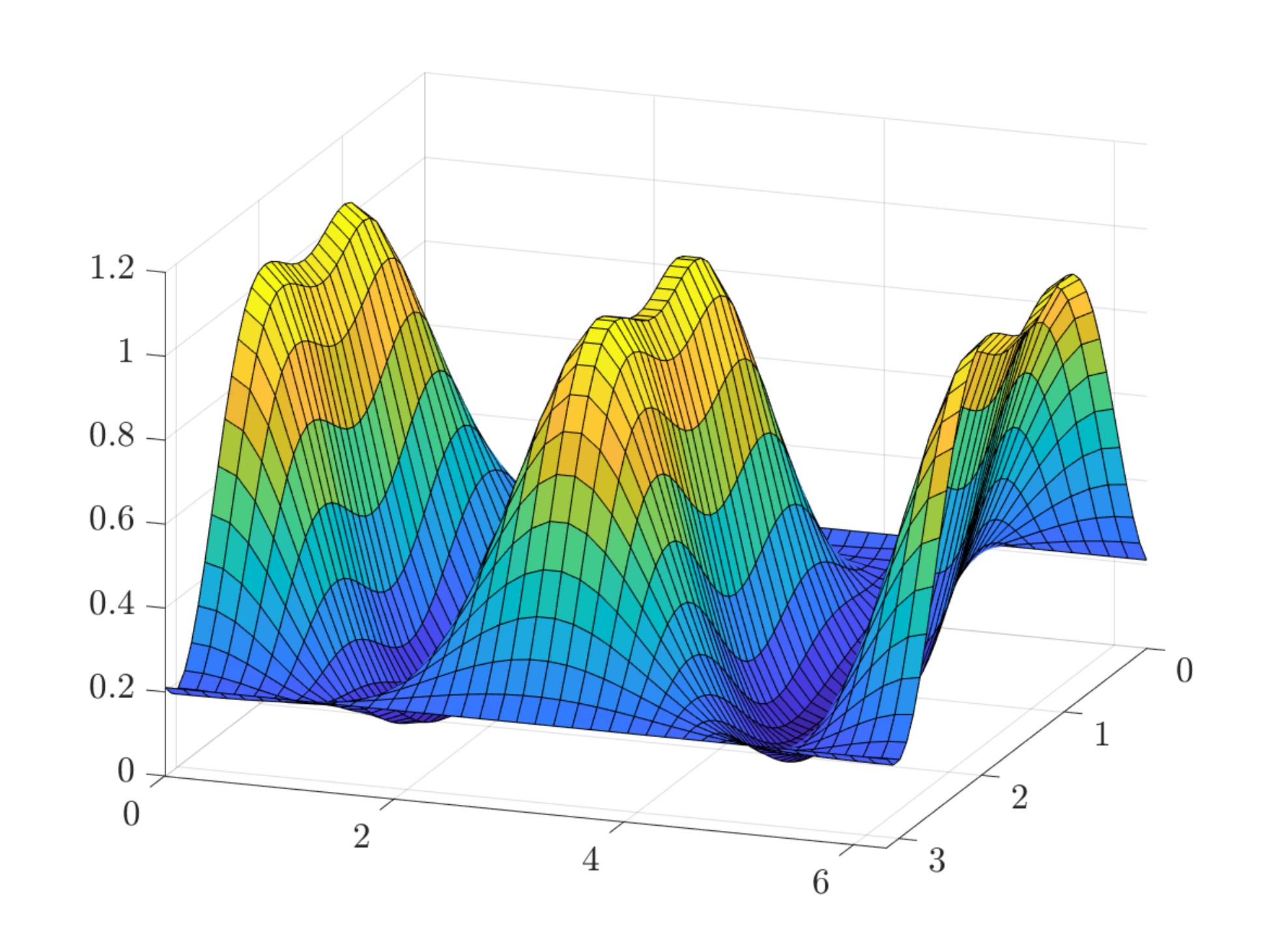}
	\caption{Left: Success rate for different numbers of frame vectors $M$ in dimension $d=100$. We have used the following parameters: $\alpha =2\cdot 10^{15}, \lambda=1, \Delta t = 0.1$ and $\sigma =0.11$ for $N=100$ and $\sigma = 0.08$ for $N=2000$. Right: Plot of the energy defined in \eqref{empiricalrisk} in  $d=2$. It is evident that the energy may exhibit saddle points, but no spurious minimizers appear. This is the reason for a vanilla gradient descent method to work so well for such a problem \cite{ca14,Chen_2019,recht19}. \label{phase retrieval plots}}
\end{figure}

With these notations, \eqref{def yi} can be equivalently recast in the form 
$$
\tilde y_i=|\langle v^*, \tilde{a}_i \rangle |^2, \quad v^* \in \BS^{d}.
$$ 
Hence, the unconstrained minimization of $\EE$ can be equivalently solved by the constrained minimization of
\begin{equation}
\widetilde{\EE}(v) := \sum_{i=1}^M \left | |\langle v,\tilde a_i\rangle|^2 - \tilde y_i \right |^2, \label{empiricalrisk2}
\end{equation}
over the sphere $\mathbb S^{d}$. In fact, the first $d$ components of the minimizing vector $v^*$ must coincide with $z^*/R$. So from now on we implicitly assume that the problem is transformed into one of the type \eqref{typrob}. 

We tested KV-CBO for dimension $d=100$ for the function defined in \eqref{empiricalrisk2}, where the vectors $a_i$ are sampled from a uniform distribution over the sphere. We computed the success rate for reconstructing the vector $z^*$ in terms of the number $M$ of vectors $a_i$. We count a run as successfull if the computed $\bar z$ by Algorithm \ref{algo:sKV-CBO} or Algorithm \ref{algo:sKV-CBOfc} fulfills
\begin{equation}
\min\{|z^* - \bar z|, |z^* +\bar z|\}<0.05\,.
\end{equation}
The phase transitions of success recovery are shown in on the left-hand-side of Figure \ref{phase retrieval plots}. 
We can observe that the success rate improves with the number $N$ of particles used by Algorithm \ref{algo:sKV-CBO} or Algorithm \ref{algo:sKV-CBOfc} and best success is obtained by $M\geq \gamma d$ as predicted by theory.  We notice that the optimization via KV-CBO is evidently not affected by the curse of dimension. On the right-hand-side we depict the typical cost function landscape with saddle-points and symmetric global minimizers. 

In the following, we compare Algorithm \ref{algo:sKV-CBOfc} with three relevant state of the art methods for phase retrieval:
\begin{itemize}
	\item Wirtinger Flow (fast gradient descent method) \cite{ca14,Chen_2019};
	\item Hybrid Input Output/Gerchberg-Saxton's Alternating Projections (alternating projection methods) \cite{Gerchberg72,fien82,Yang:94};
	\item PhaseMax/PhaseLamp (convex relaxation and its multiple iteration version) \cite{ca13}.
\end{itemize}
For the comparsion we used the Matlab toolbox PhasePack\footnote{\it https://www.cs.umd.edu/$\sim$tomg/projects/phasepack/} \cite{chandra2017phasepack} and our own code\footnote{{\it 
		https://github.com/PhilippeSu/KV-CBO}}.

\begin{figure}[tb]
	\centering
	\includegraphics[width = 7cm]{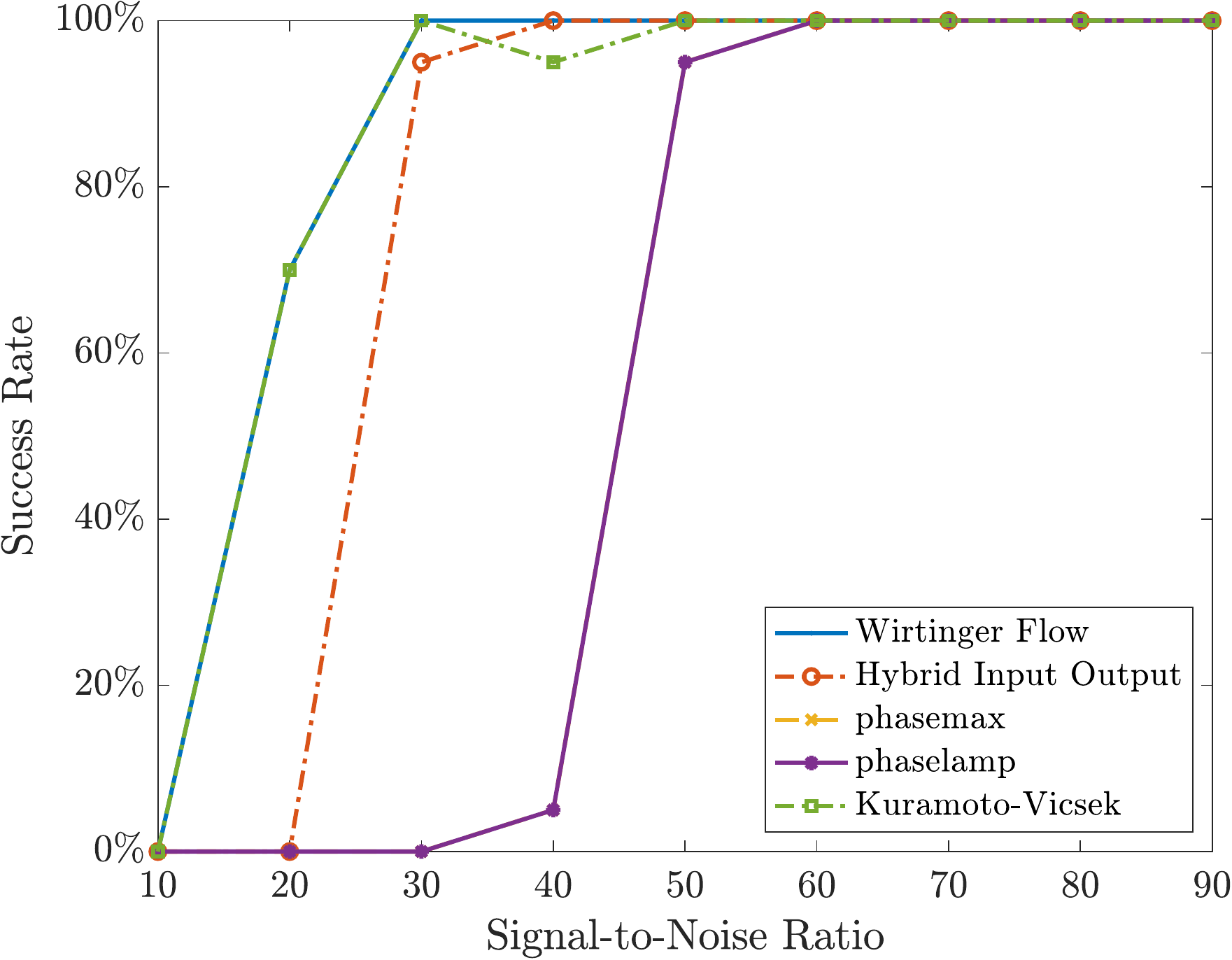}
	\includegraphics[width = 7cm]{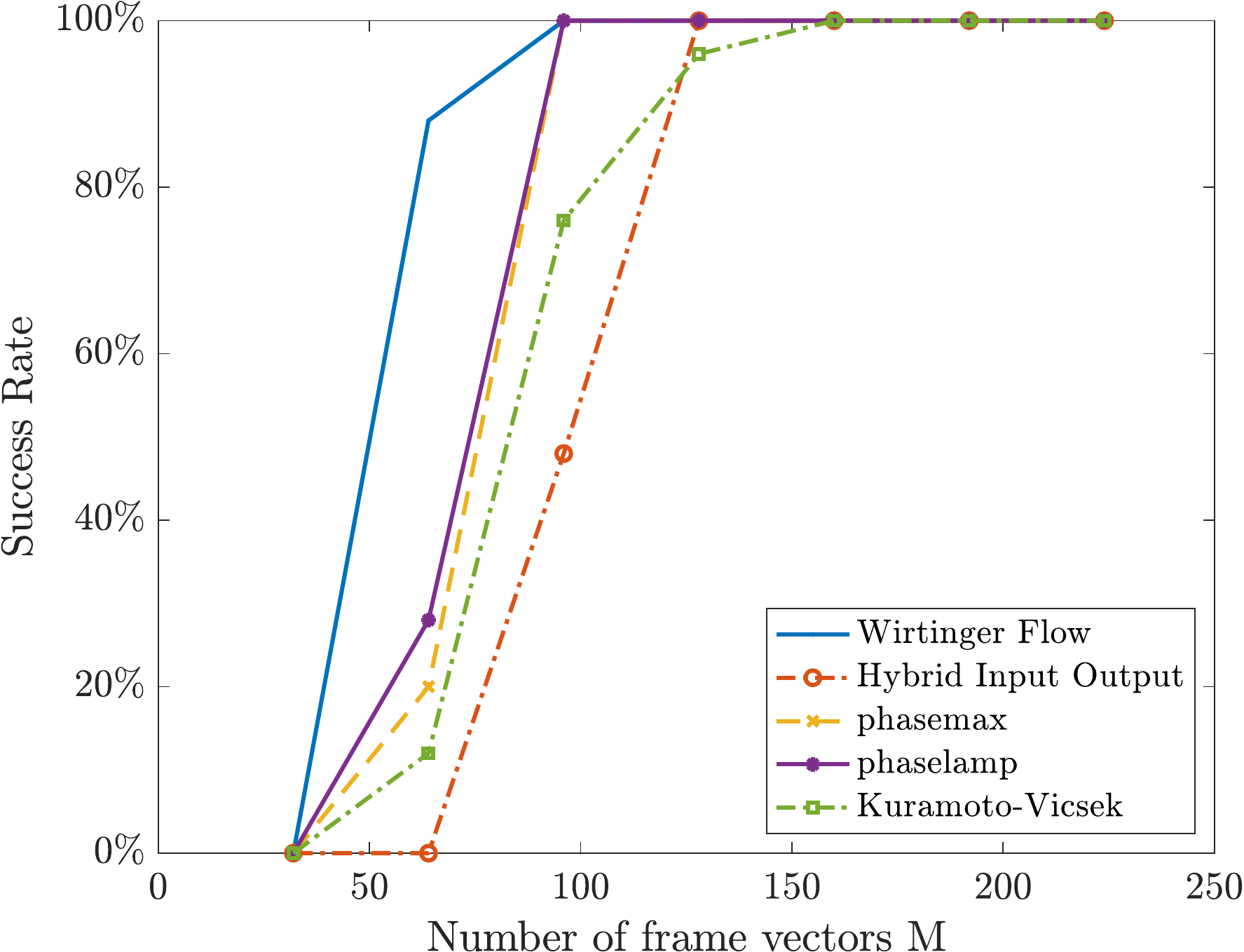}
	\caption{Left: Success rate in terms of the Signal-to-Noise Ratio in dimension $d=32$ for $M=4d$ Gaussian vectors. The green dashed curve representing KV-CBO is exactly superimposed with the light blue curve of the Wirtinger Flow. Right: Phase transitions for different numbers of Gaussian vectors $M$ in dimension $d=32$ (the yellow curve is superimposed by the purple curve). We used $\sigma = 0.2, \Delta t = 0.1, N=10^4$ and chose the parameter $\alpha$ adaptively, with initial $\alpha_{0} = 2000$ and final  $\alpha_{max} = 1e15$. The results are averaged 25 times.}\label{fig:compPR}
\end{figure}

In Figure \ref{fig:compPR} we demonstrate on the left that KV-CBO is exactly as robust as Wirtinger Flow with respect to adversarial noise and on the right we compare phase transition diagrams of success rate, which show that KV-CBO has a slight delay in perfect recovery with respect to  Wirtinger Flow and PhaseMax/PhaseLamp, but it is comparable with Hybrid Input Output/Gerchberg-Saxton's Alternating Projections. The delayed perfect recovery indirectly confirms that the  {\it inverse continuity property} 4. of Assumptions \ref{assumas} needs to be fulfilled for the method to work optimally. (We reiterate that if $M$ is large enough, then the stability property \eqref{stabilityprop} holds with high probability and as a consequence also the inverse continuity property.)

\subsubsection{Robust Subspace Detection}\label{sec:robsub}

Let us consider a cloud of points $\mathcal Q= \{x^{(i)} \in \RR^d: i=1,...,M \}$ in an Euclidean space with $d \gg 1$. We assume without loss of generality that the point cloud is centered, that is, the mean of the point cloud is zero. Subspace detection is about finding a lower dimensional linear subspace $V \subset \RR^d$ that fits the data at best, in the sense that the sum of the squared norms of the orthogonal projection of the points $x^{(i)}$ to $V^\perp$ is minimal.  In the simplest case of a one-dimensional subspace, the cost function to be minimized is given by $\EE_2(v):=\sum_{i=1}^M | x^{(i)} - \langle x^{(i)},v \rangle v|^2$ where each summand is the squared norm of the orthogonal projection of one point $x^{(i)}$ to the space $\operatorname{span}(v)^\perp$.  It is well-know that the minimizer $v^*=\arg \min_{v \in \BS^{d-1}} \EE_2(v) = \arg\max_{v \in \BS^{d-1}}  | X v|^2 $ represents the direction of maximal variance of the point cloud, see, e.g., Figure \ref{point cloud and energy 1} (left), and coincides with the right singular vector associated to the operator norm of the matrix $X=({x^{(i)}}^T)_{i=1, \dots M}$ whose rows are the vectors $x^{(i)}$'s. Despite the nonconvexity of the cost, the computation of the best fitting subspace can be conveniently done by singular value decomposition (SVD) also for subspaces of higher dimension. In this case the cost would simply read $\EE_2(V):=\sum_{i=1}^M |P_{V^\perp} x^{(i)}|^2$.
The drawback of the energy $\EE_2(v)$ is the fact that it is quadratic, thus the summand $| x^{(i)} - \langle x^{(i)},v \rangle v|^2$ will be particular large if $x^{(i)}$ is an outlier, far from the subspace where most of the other points may cluster. The aim of {\it robust} subspace detection \cite{lerman15,Lerman_2017,lerman19} is finding the principal direction of a point cloud without assigning too much weight to outliers. We therefore introduce the more general energy
\begin{equation}
\EE_p(V):=\sum_{i=1}^M |P_{V^\perp} x^{(i)}|^p, \quad V \subset \RR^d, \quad \operatorname{dim}(V)=k \ll d,
\end{equation}
where $0<p\leq 2$. Even in the simplest one dimensional case, the minimization of the energy 
$$
\EE_p(v):=\sum_{i=1}^M | x^{(i)} - \langle x^{(i)},v \rangle v|^p =\sum_{i=1}^{M} \big( |x^{(i)} |^{2}-|\langle x^{(i)}, v \rangle |^{2}\big)^{p/2}, \quad v \in \BS^{d-1},
$$
turns out for $0<p\ll2$ to be a rather nontrivial nonconvex optimization problem. On the right of Figure \ref{point cloud and energy 2}, Figure \ref{point cloud and energy 3}, and Figure \ref{point cloud and energy 4} we illustrate some cost function landscapes in dimension $d=2$. One can immediately notice how $\EE_p$ becomes in fact rougher and exhibits all of the sudden several spurious local minimizers (compare with the case of $p=2$ in Figure \ref{point cloud and energy 1}). Hence, the success of a simple gradient descent method is far less obvious than for the phase retrieval problem, where the energy may have saddle-points, but it has generically no local minimizers, see Figure \ref{phase retrieval plots} and refer to \cite{ca14,Chen_2019,recht19} for details.
\\

\noindent
In the following we test KV-CBO for  clouds of synthetic data points and a cloud of real-life photos from the \textit{10K US Adult Faces Database} \cite{10kUSAdultFaces}. We discuss the performance of the method both for $p=2$ and $0<p<2$. In the former case, we can compute the exact minimizer of the energy $\EE_2(v)$ by SVD. For $0<p<2$ we compare the result with the state of the art algorithm Fast Median Subspace (FMS) \cite{Lerman_2017} as benchmark. We mention that FMS is proven in general to converge to stationary points of the cost function only, which are in special  data models very close to global minimizers with high probability. The synthetic point cloud models we use for comparison below are in part fitting the existing guarantees of global optimization for FMS.
In these cases, we analyze different sets of parameters and dimensionality of the problem and we discuss the success rate for different parameters such as numbers of particles and $\sigma>0$. In fact, the choice of the parameter $\sigma>0$ is perhaps a bit tricky. From our theoretical findings, it would be sufficient that $\lambda/(d-1) \gg \sigma^2$, see \eqref{lamsig}, thus $\sigma>0$ needs simply to decrease with growing dimension $d$. However, in the pure particle simulation $\sigma$ cannot be taken too small otherwise randomness won't be enough to explore the space in a reasonable computational time. In Figure \ref{success ratio sigma plot} we report the success rate in terms of $\sigma$ for different dimensions. We further chose $\alpha = 2\cdot 10^{15}$ and $\Delta t = 0.25$. 

\begin{figure}[H]
	\centering
	\includegraphics[width = 7cm, height = 5cm]{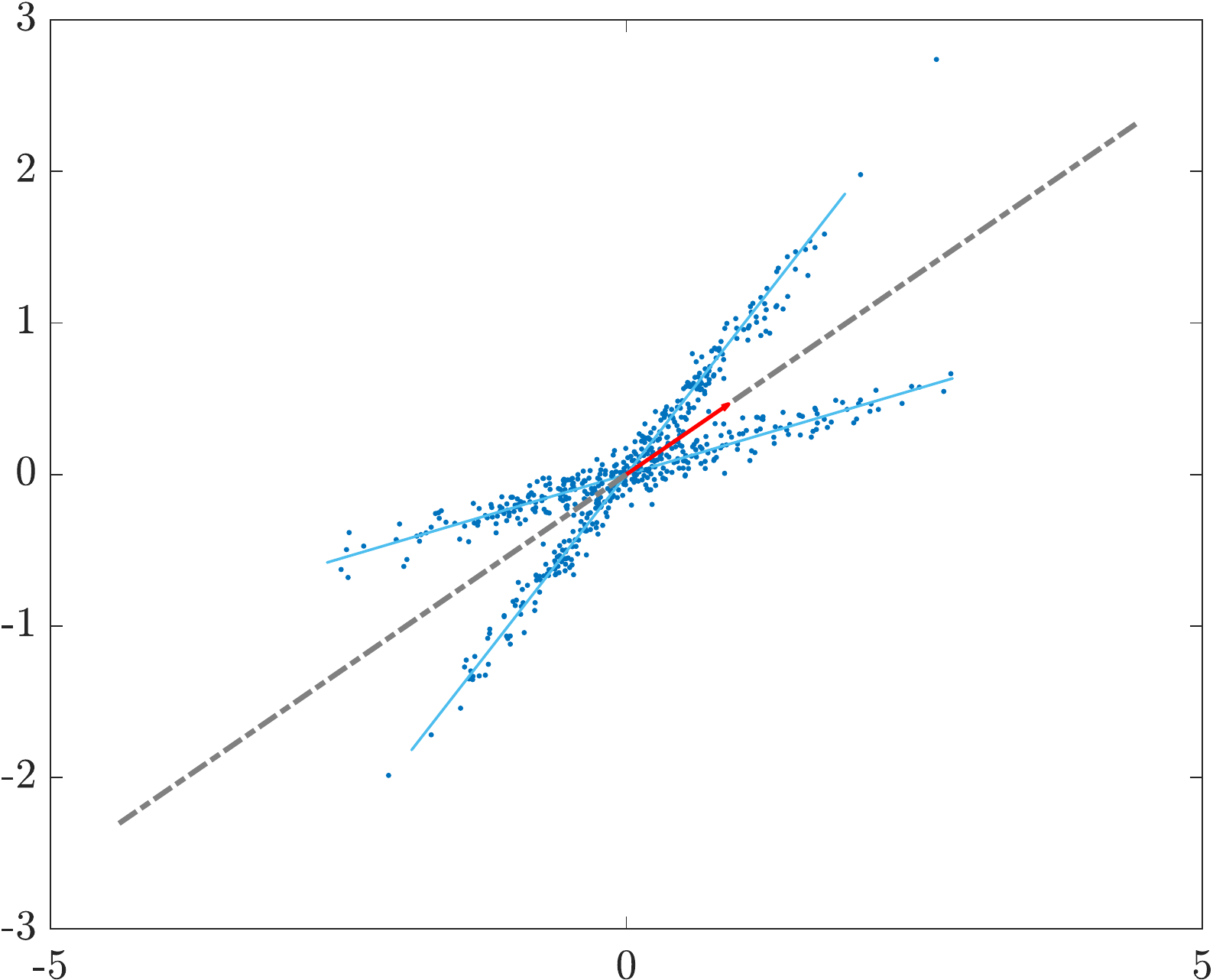}
	\includegraphics[width = 7cm, height = 5cm]{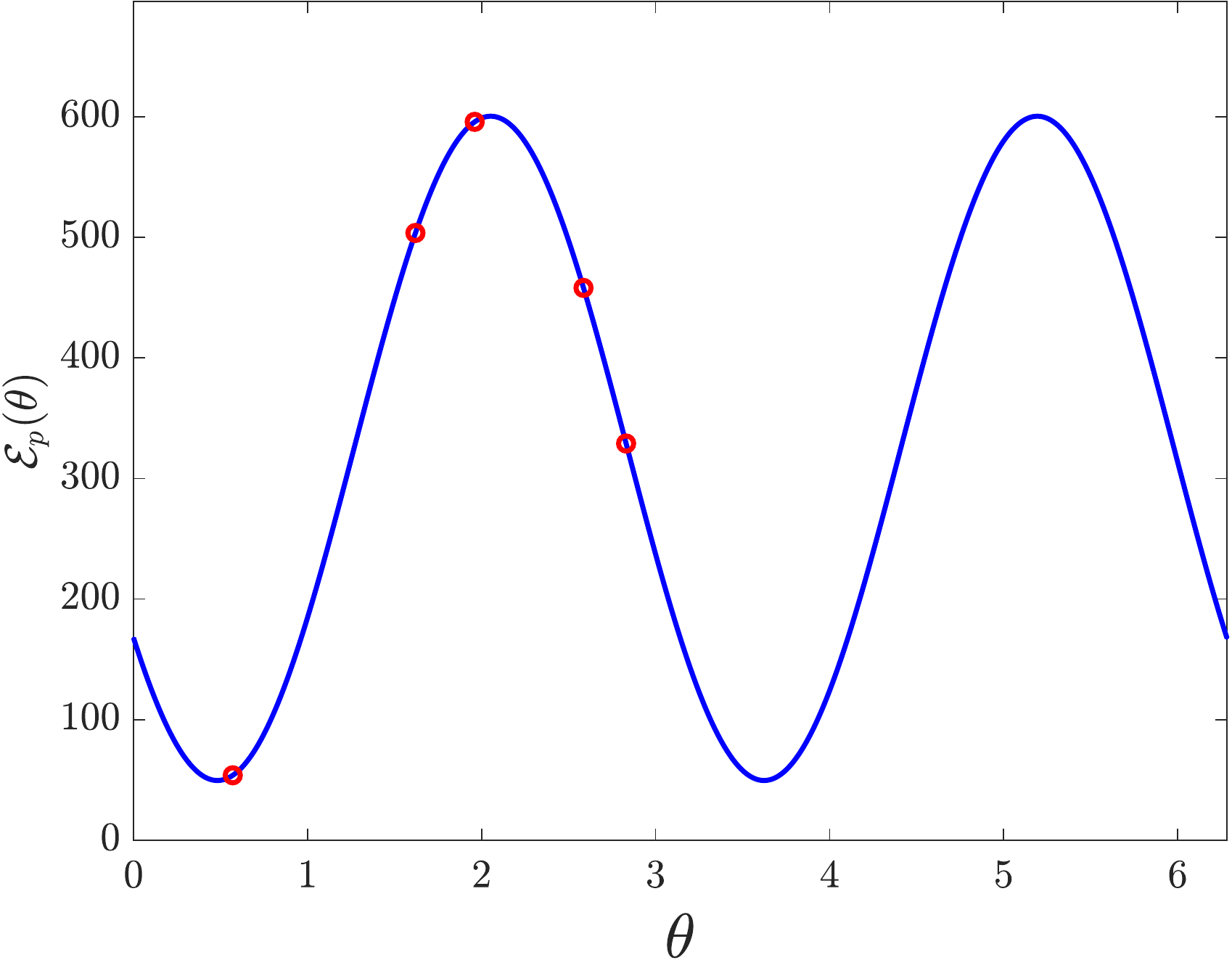}
	\caption{Left: Point cloud with $N_{sp}=2$ one-dimensional subspaces in dimension $d=2$ with Gaussian noise of $0.01$. The red vector shows the principal direction. Right: Energy $\EE_2(v(\theta))$ for the point cloud on the left for $\theta \in [0,2\pi)$, where $v(\theta)$ is expressed in polar coordinates. The particles are shown in red. \label{point cloud and energy 1}}
\end{figure}

\begin{figure}[H]
	\centering
	\includegraphics[width = 7cm, height = 5cm]{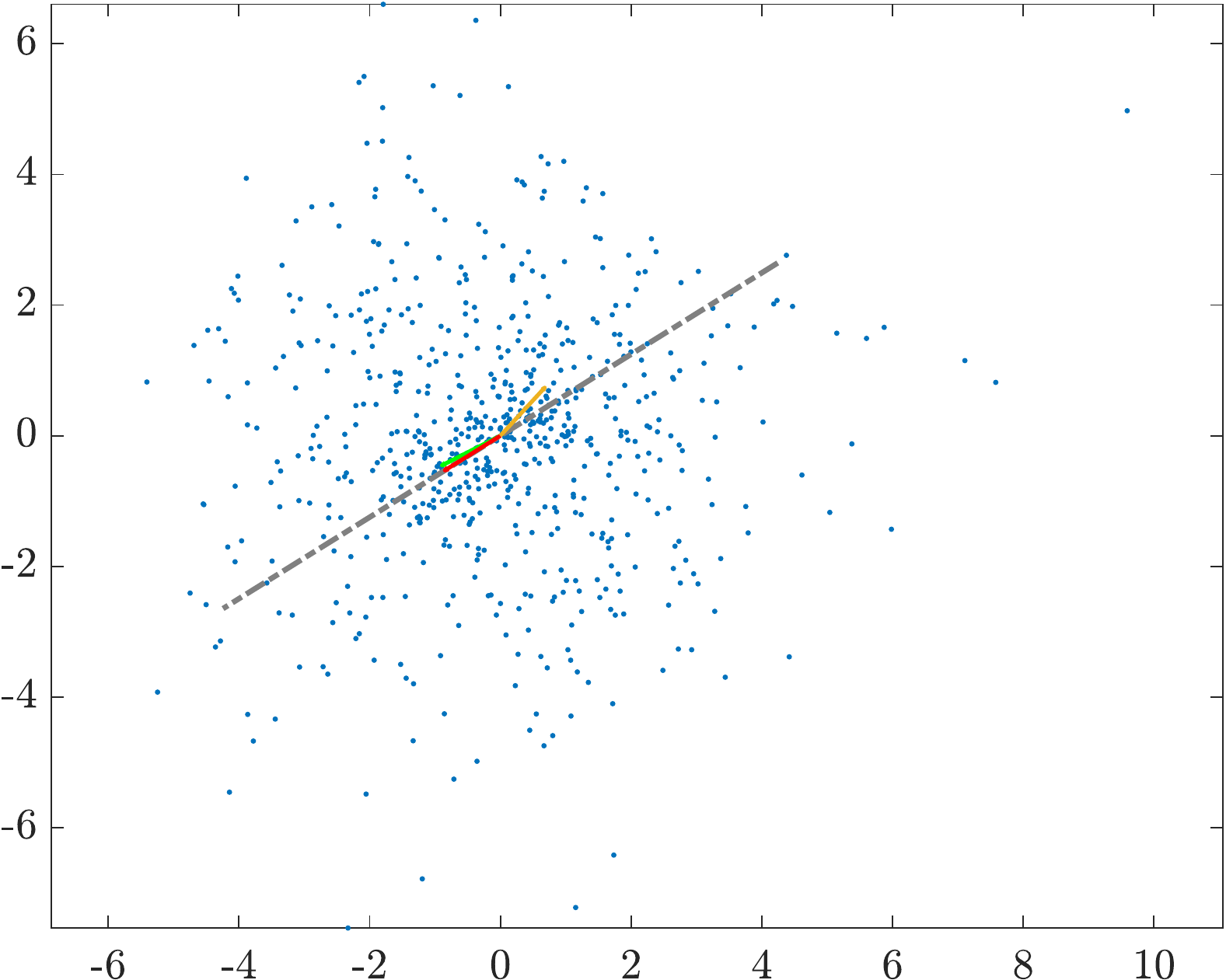}
	\includegraphics[width = 7cm, height = 5cm]{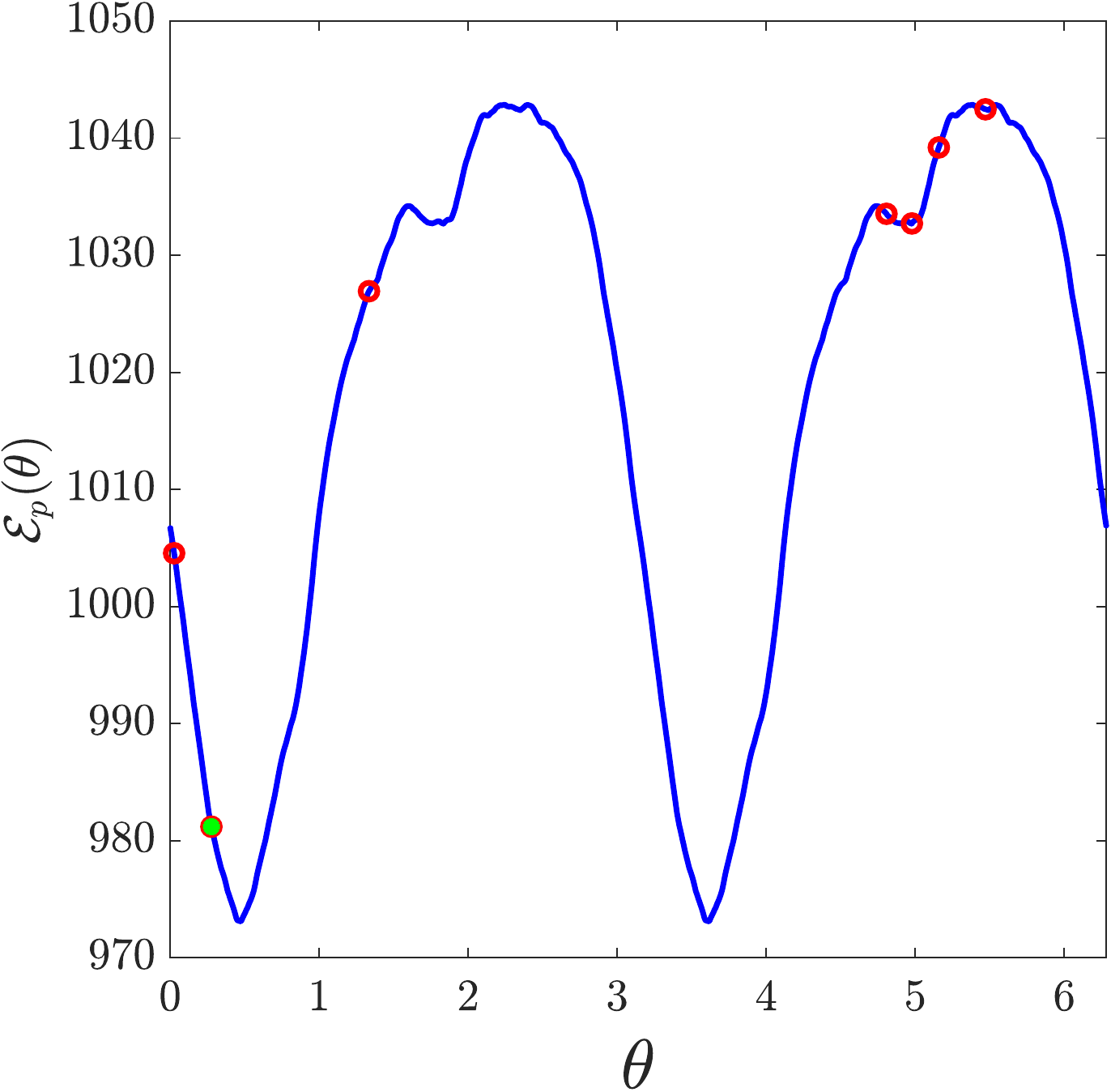}
	\caption{Left: Point cloud in dimension $d=2$ with $N_{sp}=2$ subspaces with 100 points each and Gaussian noise of $0.25$. We further have added $500$ outliers. The orange/ red vector shows the principal component computed by SVD of the point cloud with/ without the outliers. The green vector is the principal component compoted by KV-CBO. Right: Energy $\EE_p(\theta)$ for $\theta \in [0,2\pi)$ for the point cloud on the left and $p=1$. The particles are shown in red, $V_0^{\alpha, \EE}$ is shown in green. \label{point cloud and energy 2}}
\end{figure}

\begin{figure}[H]
	\centering
	\includegraphics[width = 7cm, height = 5cm]{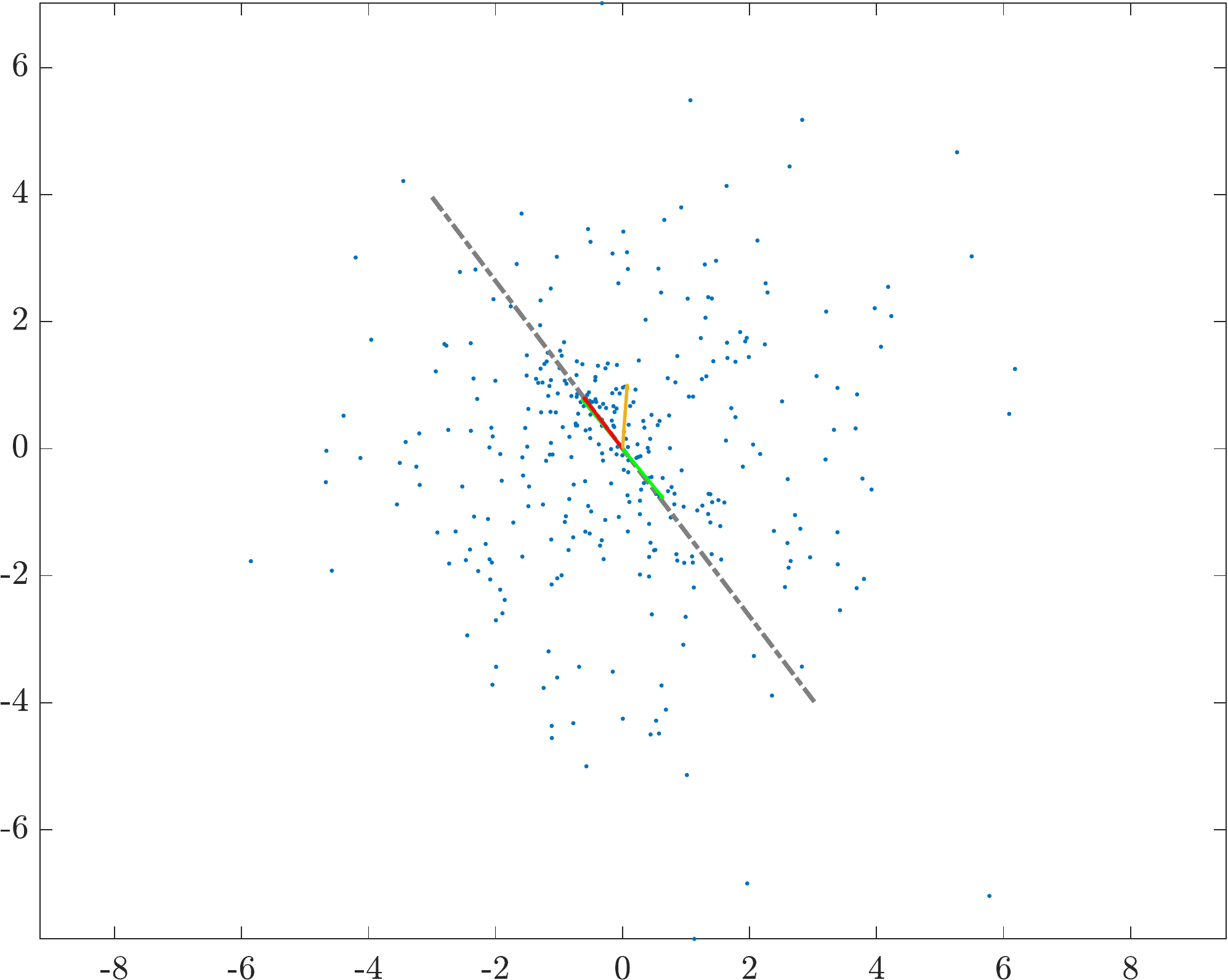}
	\includegraphics[width = 7cm, height = 5cm]{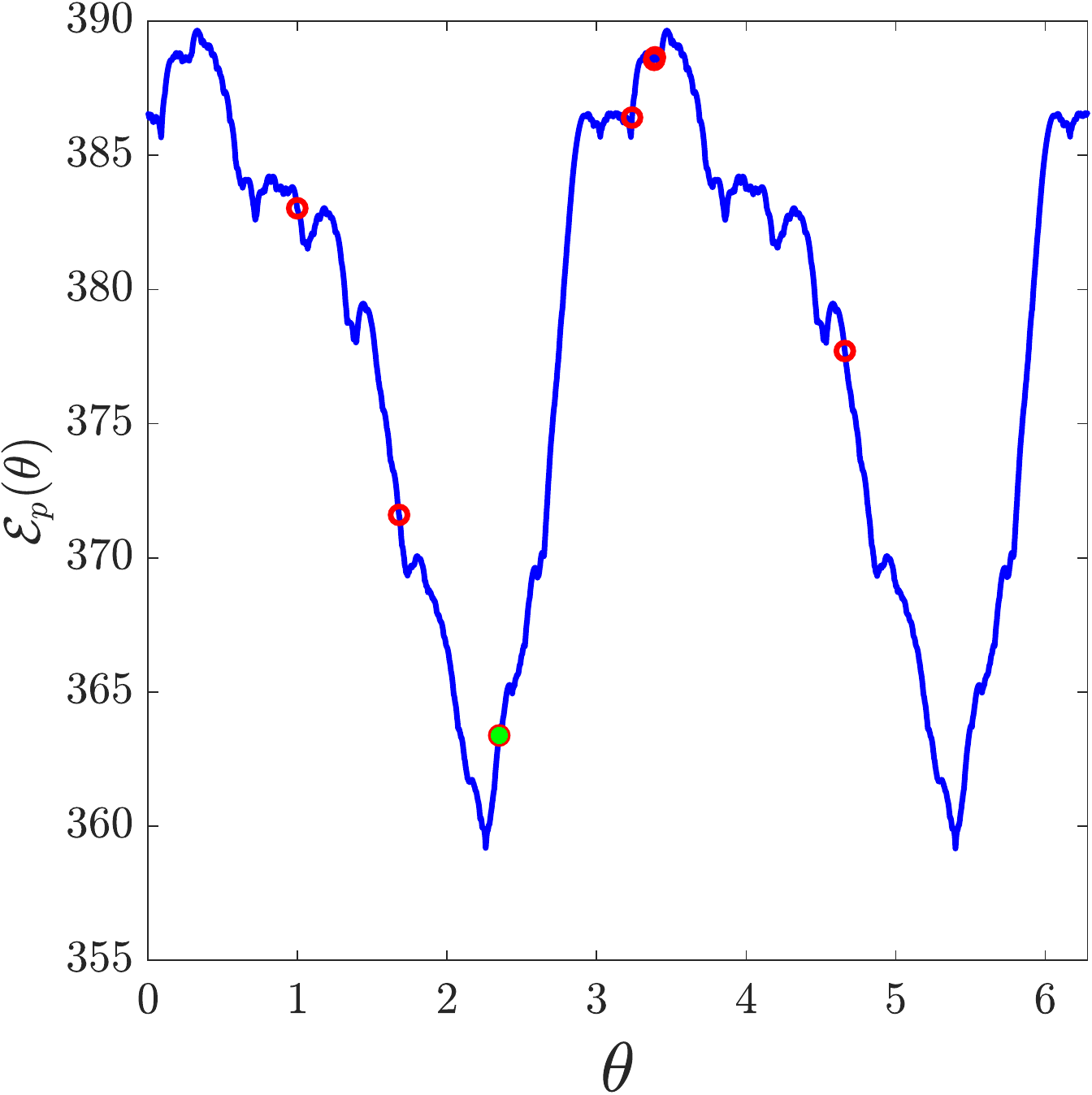}
	\caption{Left: Point cloud in dimension $d=2$ with Gaussian noise of $0.25$ on a one-dimensional subspace with $100$ points and $250$ outliers. The orange/ red vector shows the principal component computed by SVD of the point cloud with/ without the $250$ outliers. The green vector is the principal component computed by KV-CBO. Right: Energy $\EE_p(\theta)$ for $\theta \in [0,2\pi)$ for the point cloud on the left and $p=0.5$. The particles are shown in red, $V_0^{\alpha, \EE}$ is shown in green. \label{point cloud and energy 3}}
\end{figure}

\begin{figure}[H]
	\centering
	\includegraphics[width = 7cm, height = 5cm]{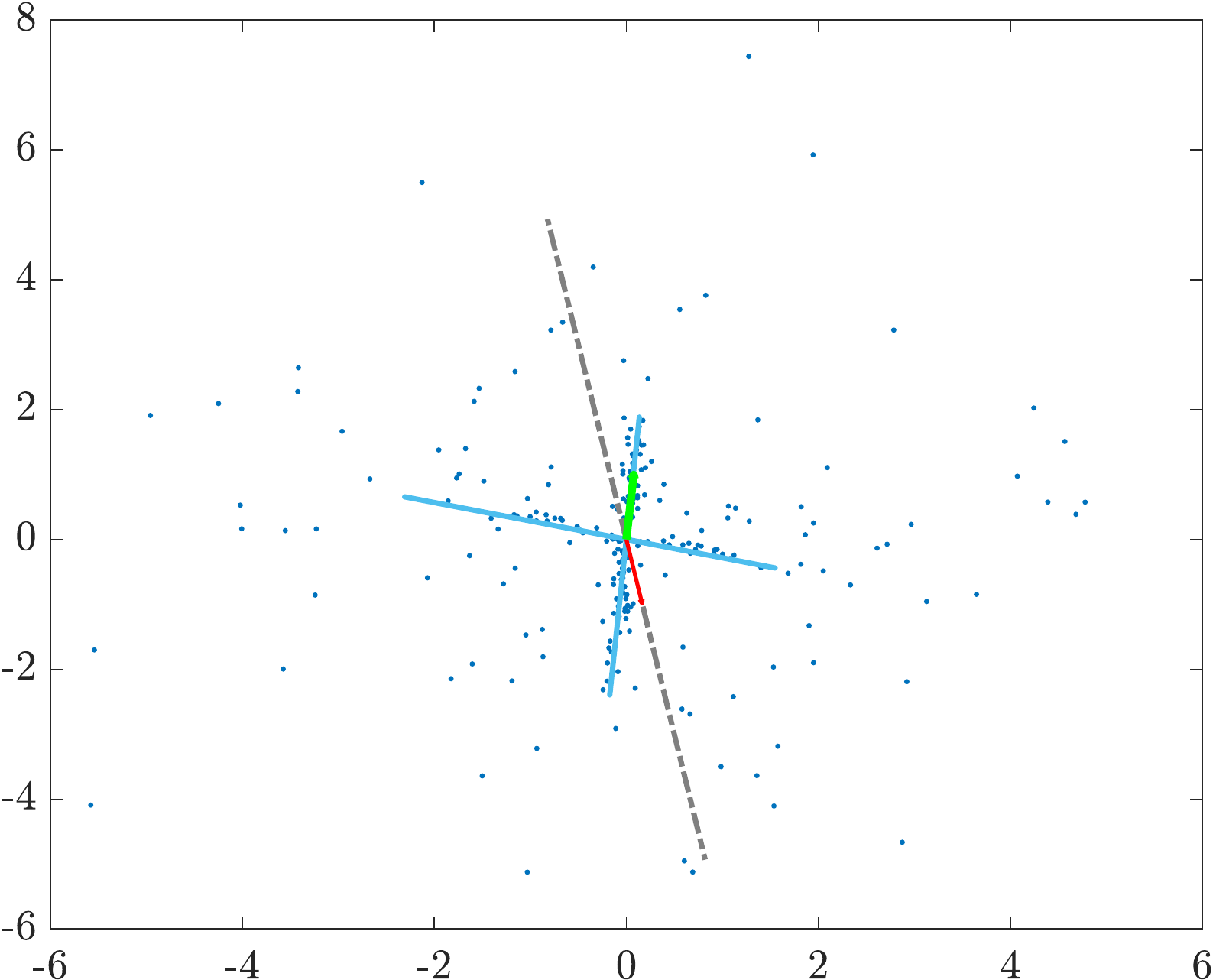}
	\includegraphics[width = 7cm, height = 5cm]{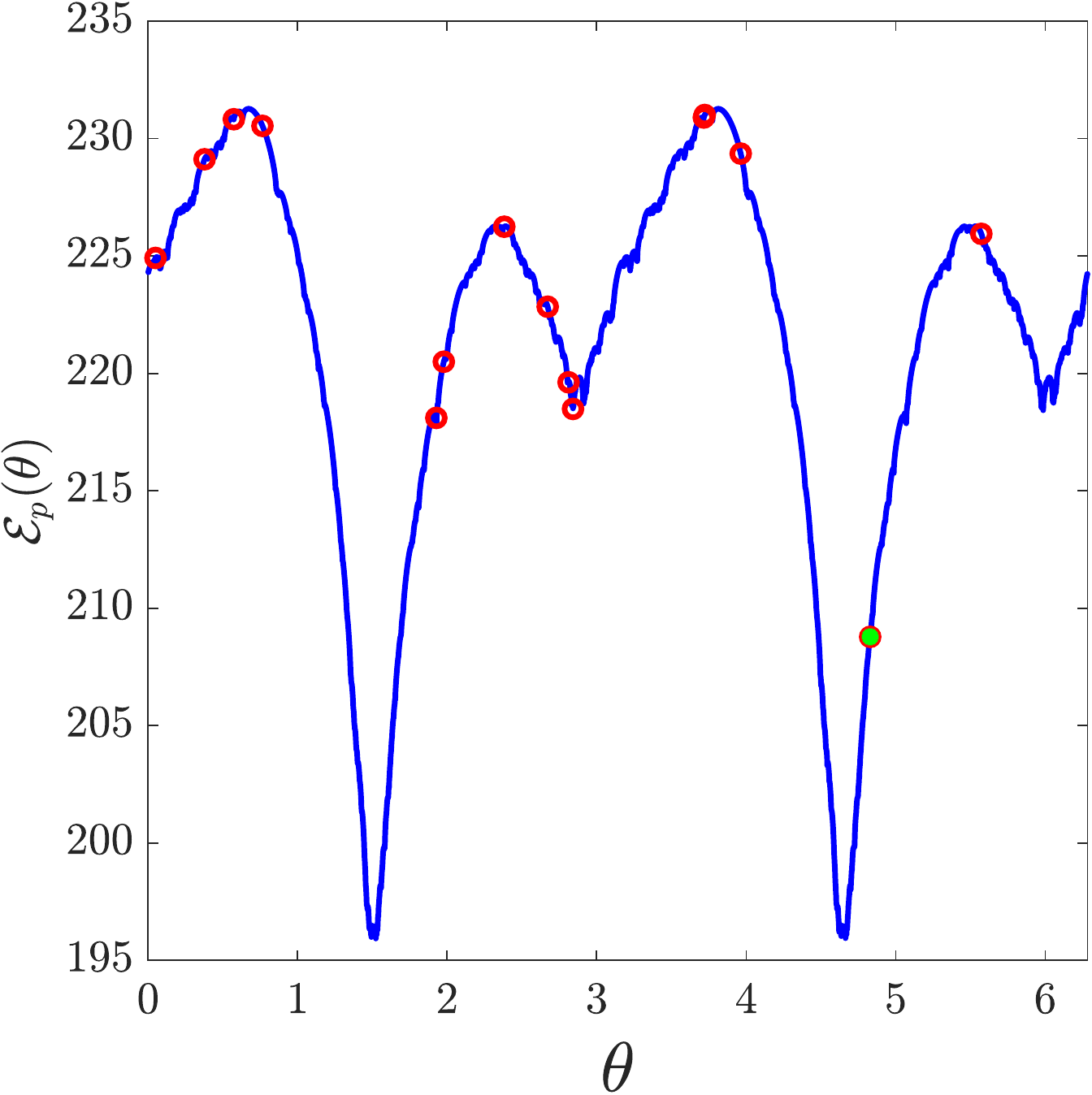}
	\caption{Left: Point cloud with $N_{sp}=2$ one-dimensional subspaces with $100$ points on the first subspace and $40$ points on the second with Gaussian noise of $0.01$. Further, we have added $100$ outliers. The red vector shows the principal direction for the point cloud without the outliers. The green vectors shows the direction computed by KV-CBO. It matches the one-dimensional cluster with $100$ points. Right: Energy $\EE_p(\theta)$ for $\theta \in [0,2\pi)$ and $p=0.2$ for the point cloud on the left. The particles are shown in red.  The initial $V_{0}^{\alpha, \EE}$ is shown in green, superimposing the particle with the smallest energy. \label{point cloud and energy 4}}
\end{figure}

\subsubsection*{Synthetic Data}

In this section we discuss numerical tests for synthetic point clouds in dimensions up to $d=200$ for $p=2$ and $0<p<2$. In  Figures \ref{point cloud and energy 1} to \ref{point cloud and energy 4} we report plots of energies in $d=2$ for different values of $p$. \\

We test the method for point clouds laying on $N_{sp}=25$ nearly parallel one dimensional subspaces and point clouds laying $N_{sp}=25$ randomly chosen subspaces, each with Gaussian noise of $0.01$. The latter point clouds do not have an obvious principal direction, as opposed to the case of nearly parallel subspaces (see Figure \ref{success ratio sigma plot} on the right). In this case a larger number of initial particles is needed to find the minimizer.\\

\noindent
\textbf{Case $p=2$}\\

\noindent
For the case $p=2$ we compare the minimizer $V_{n_T}^{\alpha, \EE}$ computed by KV-CBO  with the minimizer $v^*$ computed by SVD. In Figure \ref{average error easy and difficult} we plot the average error $|V^{ \alpha, \EE}_{n} - v^*|$ for $n=0,...,n_T$ for $25$ runs. 
In the plot on the right we show the success rate for different numbers of particles for different dimensions. We count a run as successful if
\begin{equation}
\min\{|V^{\alpha, \EE}_{n_T} - v^* |, |V^{\alpha, \EE}_{n_T} + v^* |\} \leq 0.01
\end{equation}
where $n_T$ is the final time step. We observe that for  point clouds with nearly parallel one-dimensional subspaces, a very small number of particles already yields good results.  For the point clouds with randomly chosen one-dimensional subspaces, corresponding to a flatter spectrum, the number of particles $N$ has to be chosen larger in order to obtain good results. Still, KV-CBO can certainly be considered an interesting, robust, and efficient alternative method for computing SVD's.\\

\begin{figure}[tb]
	\centering
	\includegraphics[ width = 7cm, height = 5cm]{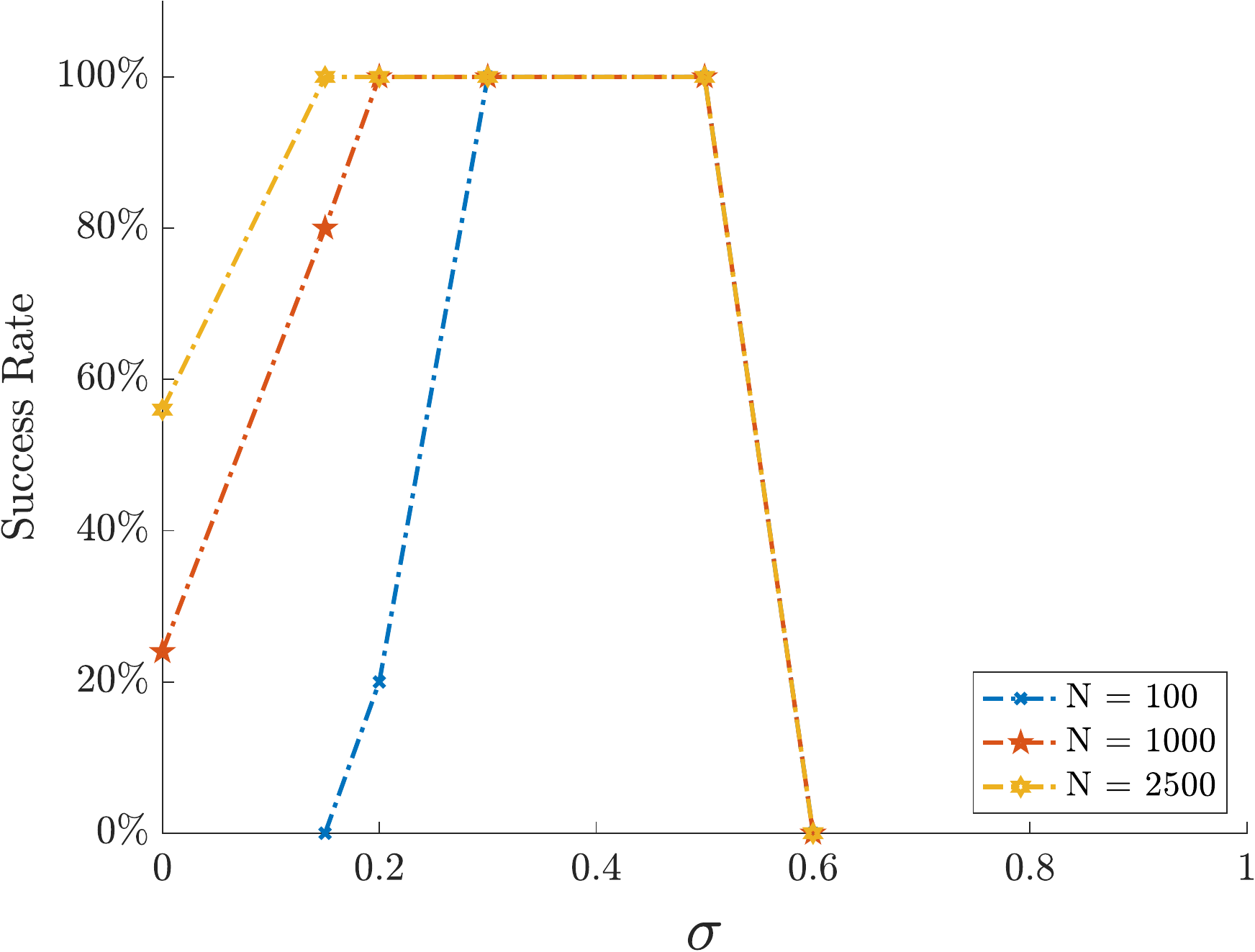}
	\includegraphics[width = 7cm, height = 4.7cm]{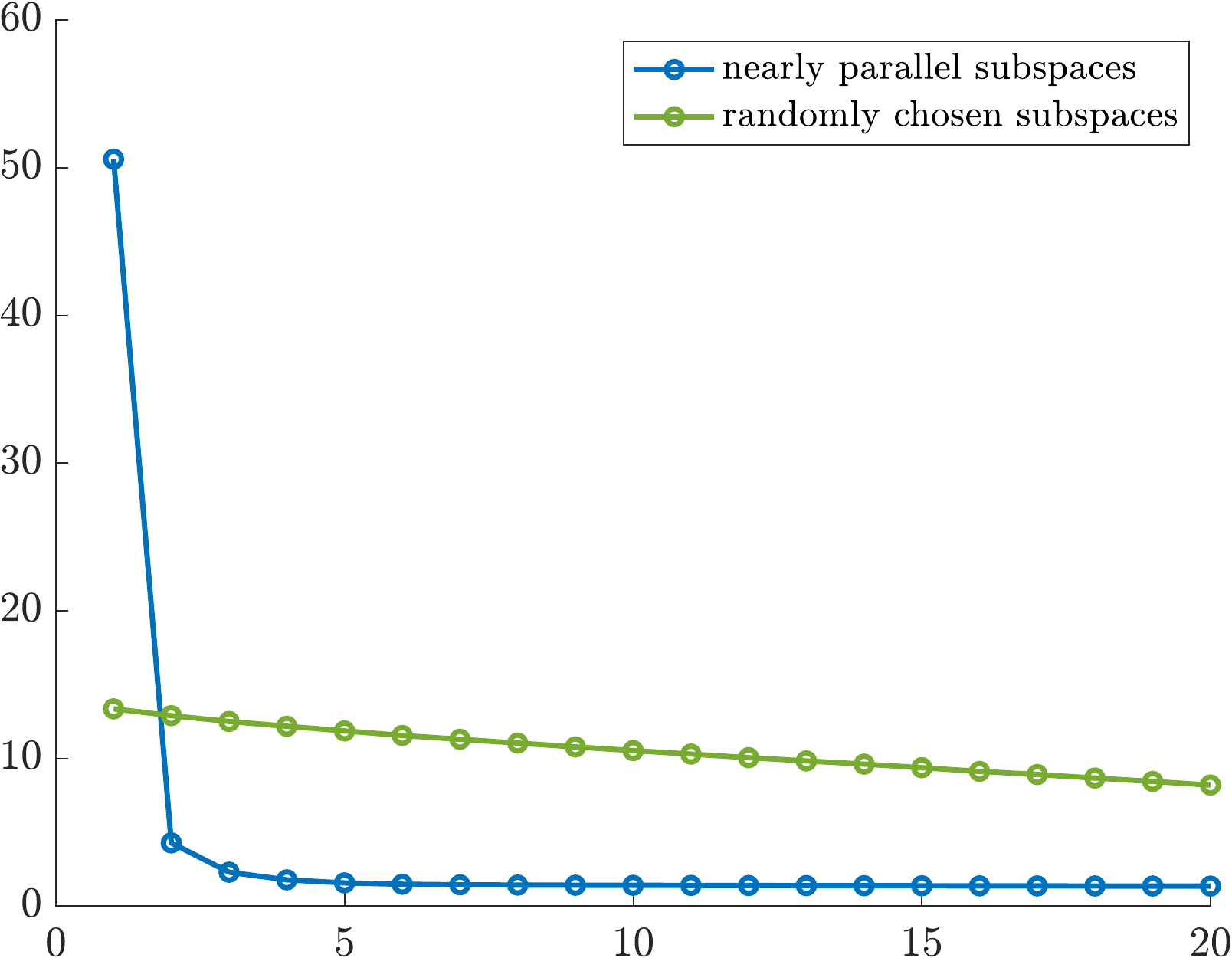}
	\caption{Left: Success rate for different values of $\sigma$ and $N$ in  dimensions $d=10$. Choosing a larger number of particles clearly widens the window from which $\sigma$ can be chosen. Note that for $\sigma = 0$ (deterministic Kuramoto-Vicsek model) we have a success rate of nearly $60\%$ for $N=2500$ particles. Right: Singular value decay of the point cloud with $N_{sp}=25$ nearly parallel (blue) and randomly chosen one-dimensional subspaces (green). \label{success ratio sigma plot}}
\end{figure}

\setlength{\extrarowheight}{0.12cm}
\begin{table}[H]
	\begin{center}
		\begin{tabular}{c|ccccc}
			Dimension & & $d=10$ & $d=100$ & $d=200$ \\
			& & $N_0=1000$ & $N_0=2500$ & $N_0=5000$ \\
			\hline
			\hline
			Relative Error $\leq 10^{-2}$ & Rate & $100\%$ & $100\%$ & $100\%$\\
			\hline
			\hline
			& Rate & $63\%$ & $13\%$ & $0\%$\\
			$\EE_{p, 0}(V_{n_T}^{\alpha, \EE})\leq\EE_{p,0}(V_{FMS})$         & Absolute Error & $2.8413e-12$ & $2.0026e-12$ & $-$ \\
			& Relative Error & $7.6774e-15$ & $5.8669e-15$ & $-$ \\
			\hline
			& Rate & $37\%$ & $87\%$ & $100\%$\\
			$\EE_{p, 0}(V_{n_T}^{\alpha, \EE})>\EE_{p,0}(V_{FMS})$     & Absolute Error & $2.9066e-12$ & $71218e-12$ & $1.3387e-11$ \\
			& Relative Error & $7.9272e-15$ & $2.0745e-14$ & $3.9490e-14$ \\
			\hline
			\hline
		\end{tabular}
		\caption{Numerical comparison of KV-CBO and the FMS method for a point cloud with $N_{sp}=25$ nearly parallel one-dimensional subspaces with Gaussian noise of $0.01$. The relative error is defined in \eqref{sKV vs FMS equally good}. The results are averaged over $100$ runs. \label{Table sKV vs FMS easy}}
	\end{center}
\end{table}%

\setlength{\extrarowheight}{0.12cm}
\begin{table}[H]
	\begin{center}
		\begin{tabular}{c|ccccc}
			Dimension & & $d=10$ & $d=100$ & $d=200$ \\
			& & $N_0=1000$ & $N_0=2500$ & $N_0=5000$ \\
			\hline
			\hline
			Relative Error $\leq 10^{-2}$ & Rate & $100\%$ & $100\%$ & $100\%$\\
			\hline
			\hline
			& Rate & $79\%$ & $14\%$ & $15\%$\\
			$\EE_{p, 0}(V_{n_T}^{\alpha, \EE})\leq\EE_{p,0}(V_{FMS})$         & Absolute Error & $0.0814$ & $4.6205$ & $5.9582$ \\
			& Relative Error & $4.4532e-5$ & $0.0024$ & $0.0031$ \\
			\hline
			& Rate & $21\%$ & $86\%$ & $85\%$\\
			$\EE_{p, 0}(V_{n_T}^{\alpha, \EE})>\EE_{p,0}(V_{FMS})$     & Absolute Error & $0.3312$ & $0.5628$ & $1.4965$ \\
			& Relative Error & $1.8466e-4$ & $2.8669e-4$ & $7.6525e-4$ \\
			\hline
			\hline
		\end{tabular}
		\caption{Numerical comparison of KV-CBO and the FMS method for a point cloud with $N_{sp}=25$ randomly chosen one-dimensional subspaces with Gaussian noise of $0.01$. The relative error is defined in \eqref{sKV vs FMS equally good}. The results are averaged over $100$ runs.\label{Table sKV vs FMS difficult}}
	\end{center}
\end{table}%

\noindent
\textbf{Case $p=1$}\\

\noindent
For $0<p<2$ the energy $\EE_p(v)$ is not smooth enough to fulfill the regularity conditions of Assumptions \ref{assumas} below. In order to fit the experiment to our theoretical findings, we may consider the smoothed energy 
\begin{equation}
\EE_{p,\delta}(v) = \sum_{i=1}^M (|x_i - \langle x_i, v \rangle v|^2 + \delta^2)^{p/2}
\end{equation}
where we chose $\delta = 10^{-7}$ (as $\delta>0$ is chosen so small, it is actually irrelevant from a numerical precision point view). We again test KV-CBO on synthetic point clouds with $N_{sp}=25$ one-dimensional subspaces with $100$ points each, thus $M=2500$. We run the experiment $100$ times in dimension $d\in \{10,100,200\}$ and count one run as successful if the relative error of the function values is less than $1\%$, that is,
\begin{equation}
\frac{|\EE_{p, 0}(V_{n_T}^{\alpha, \EE})-\EE_{p,0}(V_{FMS})|}{\min \{\EE_{p, 0}(V_{n_T}^{\alpha, \EE}), \EE_{p,0}(V_{FMS})\}} \leq 10^{-2} \label{sKV vs FMS equally good},
\end{equation} 
where $V_{FMS}$ denotes the minimum of $\EE_{p,0}(v)$ computed by the FMS method. We note that $V_{n_T}^{\alpha, \EE}$ is the minimizer of the function $\EE_{p,\delta}(v)$ for $\delta \neq 0$ computed by KV-CBO.  We further report the average absolute and relative errors of the function values for the runs for which $\EE_{p, 0}(V_{n_T}^{\alpha, \EE})\leq \EE_{p,0}(V_{FMS})$ as well as $\EE_{p, 0}(V_{n_T}^{\alpha, \EE})>\EE_{p,0}(V_{FMS})$. In the stopping creterium  for KV-CBO \eqref{eq:cons} we chose $\varepsilon = 10^{-10}$, as maximal amount of iterations $n_T=10^4$, and use Algorithm \ref{algo:sKV-CBOfc} to speed up the method. For the FMS method we chose $\varepsilon = 10^{-10}$ and $n_T=10^2$, as FMS method converges to a good minimizer after fewer iterations than KV-CBO.  In Tables \ref{Table sKV vs FMS easy} and \ref{Table sKV vs FMS difficult} we show that \eqref{sKV vs FMS equally good} is fulfilled in $100\%$ of the cases. In other words: KV-CBO  and  state of the art FMS  perform equally good on point clouds with nearly parallel one-dimensional subspaces as well as randomly chosen one-dimensional subspaces. For the former the maximal relative error is in the order of $10^{-14}$ in dimension $d=200$.

\begin{figure}[H]
	\begin{minipage}{0.49\textwidth}
		\centering
		\includegraphics[width = 7cm, height=4cm]{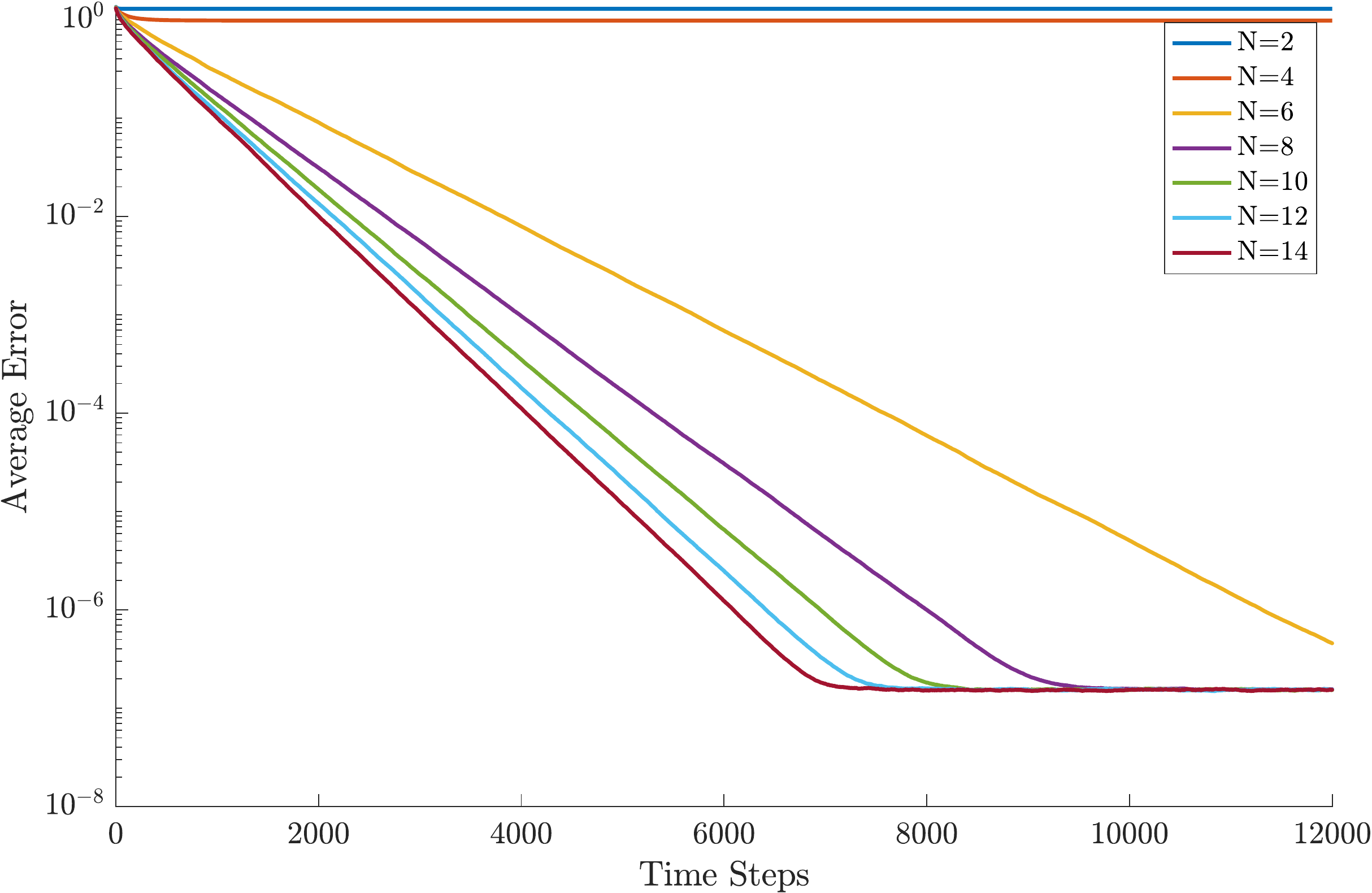}
		\includegraphics[width = 7cm, height=4cm]{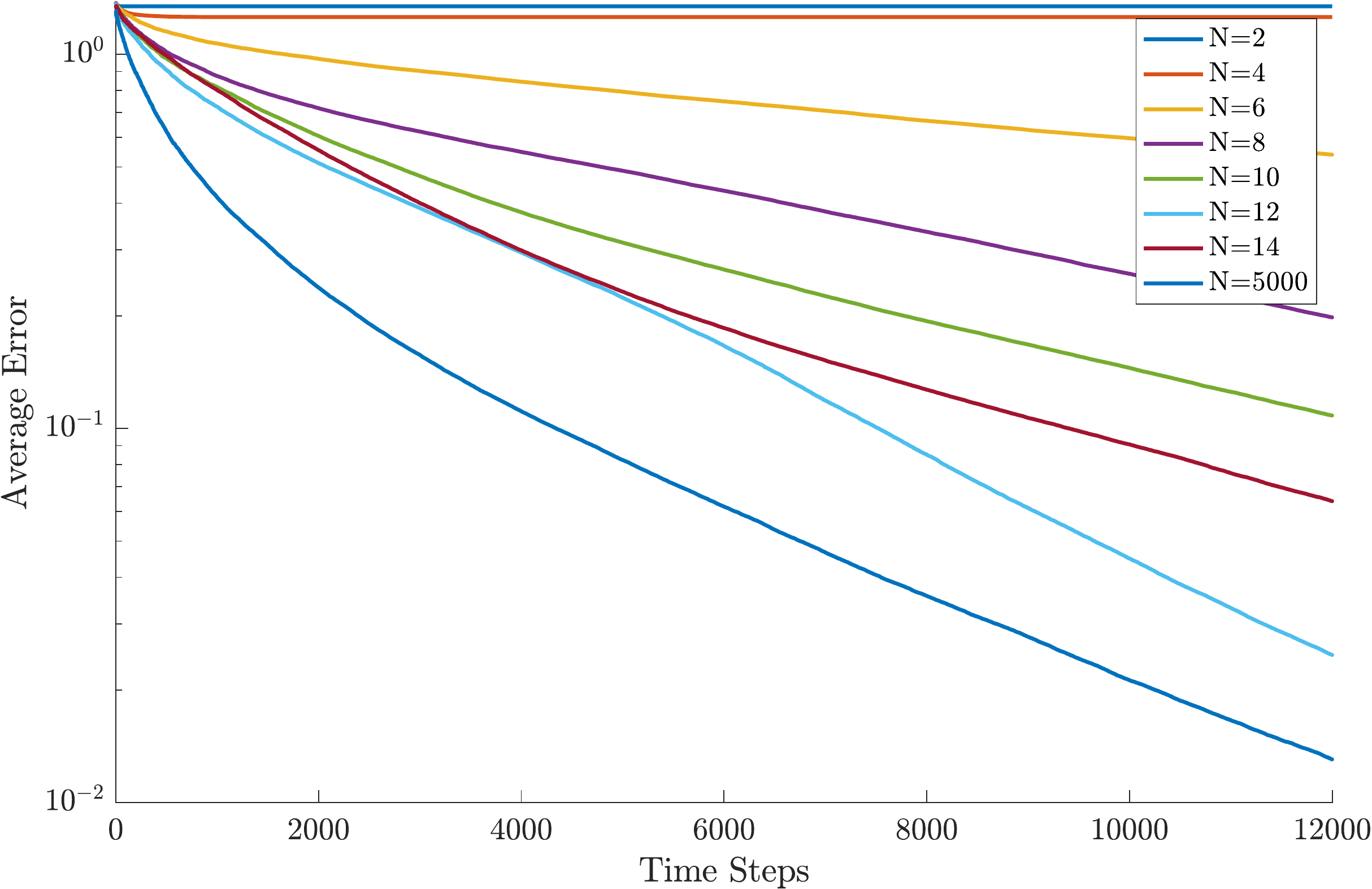}
	\end{minipage}
	\begin{minipage}{0.49\textwidth}
		\centering
		\includegraphics[width = 7cm, height=4cm]{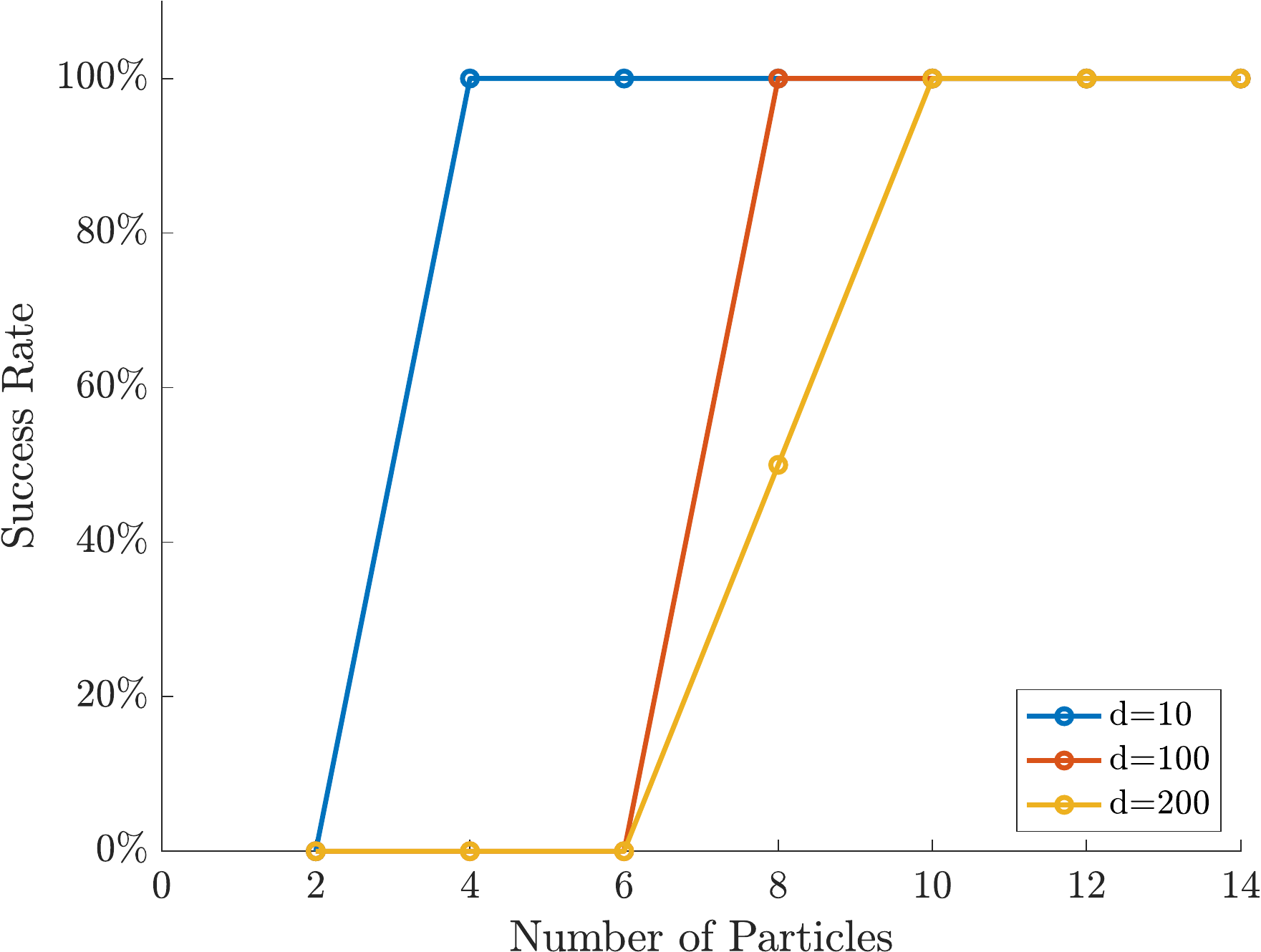}
		\includegraphics[width = 7cm, height=4cm]{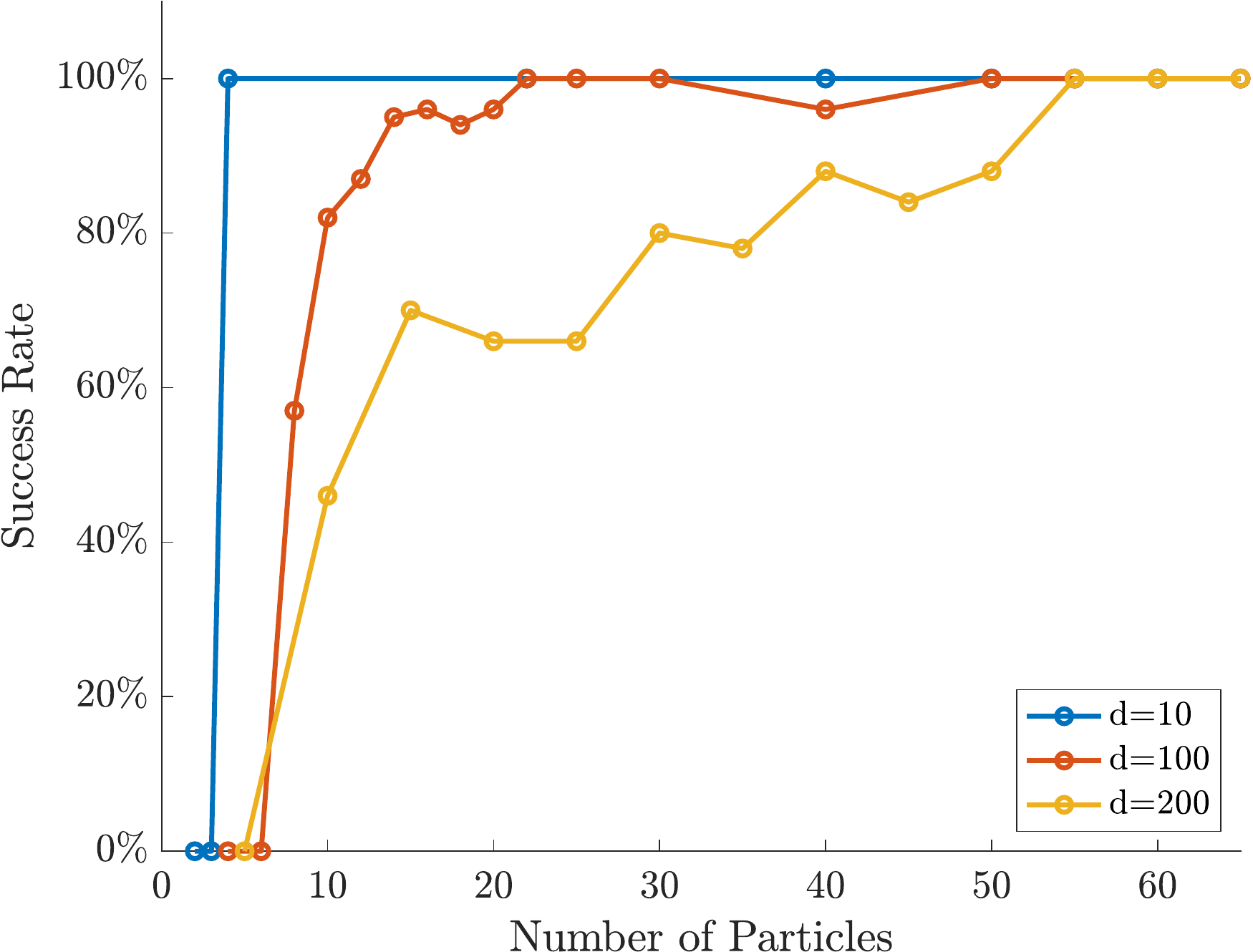}
	\end{minipage}
	\caption{Average error for a point cloud with $N_{sp}=25$ nearly parallel (top left) respectively randomly chosen (bottom left) one-dimensional subspaces in dimension $d=200$ with Gaussian noise of $0.1$ for different numbers of particles. We chose $\alpha = 2\cdot 10^{15}, \sigma = 0.08, \Delta t = 0.25$. The curve for $N = 5000$ particles has been calculated with Algorithm \ref{algo:sKV-CBOfc}. Right: success rate for the same point clouds in dimension $d\in \{10, 100, 200\}$. The results have been averaged 25 times and we count one run as successful if $|V_{n_T}^{\alpha, \EE}-v^* | \leq 0.01$. \label{average error easy and difficult}}
\end{figure}

\subsubsection*{Robust computation of eigenfaces}

In this section we discuss the numerical results of KV-CBO on real-life data. We chose a subset of $M=421$ similar looking pictures of the \textit{10K US Adult Faces Database} \cite{10kUSAdultFaces}. We converted this subset to gray scale images and reduced the size of each picture by factor $4$. We finally extract at a subset of $M=421$ pictures of size $64 \times 45$, which yields a point cloud $X \in \mathbb{R}^{2880\times 421}$. 

\begin{figure}[H]
	\begin{minipage}{0.49\textwidth}
		\includegraphics[height = 2cm]{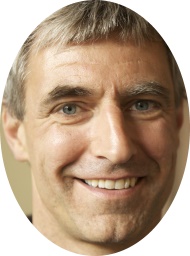}
		\includegraphics[height = 2cm]{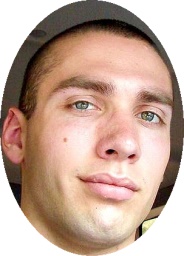}
		\includegraphics[height = 2cm]{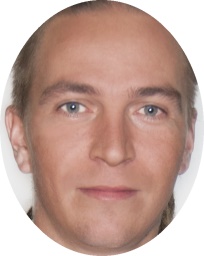}
		\includegraphics[height = 2cm]{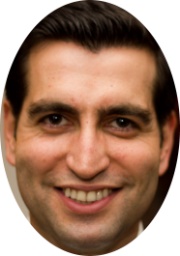}
	\end{minipage}
	\begin{minipage}{0.49\textwidth}
		\includegraphics[height = 2cm]{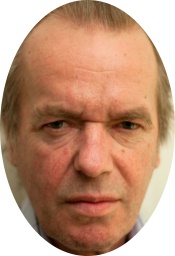}
		\includegraphics[height = 2cm]{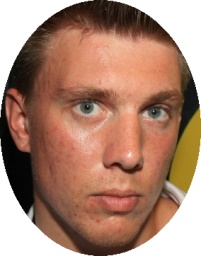}
		\includegraphics[height = 2cm]{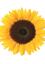}
		\includegraphics[height = 2cm]{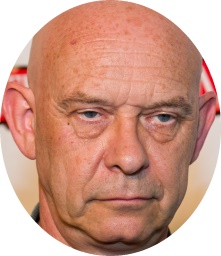}
	\end{minipage}
	\caption{Samples from the \textit{10K US Adult Faces Database} \cite{10kUSAdultFaces} and one instance of outlier.}
\end{figure}

The eigenfaces computed by SVD and KV-CBO are shown in Figure \ref{fig:faces}(a) and Figure  \ref{fig:faces}(b). The computed eigenfaces are visually indistinguishable and  the final error is in the order of $10^{-3}$. We then added $6$ outliers (pictures of different plants and animals on a white background) to the point cloud and again computed the eigenface by SVD (see Figure 14(c)) and by KV-CBO with $p=1$ and $N=5000$ particles (see Figure  \ref{fig:faces}(d)). The eigenface computed by SVD still retain some features, but the difference to the original eigenface (without outliers) is clearly perceivable. Instead, the eigenface computed by the KV-CBO  still looks very similar to the eigenface of the point cloud without outliers. We quantify the accuracy of the results by Peak Signal-to-Noise Ratio (see caption of Figure  \ref{fig:faces}). We then added further $6$ outliers (amounting to a total of $12$ outliers) to the point cloud and again computed the eigenface by SVD (see Figure  \ref{fig:faces}(e)) and  KV-CBO  with $p=0.5$ and $N=50000$ particles (see Figure  \ref{fig:faces}(f)). The difference of both eigenfaces to the original eigenface (without outliers) is clearly visible. The eigenface computed by SVD lost most of the original features. On the other hand, the eigenface computed by KV-CBO still retains the main features. We reiterate that the energy landscape $\EE_{p,\delta}(v)$ is much more complex for $0<p<1$ than for $p\in [1,2]$ (see Figures \ref{point cloud and energy 1} to \ref{point cloud and energy 4}). An increase of the number of particles $N$ did not yield better results.\\

\begin{figure}[H]
	\begin{minipage}{0.32\textwidth}
		\begin{minipage}{0.49\textwidth}
			\centering
			\includegraphics[height = 3cm]{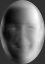}
			(a)
		\end{minipage}
		\begin{minipage}{0.49\textwidth}    \centering
			\includegraphics[height = 3cm]{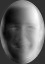}
			{(b)}
		\end{minipage}
	\end{minipage}
	\begin{minipage}{0.32\textwidth}
		\begin{minipage}{0.49\textwidth}
			\centering
			\includegraphics[height = 3cm]{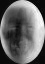}
			{(c)}
		\end{minipage}
		\begin{minipage}{0.49\textwidth}
			\includegraphics[height = 3cm]{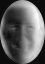}    \centering
			{(d)}
		\end{minipage}
	\end{minipage}
	\begin{minipage}{0.32\textwidth}
		\begin{minipage}{0.49\textwidth}
			\includegraphics[height = 3cm]{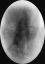}    \centering
			{(e)}
		\end{minipage}
		\begin{minipage}{0.49\textwidth}
			\includegraphics[height = 3cm]{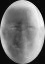}    \centering
			{(f)}
		\end{minipage}
	\end{minipage}
	\caption{Eigenface for the point cloud of faces without outliers computed by SVD (a), and KV-CBO (b). Eigenface for point cloud with 6 outliers computed by SVD (c), and KV-CBO with $p=1$ (d). Eigenface for point cloud with 12 outliers computed by SVD (e), and KV-CBO with $p=0.5$ (f). We used the following parameters: $\alpha = 2\cdot 10^{15}, \lambda = 1, \sigma = 0.019, \Delta t = 0.25, T=25000$, $N = 5000$ and $N_{min}=150$ (see algorithm \ref{algo:sKV-CBOfc}) for (b) and (d). For (f) we used $p=0.5$, $N=50000$ and $N_{min}=5000$. For $p<2$ we used $ \delta = 10^{-7}$. For the case (b) the error to the SVD eigenface was $0.00071$. The Peak Signal-to-Noise Ratio is: 61.4214 for (a) and (b), 15.9764 for (a) and (c), 20.7344 for (a) and (d), 12.3109 for (a) and (e) and 14.2892 for (a) and (f).}\label{fig:faces}
\end{figure}

\begin{figure}[H]
	\centering
	\includegraphics[scale=0.4]{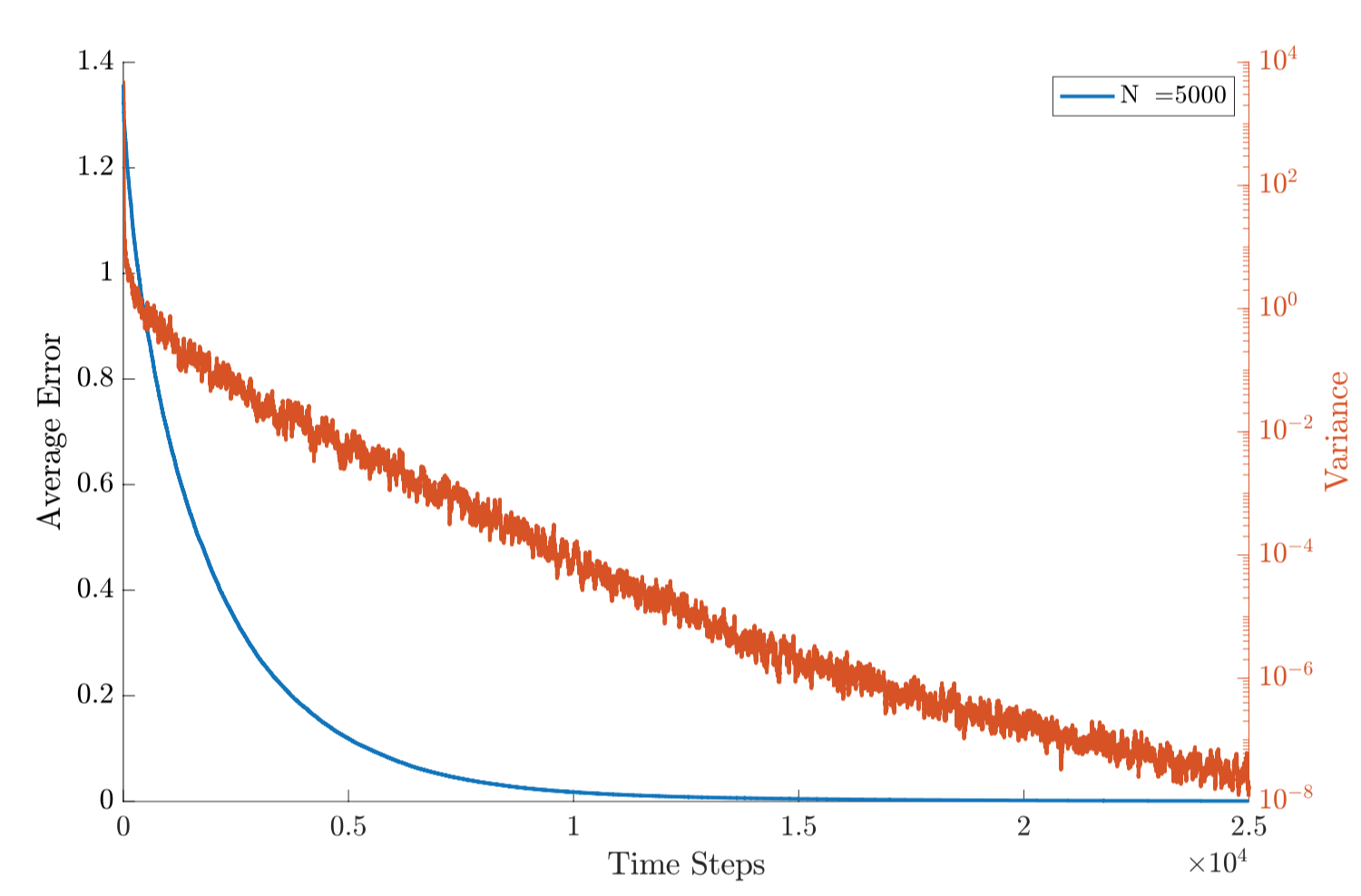}
	\caption{Average error (blue) and variance (red) for the computation of the eigenface (b) in the figure above.}
\end{figure}

\section{Global optimization guarantees}\label{analsec}

\subsection{Main result}
In this section, we address the convergence of the stochastic Kuramoto-Vicsek particle  system \eqref{stochastic Kuramoto-Vicsek} to global minimizers of some cost function $\EE$. In view of the {already} established mean-field limit result \eqref{rateN}, it is actualy sufficient to analyze the large time behavior of the solution $\rho(t,v)$ to the corresponding mean-field PDE \eqref{PDE}. Let us  rewrite \eqref{PDE} as
\begin{equation}\label{PDE2}
\partial_t \rho_t = \Delta_{\BS^{d-1}} (\kappa_t \rho_t) + \nabla_{\BS^{d-1}} \cdot (\eta_t \rho_t)\,,
\end{equation}
where $\kappa_t :=\frac{\sigma^2}{2}|v-v_{\alpha,\EE}(\rho_t)|^2 \in \mathbb{R}$ and $\eta_t:=\lambda \langle v_{\alpha,\EE}(\rho_t),v\rangle v-\lambda v_{\alpha,\EE}(\rho_t)\in \mathbb{R}^d$. We also introduce the auxiliary self-consistent nonlinear SDE  
\begin{align} \label{selfprocess}
d\overline V_t=\lambda  P(\overline V_t)v_{\alpha,\EE} (\rho_t) dt + \sigma |\overline V_t - v_{\alpha,\EE}(\rho_t)  | P(\overline V_t)dB_t-\frac{(d-1)\sigma^2}{2}(\overline V_t-v_{\alpha,\EE} (\rho_t) )^2\frac{\overline V_t}{|\overline V_t|^2}dt\,,
\end{align}
with the initial data $\OV_0$ distributed according to $\rho_0\in\mc{P}(\BS^{d-1})$. Here $\rho_t=\mbox{law}(\overline V_t)$ is also  the unique solution of the PDE \eqref{PDE2}, see \cite[Section 2.3]{fhps20-1}. The well-posedness of \eqref{selfprocess} is shown in \cite[Theorem 2.2]{fhps20-1}.
We now define the expectation and variance of $\rho_t$ as
\begin{equation}
E(\rho_t):=\int_{\BS^{d-1}} v d \rho_t (v) \quad V(\rho_t):=\frac{1}{2}\int_{\BS^{d-1}} |v-E(\rho_t)|^2 d\rho_t (v).
\end{equation}
In the following, we show that, under suitable smoothness requirements, see Assumptions \ref{assumas} below, for any $\epsilon>0$ there exists suitable parameters $\alpha,\lambda,\sigma$ and well-prepared initial distributions $\rho_0$ such that for $T^*>0$ large enough the expected value of the distribution
$E(\rho_{T^*})=\int v d \rho_{T^*}(v)$ is in an $\epsilon$-neightborhood of a global minimizers $v^*$ of $\EE$. The convergence rate is exponential in time and the rate depends on the parameters $\epsilon, \alpha,\lambda,\sigma$ (see Proposition \ref{mainp}). As mentioned in the introduction, this approximation together with \eqref{rateN} and classical results of the convergence of numerical methods for SDE \cite{Platen} yield the convergence of Algorithm \ref{algo:sKV-CBO}. {In particular, 
we shall address the proof of the main result Theorem \ref{mainresult00} and of the quantitative estimate \eqref{mainresultXX} at the end of this section.}\\

In order to formalize the result we state our fundamental assumptions: Throughout this section, the objective function $\EE\in \mathcal{C}^2(\BS^{d-1})$ satisfies the following properties
\begin{assu}\label{assumas}\quad
	\begin{itemize}
		\item[1.] $\EE$ is bounded and $0\leq \underline{\EE}:=\inf \EE \leq \EE \leq  \sup \EE=:\overline \EE < \infty$;
		\item[2.] $\|\nabla \EE\|_\infty\leq c_1$;
		\item[3.] $\max \left\{\|\nabla^2 \EE \|_\infty, \|\Delta \EE \|_\infty\right\} \leq c_2$;
		\item [4.] For any $v \in \BS^{d-1}$ 
		there exists  a minimizer $v^*\in \BS^{d-1}$ of $\EE$ (which may depend on $v$) such that  it holds 
		\begin{equation*}
		|v-v^\ast| \leq  C_0|\EE(v)-\underline \EE|^\beta\,,
		\end{equation*}
		where  $\beta, C_0$ are some positive constants.
	\end{itemize}
\end{assu}
While the assumptions 1.-3. are all automatically fulfilled as soon as smoothness is provided, requirement 4. - which we call {\it inverse continuity assumption} - is a bit more technical and needs to be verified, depending on the specific application. In Section \ref{sec:phaseretr} we provided the concrete example of the phase retrieval problem for which all the conditions are in fact verifiable.
The request of smoothness is exclusively functional to the proof of well-posedness and mean-field limit \cite{fhps20-1} and the proof of asymptotic convergence. As a matter of fact Algorithm \ref{algo:sKV-CBO} and Algorithm \ref{algo:sKV-CBOfc} are implementable even if $\EE$ admits just pointwise evaluations, e.g., $\EE$ is just a continuous function with no further regularity. 
Below we denote $C_{\alpha,\EE}=e^{\alpha(\overline{\EE}-\underline{\EE})}$ and $C_{\sigma,d}=\frac{(d-1)\sigma^2}{2}$. 
\begin{definition}\label{def:wellprep}
	For any given $T>0$ and $\alpha>0$, we say that the initial datum and the parameters are well-prepared if $\rho_0\in \mc{P}_{ac}(\BS^{d-1})\cap L^2(\BS^{d-1})$, and parameters  $\lambda$, $d$, $\beta>0$, $0<\varepsilon\ll 1$,  $0<\delta\ll 1$, $0<\theta<\delta$ satisfy
	\begin{align}
	&C_{\alpha,\EE}^{2\max \{1, \beta\}}\left(V(\rho_0)+\frac{\lambda C_T}{\lambda\theta- 4C_{\alpha,\EE}C_{\sigma,d}}\delta^{\frac{d-2}{4}} \right)^{\frac{1}{2}\min\{1,\beta\}} +\varepsilon^\beta<\frac{\delta-\theta}{C^\ast}\,, \label{wellprep}\\
	&V(\rho_0)+\frac{\lambda C_T}{\lambda\theta- 4C_{\alpha,\EE}C_{\sigma,d}}\delta^{\frac{d-2}{4}}\leq \min\left\{T^{-1}\| \omega_\EE^\alpha\|_{L^1(\rho_0)}^2, T^{-1}\lambda^{-2}\| \omega_\EE^\alpha\|_{L^1(\rho_0)}^4,\frac{3}{8} \right\}
	\end{align}
	and 
	\begin{equation}\label{lamsig}
	\lambda\theta- 4C_{\alpha,\EE}C_{\sigma,d}>0\,,
	\end{equation}
	where $C_T$ is a constant depending only on $\lambda$, $\sigma$, $T$ and $\|\rho_0\|_2$, and  $C^\ast>0$ is a constant depending only on $c_1,\beta, C_0$ ($c_1,\beta,C_0$ are used in Assumption \ref{assumas}).  Both $C_T$ and $C^*$ need to be subsumed from the proof of Proposition \ref{mainp} and they are both dimension independent.
\end{definition}

We shall prove first the following result. 
\begin{theorem}\label{thm:mainresult}
	Let us fix $\varepsilon_1>0$ small and assume that the initial datum $\rho_0$ and parameters $\{d,\beta,\varepsilon, \delta, \theta,\lambda,\sigma\}$ are well-prepared for a time horizon $T^*>0$  and parameter $\alpha^*>0$ large enough. Then $E(\rho_{T^*})$  well approximates a minimizer $v^*$ of $\EE$, and the following quantitative estimate holds 
	\begin{equation}\label{locest}
	\left |E(\rho_{T^*})-v^* \right |\leq  \epsilon,
	\end{equation}
	for
	\begin{equation}
	\epsilon:=C(C_0,c_1,\beta)\left((1+C_{\alpha^*,\EE}^{\beta})\left(\frac{\lambda C_{T^*}}{\lambda\theta- 4C_{\alpha^*,\EE}C_{\sigma,d}}\delta^{\frac{d-2}{4}}+\varepsilon_1\right )^{\min\left \{1,\frac{\beta}{2}\right \}}+\varepsilon^\beta\right)\,.
	\end{equation}
\end{theorem}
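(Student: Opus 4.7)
The plan is to follow the strategy outlined in the proof sketch of Theorem \ref{mainresult00}, reducing the proximity estimate \eqref{locest} to two separate facts: (i) a quantitative link between closeness of $E(\rho_t)$ to a global minimizer and smallness of the variance $V(\rho_t)$, obtained from the inverse continuity assumption and Laplace's principle; and (ii) exponential decay of $V(\rho_t)$ along the PDE \eqref{PDE}. I would first decompose
\[
|E(\rho_{T^*}) - v^*| \;\leq\; \bigl|E(\rho_{T^*}) - E(\rho_{T^*})/|E(\rho_{T^*})|\bigr| + \bigl|E(\rho_{T^*})/|E(\rho_{T^*})| - v^*\bigr|,
\]
and exploit the sphere identity $1 - |E(\rho_t)|^2 = 2V(\rho_t)$ to control the first summand by $V(\rho_{T^*})^{1/2}$. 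For the second summand I would apply Assumption \ref{assumas}(4) with the minimizer $v^*$ associated to the unit vector $E(\rho_t)/|E(\rho_t)|$, which reduces matters to estimating $|\EE(E(\rho_t)/|E(\rho_t)|) - \underline\EE|^\beta$; this in turn is bounded using the smoothness of $\EE$ (Assumption \ref{assumas}(2)-(3)) to compare with $\EE(v_{\alpha,\EE}(\rho_t))$ up to an error $\lesssim C_{\alpha,\EE} V(\rho_t)^{1/2}$, and then using the quantitative Laplace principle \eqref{Laplace} (with accuracy $\varepsilon$) to compare $\EE(v_{\alpha,\EE}(\rho_t))$ with $\underline\EE$. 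These two ingredients produce the factor $C_{\alpha^*,\EE}^\beta V(\rho_{T^*})^{\beta/2}+\varepsilon^\beta$ appearing in the definition of $\epsilon$.

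The core technical task is then to prove exponential decay of $V(\rho_t)$, which will be the subject of a separate proposition (Proposition \ref{mainp}). I would differentiate $V(\rho_t)$ along \eqref{PDE}, integrate by parts using the tangential divergence and Laplace--Beltrami on $\BS^{d-1}$, and arrive at the identity displayed in the sketch of Theorem \ref{mainresult00}. The challenge is to convert this identity into a differential inequality of the form
\[
\tfrac{d}{dt}V(\rho_t) \;\leq\; -\bigl(\lambda\theta - 4C_{\alpha,\EE}C_{\sigma,d}\bigr)V(\rho_t) + \lambda C_T\,\delta^{(d-2)/4}.
\]
The well-preparation condition \eqref{wellprep} will be used precisely to keep $\langle E(\rho_t),v_{\alpha,\EE}\rangle$ and $|v_{\alpha,\EE}|$ close to $1$, so that the leading coercive contribution $-\lambda V(\rho_t)\bigl(\langle E,v_{\alpha,\EE}\rangle + (|v_{\alpha,\EE}|^2+1)/2\bigr)$ is bounded above by $-2\lambda V(\rho_t)$ up to a small loss, yielding the coefficient $\lambda\theta$.

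The main obstacle is the quartic term $\frac{\lambda}{4}\int_{\BS^{d-1}}(E(\rho_t)-v)^2(v-v_{\alpha,\EE})^2\,d\rho_t$, which has no immediate bound by $V(\rho_t)$ because $(v-v_{\alpha,\EE})^2$ need not be small on the support of $\rho_t$. I plan to handle it by splitting the sphere into a geodesic neighbourhood of $v_{\alpha,\EE}/|v_{\alpha,\EE}|$ of radius $\delta^{1/4}$ and its complement. On the neighbourhood the factor $(v-v_{\alpha,\EE})^2$ is $O(\delta^{1/2})$, producing a contribution bounded by $\gamma\lambda V(\rho_t)$ for some $0<\gamma<2$; on the complement I would use the $L^2([0,T];H^1(\BS^{d-1}))$ regularity of $\rho_t$ granted by Theorem \ref{thmregularity} together with Sobolev embedding on the sphere to bound the $\rho_t$-mass of the complement by $\delta^{(d-2)/4}$, producing the inhomogeneous term in the inequality above. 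This is where the dimension-dependent exponent $(d-2)/4$ and the constant $C_T$ of Definition \ref{def:wellprep} enter.

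Having the differential inequality, Gr\"onwall's lemma at time $T^*$ yields
\[
V(\rho_{T^*}) \;\leq\; V(\rho_0)e^{-(\lambda\theta - 4C_{\alpha,\EE}C_{\sigma,d})T^*} + \frac{\lambda C_{T^*}}{\lambda\theta - 4C_{\alpha,\EE}C_{\sigma,d}}\delta^{(d-2)/4}.
\]
Choosing $T^*$ large enough so that the first, exponentially small, term is dominated by the arbitrary $\varepsilon_1>0$ introduced in the statement, and inserting the resulting bound on $V(\rho_{T^*})$ into step (i), delivers exactly \eqref{locest} with $\epsilon$ of the claimed form. The conditions \eqref{wellprep}--\eqref{lamsig} of well-preparation are used throughout to ensure positivity of the effective decay rate $\lambda\theta-4C_{\alpha,\EE}C_{\sigma,d}$ and smallness of $V(\rho_t)$ along the whole trajectory $[0,T^*]$ (so that the linearisations near $|E(\rho_t)|\approx 1$ remain valid); in particular one must verify by a bootstrap/continuation argument that $V(\rho_t)$ stays in the regime where these linearisations hold for all $t\in[0,T^*]$, which follows directly from the first bound in \eqref{wellprep} on $V(\rho_0)$.
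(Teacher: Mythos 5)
Your overall architecture matches the paper's: triangle inequality splitting off $E(\rho_{T^*})/|E(\rho_{T^*})|$, the inverse continuity assumption plus the Laplace principle to convert variance smallness into proximity to $v^*$ (this is Proposition \ref{thmE} in the paper), and a Gr\"onwall argument on $V(\rho_t)$ (Proposition \ref{mainp}) with the self-consistency of the smallness hypotheses checked a posteriori. However, your treatment of the quartic term $\frac{\lambda}{4}\int_{\BS^{d-1}}(E(\rho_t)-v)^2(v-v_{\alpha,\EE})^2\,d\rho_t$ contains a genuine gap. You split the sphere into a geodesic $\delta^{1/4}$-neighbourhood of $v_{\alpha,\EE}/|v_{\alpha,\EE}|$ and its complement, and propose to bound the $\rho_t$-mass of the complement by $\delta^{(d-2)/4}$ via the $L^2$ regularity of $\rho_t$ and the size of the region. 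But the complement of a small neighbourhood is almost the entire sphere: its surface area is $O(1)$, so Cauchy--Schwarz against $\|\rho_t\|_2$ gives only an $O(1)$ bound on its mass, not $O(\delta^{(d-2)/4})$. The only alternative on the complement is the trivial pointwise bound $(v-v_{\alpha,\EE})^2\leq 4$ together with $\int(E(\rho_t)-v)^2\,d\rho_t=2V(\rho_t)$, which yields exactly $2\lambda V(\rho_t)$ --- this cancels the coercive contribution $-2\lambda V(\rho_t)$ and leaves no decay at all. Bounding the mass of the complement by the variance (Chebyshev) runs into the same circularity.

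The paper's splitting is the reverse of yours and this is what makes it work: after replacing $v_{\alpha,\EE}$ by $v^*$ in the quartic factor (at the cost of cross terms controlled by \eqref{comva} and Proposition \ref{thmE}), it isolates the \emph{antipodal} cap $\mc{D}_\delta=\{v:\ -1\leq\langle v,v^*\rangle\leq -1+\delta\}$. Outside $\mc{D}_\delta$ one has the pointwise bound $(v-v^*)^2=2-2\langle v,v^*\rangle\leq 4-2\delta$, which combined with $\int(E(\rho_t)-v)^2\,d\rho_t=2V(\rho_t)$ gives $\lambda(2-\delta)V(\rho_t)$, i.e.\ a strict gain of $\lambda\delta V(\rho_t)$ over the trivial bound; and $\mc{D}_\delta$ itself \emph{is} a small cap, with area $A_\delta\lesssim\delta^{(d-2)/2}$, so there the $L^2$ regularity from Theorem \ref{thmregularity} and Cauchy--Schwarz legitimately produce the inhomogeneous term $\lambda C_T\delta^{(d-2)/4}$. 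You need this geometry (exceptional set = antipode of the minimizer, not complement of a neighbourhood of the consensus point) for the differential inequality to close. A secondary, minor point: the first summand $|E(\rho_{T^*})-E(\rho_{T^*})/|E(\rho_{T^*})||=1-|E(\rho_{T^*})|=2V(\rho_{T^*})/(1+|E(\rho_{T^*})|)$ is actually $O(V(\rho_{T^*}))$, not merely $O(V(\rho_{T^*})^{1/2})$ as you state; the linear bound is what is needed to recover the exponent $\min\{1,\beta/2\}$ in the stated $\epsilon$ when $\beta\geq 2$.
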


\begin{remark}
	The conditions of well-preparation \eqref{wellprep} require that the initial datum $\rho_0$ is both well-concentrated and at the same time $v_{\alpha^*, \EE}(\rho_0)$ already approximates well a global minimizer. Technically this is enforced by requiring that the product $C_{\alpha^*,\EE}^{2\max \{1, \beta\}} V(\rho_0)^{1/2}$ is small for $\alpha^*$ large. Of course, this condition is fulfilled for any initial density $\rho_0$, which is well-concentrated in the near of a global minimizer. Hence, the conditions \eqref{wellprep} of well-preparation of $\rho_0$ may have a locality flavour. However, in the case the function $\EE$ is symmetric, i.e., $\EE(v)= \EE(-v)$ (as it happens in numerous applications, in particular the ones we present in this paper), then the condition is generically/practically satisfied at least for one of the two global minimizers $\pm v^*$, yielding essentially a global result.
\end{remark}

The proof of Theorem \ref{thm:mainresult} is based on showing the monotone decay of the variance $V(\rho_t)$ under the assumption of well-preparation (Definition \ref{def:wellprep}) and simultaneously by using the Laplace principle \eqref{Laplace} and the inverse continuity property 4. of Assumptions \ref{assumas} to derive the quantitative estimate
\begin{equation}\label{estinvcont}
\left|\frac{E(\rho_t)}{|E(\rho_t)|}- v^\ast\right|\leq C(C_0,c_1,\beta)\left((C_{\alpha,\EE})^{\beta}V(\rho_t)^{\frac{\beta}{2}}+\varepsilon^\beta\right).
\end{equation}
The monotone decay of the variance is deduced by computing and estimating explicitly its derivative:
\begin{align*}
\frac{d}{dt}V(\rho_t)&=-\lambda V(\rho_t)\la E(\rho_t),v_{\alpha,\EE}\ra -\frac{\lambda}{2}\frac{v_{\alpha, \EE}^2+1}{2}2V(\rho_t)+\frac{\lambda}{4}\int_{\BS^{d-1}} (E(\rho_t)-v)^2 (v-v_{\alpha, \EE})^2 d\rho_t\\
&\quad+C_{\sigma,d}\int_{\BS^{d-1}}(v-v_{\alpha, \EE})^{2}\la E(\rho_t),v \ra d\rho_t\\
&\leq -\lambda V(\rho_t)\left(\la E(\rho_t),v_{\alpha,\EE}\ra +\frac{v_{\alpha, \EE}^2+1}{2}\right)\\ &\quad+\frac{\lambda}{4}\int_{\BS^{d-1}} (E(\rho_t)-v)^2 (v-v_{\alpha, \EE})^2 d\rho_t+4C_{\alpha,\EE}C_{\sigma,d}V(\rho_t)\,.
\end{align*}
The idea is to balance all the terms on the right-hand side by using the parameters $\lambda,\sigma$ in such a way of obtaining a negative sign. This also requires to show that, as soon as $V(\rho_t)$ is small enough, 
$|E(\rho_t)|\approx \la E,v_{\alpha,\EE}\ra \approx |v_{\alpha, \EE}| \approx 1$ and these estimates are worked out in Lemma \ref{lemv}. For ease of notation, for any vector $v \in \mathbb R^d$ we may write $v^2$ to mean $|v|^2$. 

\subsection{Auxiliary lemmas}
A simple computation yields $2V(\rho_t)=1-E(\rho_t)^2$. In particular,  as soon as $V(\rho_t)$ is small $E(\rho_t)^2 \approx 1$ and below we will silently apply the assignment $\EE(E(\rho_t)):=\EE\left ( \frac{E(\rho_t)}{|E(\rho_t)|} \right)$ by normal extension.
Since  $E(\rho_t)=\mathbb{E}[\OV_t]$, it follows from \eqref{selfprocess} that
\begin{equation}
\frac{d}{dt} E(\rho_t) = -\int_{\BS^{d-1}} \eta_td\rho_t - \int_{\BS^{d-1}}\frac{(d-1)\sigma^2}{2}(v-v_{\alpha, \EE})^2 v d\rho_t.
\end{equation}

In the following lemma, we summarize some useful estimates of $v_{\alpha,\EE}(\rho_t)$, $E(\rho_t)$ and $V(\rho_t)$. Here we recall the definition
\begin{equation}
v_{\alpha,\EE}(\rho_t):=\frac{\int_{\BS^{d-1}} v \omega_\alpha^{\EE}(v) d \rho_t(v)}{\|\omega_\alpha^{\EE}\|_{L^{1}(\rho_t)}}=\frac{\int_{\BS^{d-1}} v e^{-\alpha \EE( v )} d \rho_t(v)}{\|e^{-\alpha \EE}\|_{L^{1}(\rho_t)}}\,.
\end{equation} 
\begin{lemma}\label{lemv}
	Let $v_{\alpha,\EE}(\rho_t)$ be defined as above. It holds that
	\begin{enumerate}
		\item $\int_{\BS^{d-1}} |v-v_{\alpha,\EE}(\rho_t)|^2d\rho_t\leq 4C_{\alpha,\EE}V(\rho_t)$ and $\int_{\BS^{d-1}} |v-v_{\alpha,\EE}(\rho_t)|d\rho_t \leq 2C_{\alpha,\EE}V(\rho_t)^{\frac{1}{2}}$;
		\item  $v_{\alpha,\EE}(\rho_t)^2\geq 1-4C_{\alpha,\EE}^2V(\rho_t)$;
		\item $|v_{\alpha,\EE}(\rho_t)-E(\rho_t)|^2\leq (4C_{\alpha,\EE}^2-2)V(\rho_t)$;
	\end{enumerate}
	where $C_{\alpha,\EE}=e^{\alpha(\overline{\EE}-\underline{\EE})}$.
\end{lemma}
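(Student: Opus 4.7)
The strategy is to exploit the fact that $v_{\alpha,\EE}(\rho_t)$ is precisely the first moment of the weighted probability measure $d\rho_\alpha(v) := \frac{e^{-\alpha\EE(v)}}{\|e^{-\alpha\EE}\|_{L^1(\rho_t)}}\,d\rho_t(v)$, together with the pointwise two-sided bound $e^{-\alpha\overline{\EE}} \leq e^{-\alpha\EE(v)} \leq e^{-\alpha\underline{\EE}}$ which controls the Radon--Nikodym derivative $d\rho_\alpha/d\rho_t$ by the constant $C_{\alpha,\EE}=e^{\alpha(\overline{\EE}-\underline{\EE})}$. All three statements then become essentially one-line consequences of the barycenter characterization of $v_{\alpha,\EE}$ under $\rho_\alpha$, combined with the conversion estimate between the two measures.

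To prove Part 1, I would start from the variational characterization of the barycenter, namely that $v_{\alpha,\EE}$ minimizes $x \mapsto \int |v-x|^2\,d\rho_\alpha(v)$ over $x\in\RR^d$. In particular, $\int |v-v_{\alpha,\EE}|^2 e^{-\alpha\EE}\,d\rho_t \leq \int |v-E(\rho_t)|^2 e^{-\alpha\EE}\,d\rho_t$. Bounding the weight $e^{-\alpha\EE}$ from below by $e^{-\alpha\overline{\EE}}$ on the left and from above by $e^{-\alpha\underline{\EE}}$ on the right yields $\int |v-v_{\alpha,\EE}|^2\,d\rho_t \leq 2 C_{\alpha,\EE} V(\rho_t) \leq 4 C_{\alpha,\EE} V(\rho_t)$. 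The second inequality in Part 1 is then immediate from Cauchy--Schwarz applied to $1\cdot |v-v_{\alpha,\EE}|$.

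For Part 3, the plan is to observe the Pythagorean-type identity $\int |v-v_{\alpha,\EE}|^2\,d\rho_t = 2V(\rho_t) + |E(\rho_t)-v_{\alpha,\EE}|^2$, obtained by writing $v-v_{\alpha,\EE}=(v-E(\rho_t))+(E(\rho_t)-v_{\alpha,\EE})$, expanding the square, and noting that the cross term vanishes because $\int (v-E(\rho_t))\,d\rho_t = 0$. Combining with the bound from Part 1 and with the trivial inequality $C_{\alpha,\EE} \leq C_{\alpha,\EE}^2$ (which holds since $C_{\alpha,\EE}\geq 1$) produces the claimed $|E(\rho_t)-v_{\alpha,\EE}|^2 \leq (4C_{\alpha,\EE}^2-2)V(\rho_t)$.

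Part 2 is reached by applying the same decomposition under $\rho_\alpha$ rather than $\rho_t$: since $v_{\alpha,\EE}$ is the $\rho_\alpha$-mean and $|v|^2=1$ pointwise on $\BS^{d-1}$, one obtains $1-|v_{\alpha,\EE}|^2 = \int |v|^2\,d\rho_\alpha - |v_{\alpha,\EE}|^2 = \int |v-v_{\alpha,\EE}|^2\,d\rho_\alpha$, which can be bounded by exactly the same variational argument as in Part 1, leading to $1-v_{\alpha,\EE}^2 \leq 2C_{\alpha,\EE} V(\rho_t) \leq 4C_{\alpha,\EE}^2 V(\rho_t)$. No real obstacle is expected; the only subtlety is the careful book-keeping in converting integrals between $\rho_\alpha$ and $\rho_t$ through the uniform weight bound $C_{\alpha,\EE}$.
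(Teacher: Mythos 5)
Your proof is correct, and its backbone --- viewing $v_{\alpha,\EE}(\rho_t)$ as the barycenter of the reweighted measure $d\rho_\alpha=\frac{e^{-\alpha\EE}}{\|e^{-\alpha\EE}\|_{L^1(\rho_t)}}d\rho_t$ and controlling the density ratio $d\rho_\alpha/d\rho_t$ by $C_{\alpha,\EE}$ --- is the same as the paper's. The one genuine difference is in Part~1: the paper applies Jensen's inequality to $\bigl|v-\int u\,d\rho_\alpha(u)\bigr|^2$, producing the double integral $\frac{1}{\|\omega_\alpha^{\EE}\|_{L^1(\rho_t)}}\int\!\!\int|v-u|^2 e^{-\alpha\EE(u)}d\rho_t(v)\,d\rho_t(u)$ and symmetrizing to get $4C_{\alpha,\EE}V(\rho_t)$, whereas you invoke the minimality of the barycenter with the comparison point $E(\rho_t)$, which avoids the double integral and yields the sharper constant $2C_{\alpha,\EE}V(\rho_t)$; the same gain propagates to your Part~2 bound $2C_{\alpha,\EE}V(\rho_t)$ versus the paper's $4C_{\alpha,\EE}^2V(\rho_t)$. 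Your Part~3 is the same Pythagorean identity the paper computes by hand (its cross term evaluates to $2E(\rho_t)^2-2=-4V(\rho_t)$, which is exactly the statement that the cross term in your decomposition around $E(\rho_t)$ vanishes). Both routes are elementary and equally rigorous; yours buys marginally better constants, the paper's version is the one whose intermediate bound \eqref{511} (with the explicit factor $e^{-\alpha\underline{\EE}}/\|\omega_\alpha^{\EE}\|_{L^1(\rho_t)}$) is reused later in the proof of Lemma \ref{lemome}.
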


Before proving the key estimate \eqref{estinvcont},  we need a lower bound on the norm of the weights $\|\omega_\EE^\alpha \|_{L^1(\rho_t)}$, which is ensured by the
following auxiliary result.
\begin{lemma}\label{lemome}
	Let $c_1, c_2$ be the constants from the assumptions on $\EE$. Then we have
	\begin{equation}\label{eqL1}
	\frac{d}{dt}\| \omega_\EE^\alpha\|_{L^1(\rho_t)}^2 \geq -b_1(\sigma,d,\alpha,c_1,c_2,\underline{\EE})V(\rho_t)-b_2(d,\alpha,c_1,\underline{\EE})\lambda V(\rho_t)^{\frac{1}{2}} 
	\end{equation}
	with $0\leq b_1,b_2\leq 1$  and $b_1,b_2\to 0$ as $\alpha\to\infty$.
\end{lemma}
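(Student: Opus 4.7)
The plan is to compute the evolution of $F(t) := \|\omega_\EE^\alpha\|_{L^1(\rho_t)} = \int_{\BS^{d-1}} e^{-\alpha \EE(v)}\,d\rho_t$ by testing the Fokker--Planck equation \eqref{PDE2} against the smooth function $\phi(v) := e^{-\alpha \EE(v)}$. Writing $\frac{d}{dt} F^2 = 2 F F'$ and integrating by parts on $\BS^{d-1}$, one obtains
$$F'(t) = \int_{\BS^{d-1}} \kappa_t \, \Delta_{\BS^{d-1}} \phi \, d\rho_t - \int_{\BS^{d-1}} \eta_t \cdot \nabla_{\BS^{d-1}} \phi \, d\rho_t,$$
with $\kappa_t$ and $\eta_t$ as in \eqref{PDE2}; this reduces the task to controlling $|F'(t)|$ from above and then combining with a pointwise bound on $F$.

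Next I would substitute the explicit identities
$$\nabla_{\BS^{d-1}} \phi = -\alpha \phi \, \nabla_{\BS^{d-1}} \EE, \qquad \Delta_{\BS^{d-1}} \phi = \alpha^2 \phi \, |\nabla_{\BS^{d-1}} \EE|^2 - \alpha \phi \, \Delta_{\BS^{d-1}} \EE,$$
and use the uniform bounds $|\nabla_{\BS^{d-1}} \EE| \leq c_1$, $|\Delta_{\BS^{d-1}} \EE| \leq c_2$ from Assumption \ref{assumas} together with $\phi \leq e^{-\alpha \underline{\EE}}$. For the drift term I would further exploit the geometric identity $\langle v_{\alpha,\EE}, v\rangle v - v_{\alpha,\EE} = -P(v)(v_{\alpha,\EE} - v)$, valid on the sphere, which yields $|\eta_t| \leq \lambda\,|v - v_{\alpha,\EE}|$. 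Lemma \ref{lemv} then supplies the two moment estimates $\int |v - v_{\alpha,\EE}|\,d\rho_t \leq 2 C_{\alpha,\EE} V(\rho_t)^{1/2}$ and $\int |v - v_{\alpha,\EE}|^2\,d\rho_t \leq 4 C_{\alpha,\EE} V(\rho_t)$, producing an inequality of the shape
$$|F'(t)| \leq 2\lambda \alpha c_1 \, e^{-\alpha \underline{\EE}} C_{\alpha,\EE} \, V(\rho_t)^{1/2} + 2 \sigma^2 \bigl(\alpha^2 c_1^2 + \alpha c_2\bigr) e^{-\alpha \underline{\EE}} C_{\alpha,\EE} \, V(\rho_t).$$
Multiplying by $2F(t) \leq 2 e^{-\alpha \underline{\EE}}$ in the inequality $\frac{d}{dt} F^2 \geq -2 F |F'|$ then delivers precisely the advertised structure \eqref{eqL1}, with explicit constants $b_1, b_2$ identified from the two terms on the right-hand side.

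The main obstacle will be verifying the qualitative statements $0 \leq b_i \leq 1$ and $b_i \to 0$ as $\alpha \to \infty$, since the naive estimate just sketched is quadratic in $\alpha$ and carries the factor $e^{-2\alpha \underline{\EE}} C_{\alpha,\EE} = e^{-\alpha(3\underline{\EE} - \overline{\EE})}$, which need not decay. To recover the stated behavior one must refine the pointwise upper bound on $F$ by instead using a quantitative control on the tilted measure $e^{-\alpha \EE} d\rho_t / F(t)$, which concentrates on the minimizers of $\EE$, or alternatively rescale/normalize the coefficients so that the growing polynomial factors in $\alpha$ are absorbed into the variance prefactor. This delicate interplay between the peaking of $\phi$ near the global minimizer and the explosion of $\Delta_{\BS^{d-1}} \phi$ is the real technical crux; the rest of the argument is a routine integration by parts.
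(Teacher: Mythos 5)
Your overall strategy is the same as the paper's: test the Fokker--Planck equation against $\phi=e^{-\alpha\EE}$, bound the diffusion contribution via $\int|v-v_{\alpha,\EE}|^2\,d\rho_t$ and the drift contribution via $\int|v-v_{\alpha,\EE}|\,d\rho_t$ using Lemma \ref{lemv}, and read off $b_1$ and $b_2$. But there is a genuine gap at exactly the point you flag as ``the real technical crux'': with the constants you carry, namely $e^{-2\alpha\underline\EE}C_{\alpha,\EE}=e^{\alpha(\overline\EE-3\underline\EE)}$ times polynomials in $\alpha$, the conclusion $b_i\to 0$ is simply not reachable, and neither of the remedies you sketch (concentration of the tilted measure, or rescaling) is what is needed.

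The missing idea is a cancellation, and it is the entire reason the lemma is stated for the \emph{squared} norm. In the proof of Lemma \ref{lemv} the Jensen-type estimates appear first in the un-simplified form
\begin{equation*}
\int_{\BS^{d-1}}|v-v_{\alpha,\EE}(\rho_t)|^2\,d\rho_t\leq 4\,\frac{e^{-\alpha\underline\EE}}{\|\omega_\alpha^\EE\|_{L^1(\rho_t)}}\,V(\rho_t),
\qquad
\int_{\BS^{d-1}}|v-v_{\alpha,\EE}(\rho_t)|\,d\rho_t\leq 2\,\frac{e^{-\alpha\underline\EE}}{\|\omega_\alpha^\EE\|_{L^1(\rho_t)}}\,V(\rho_t)^{\frac12},
\end{equation*}
i.e.\ with the \emph{time-dependent} quantity $\|\omega_\alpha^\EE\|_{L^1(\rho_t)}$ in the denominator; only afterwards is this denominator bounded below by $e^{-\alpha\overline\EE}$ to produce the factor $C_{\alpha,\EE}$ you used. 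You should instead keep the denominator as is: then $F'(t)\geq -\text{(const)}\,e^{-2\alpha\underline\EE}\bigl(\sigma^2\alpha(\,\cdot\,)V(\rho_t)+\lambda\alpha c_1 V(\rho_t)^{1/2}\bigr)/\|\omega_\alpha^\EE\|_{L^1(\rho_t)}$, and when you write $\tfrac{d}{dt}F^2=2FF'$ the prefactor $F=\|\omega_\alpha^\EE\|_{L^1(\rho_t)}$ cancels the denominator exactly. This leaves $b_1\sim\sigma^2\alpha e^{-2\alpha\underline\EE}\bigl(2c_2+(d-1)c_1+\alpha c_1^2\bigr)$ and $b_2\sim\alpha c_1 e^{-2\alpha\underline\EE}$, which tend to zero as $\alpha\to\infty$ (for $\underline\EE>0$) because the polynomial growth in $\alpha$ is killed by $e^{-2\alpha\underline\EE}$ with no compensating $C_{\alpha,\EE}$. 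Two minor remarks: bounding $F'$ from below directly (rather than via $-|F'|$) lets you discard the favorable positive term $\alpha^2\phi|\nabla\EE|^2$ in $\Delta_{\BS^{d-1}}\phi$ for free, and your projection identity $|\eta_t|\leq\lambda|v-v_{\alpha,\EE}|$ is fine --- in fact a factor $2$ sharper than the triangle-inequality route the paper takes.
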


\subsection{Proof of the large time asymptotic result}

\begin{proposition}\label{thmE}For any fixed $T>0$, assume that 
	$$\overline{\mc{V}}_T:=\sup\limits_{0\leq t\leq T}V(\rho_t)\leq \min\left\{T^{-1}\| \omega_\EE^\alpha\|_{L^1(\rho_0)}^2, T^{-1}\lambda^{-2}\| \omega_\EE^\alpha\|_{L^1(\rho_0)}^4,\frac{3}{8} \right\}\,.$$
	Then for any $\varepsilon>0$, there exists a minimizer $v^*$ of $\EE$ such that 
	\begin{equation}
	\left|\frac{E(\rho_t)}{|E(\rho_t)|}- v^\ast\right|\leq C(C_0,c_1,\beta)\left((C_{\alpha,\EE})^{\beta}V(\rho_t)^{\frac{\beta}{2}}+\varepsilon^\beta\right)\quad \mbox{ for all }t\in[0,T]
	\end{equation}
	holds for any $\alpha>\alpha_0$ with some $\alpha_0\gg1$, where $C_{\alpha,\EE}=e^{\alpha(\overline \EE-\underline \EE)}$, and $C_0$, $c_1$, $\beta$ are used in Assumption \ref{assumas}. Moreover, as soon as $|E(\rho_t)| \geq 1/2$
	\begin{equation}\label{comva}
	\left|v_{\alpha,\EE}(\rho_t)-\frac{E(\rho_t)}{|E(\rho_t)|}\right|^2\leq  \left (8C_{\alpha,\EE}^2-\frac{4}{3} \right)V(\rho_t)\,.
	\end{equation}
\end{proposition}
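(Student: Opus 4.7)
I would treat the two claims separately. For the second inequality \eqref{comva}, the key is the identity $2V(\rho_t) = 1 - |E(\rho_t)|^2$, from which the standing assumption $V(\rho_t)\le 3/8$ forces $|E(\rho_t)|\ge 1/2$, so that $E(\rho_t)/|E(\rho_t)|$ is well defined. I would then split
\[
\big|v_{\alpha,\EE}(\rho_t)-E(\rho_t)/|E(\rho_t)|\big|^{2} \le 2\big|v_{\alpha,\EE}(\rho_t)-E(\rho_t)\big|^{2} + 2\big(1-|E(\rho_t)|\big)^{2},
\]
bound the first summand by $(4C_{\alpha,\EE}^2-2)V(\rho_t)$ via Lemma~\ref{lemv}(3), and use the identity $(1-|E|)(1+|E|)=2V$ together with $|E|\ge 1/2$ to control $(1-|E(\rho_t)|)^2$ by a constant multiple of $V(\rho_t)$, which closes the estimate with the stated constants.

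For the main inequality the natural starting point is the inverse continuity Assumption~\ref{assumas}(4) applied to $\hat E := E(\rho_t)/|E(\rho_t)|$: letting $v^*$ denote the associated minimizer,
\[
|\hat E - v^*|\le C_0\,|\EE(\hat E)-\underline\EE|^{\beta}.
\]
By the elementary inequality $(a+b)^\beta\le 2^{\max(\beta-1,0)}(a^\beta+b^\beta)$, the task reduces to showing a bound of the form $|\EE(\hat E)-\underline\EE|\lesssim C_{\alpha,\EE}\,V(\rho_t)^{1/2}+\varepsilon$. Setting $v_{\alpha,\EE}^{\BS}:= v_{\alpha,\EE}(\rho_t)/|v_{\alpha,\EE}(\rho_t)|$, I would decompose $|\EE(\hat E)-\underline\EE|\le |\EE(\hat E)-\EE(v_{\alpha,\EE}^{\BS})|+|\EE(v_{\alpha,\EE}^{\BS})-\underline\EE|$. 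The first piece is handled by Lipschitz continuity of $\EE$, the just-proven \eqref{comva}, and the inequality $|v_{\alpha,\EE}(\rho_t)-v_{\alpha,\EE}^{\BS}|\le \sqrt{1-|v_{\alpha,\EE}(\rho_t)|^2}\le 2C_{\alpha,\EE}V(\rho_t)^{1/2}$ coming from Lemma~\ref{lemv}(2); altogether this term contributes an amount of order $C_{\alpha,\EE}V(\rho_t)^{1/2}$.

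The remaining piece $|\EE(v_{\alpha,\EE}^{\BS})-\underline\EE|$ is where the quantitative Laplace principle enters and is the main technical obstacle. Introducing the weighted probability $d\tilde\rho_t := e^{-\alpha\EE}\,\|e^{-\alpha\EE}\|_{L^1(\rho_t)}^{-1}\,d\rho_t$ and integrating the pointwise Lipschitz bound $\EE(v_{\alpha,\EE}^{\BS})\le \EE(v)+c_1|v-v_{\alpha,\EE}^{\BS}|$ against $\tilde\rho_t$ gives
\[
\EE(v_{\alpha,\EE}^{\BS})-\underline\EE \le \int(\EE-\underline\EE)\,d\tilde\rho_t + c_1\int|v-v_{\alpha,\EE}^{\BS}|\,d\tilde\rho_t.
\]
The second summand is dominated by $C_{\alpha,\EE}V(\rho_t)^{1/2}$ via Cauchy--Schwarz and the identity $\int|v-v_{\alpha,\EE}|^2\,d\tilde\rho_t = 1-|v_{\alpha,\EE}|^2$ combined with Lemma~\ref{lemv}(2). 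For the first, splitting on the sublevel set $A_r:=\{v\in\BS^{d-1}:\EE(v)\le \underline\EE+r\}$ and exploiting $e^{-\alpha(\EE-\underline\EE)}\le e^{-\alpha r}$ off $A_r$ produces
\[
\int(\EE-\underline\EE)\,d\tilde\rho_t \le r + \frac{(\overline\EE-\underline\EE)\,e^{-\alpha r}}{e^{\alpha\underline\EE}\,\|e^{-\alpha\EE}\|_{L^1(\rho_t)}}.
\]
The crucial technical ingredient is a strictly positive lower bound on $\|e^{-\alpha\EE}\|_{L^1(\rho_t)}$ uniform in $t\in[0,T]$; this is delivered by Lemma~\ref{lemome} together with the standing hypotheses $\overline{\mc{V}}_T\le T^{-1}\|\omega_\EE^\alpha\|_{L^1(\rho_0)}^2$ and $\overline{\mc{V}}_T\le T^{-1}\lambda^{-2}\|\omega_\EE^\alpha\|_{L^1(\rho_0)}^4$ through a Gronwall-type integration, combined with the fact that absolute continuity of $\rho_0$ makes $\|e^{-\alpha(\EE-\underline\EE)}\|_{L^1(\rho_0)}$ controllable from below independently of $\alpha$ up to an arbitrarily small exponential factor. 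Choosing $r=\varepsilon/2$ and then $\alpha\ge \alpha_0$ large enough that the residual exponential term falls below $\varepsilon/2$ closes the chain, and the $(a+b)^\beta$ inequality then delivers the claim with the required constants.
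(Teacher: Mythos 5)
Your argument is correct, and it reaches the stated bounds by a genuinely different route than the paper on both halves. For \eqref{comva} the paper expands $\int_{\BS^{d-1}}|v_{\alpha,\EE}(\rho_t)-v+v-E(\rho_t)/|E(\rho_t)||^2\,d\rho_t$ exactly and controls the cross term through the identity $\la v_{\alpha,\EE},E\ra=\tfrac12(v_{\alpha,\EE}^2+E^2-|v_{\alpha,\EE}-E|^2)$; your cruder triangle-inequality splitting costs a factor $2$ on the term coming from Lemma \ref{lemv}(3) but recovers it because $(1-|E(\rho_t)|)^2=\bigl(2V(\rho_t)/(1+|E(\rho_t)|)\bigr)^2$ is quadratic in $V(\rho_t)$, so with $V(\rho_t)\le 3/8$ you land at $(8C_{\alpha,\EE}^2-8/3)V(\rho_t)$, slightly sharper than claimed. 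For the main estimate the paper never passes through the normalized consensus point: it bounds $\EE(E(\rho_t)/|E(\rho_t)|)-\underline\EE$ by comparing both quantities to $-\tfrac1\alpha\log\|\omega_\EE^\alpha\|_{L^1(\rho_t)}$, using the qualitative Laplace principle at $\rho_0$ together with the Lemma \ref{lemome} transfer on one side, and a $W_1$-dual Lipschitz estimate on the weight $e^{-\alpha\EE}$ (Lipschitz constant $\alpha c_1 e^{-\alpha\underline\EE}$, absorbed into $C_{\alpha,\EE}$) on the other. Your tilted-measure, sublevel-set argument is instead a quantitative Laplace principle and yields a more explicit dependence of $\alpha_0$ on $\varepsilon$; it requires exactly the same two external inputs as the paper's route, namely the lower bound on $\|\omega_\EE^\alpha\|_{L^1(\rho_t)}$ from Lemma \ref{lemome} combined with the standing variance hypotheses, and positivity of the $\rho_0$-mass of every sublevel set of $\EE$, which is equally implicit in the paper's appeal to \eqref{Laplace}. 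The one point you should make explicit is the well-definedness of $v_{\alpha,\EE}^{\BS}$: Lemma \ref{lemv}(2) only guarantees $|v_{\alpha,\EE}(\rho_t)|>0$ when $4C_{\alpha,\EE}^2V(\rho_t)<1$, but in the complementary regime the right-hand side of the target inequality is bounded below by a fixed positive constant while the left-hand side is at most $2$, so the claim is vacuous there and you may assume $4C_{\alpha,\EE}^2V(\rho_t)<1$ without loss of generality.
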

As it is needed in the proof of this proposition, for readers' convenience, we give a brief introduction of the Wasserstein metric in the following definition, we refer, e.g., to \cite{ambrosio2008gradient} for more details. 
\begin{definition}[Wasserstein Metric] For any $1\leq p < \infty$,
	let $\mc{P}_p(\RR^{d})$ be the space of Borel probability measures on $\RR^{d}$ with finite $p$ moment. We equip this space with the Wasserstein distance 
	\begin{equation}
	W_p^{p}(\mu, \nu):=\inf\left\{\int_{\RR^d\times \RR^d} |z-\hat{z}|^{p}\ d\pi(\mu, \nu)\ \big| \ \pi \in \Pi(\mu, \nu)\right\}
	\end{equation}
	where $\Pi(\mu, \nu)$ denotes the collection of all Borel probability measures on $\RR^d\times \RR^d$ with marginals $\mu$ and $\nu$ in the first and second component respectively. If $\mu,\nu \in \mc{P}(\RR^{d})$ have bounded support, then the $1$-Wasserstein distance can be equivalently expressed in terms of the dual formulation
	\begin{equation}
	W_1(\mu, \nu):=\sup\left\{\int_{\RR^d} f(v) d(\mu- \nu)(v) | f \in \operatorname{Lip}(\RR^d), \operatorname{Lip}(f) \leq 1  \right\}
	\end{equation}
\end{definition}
\begin{proof}({\bf Proposition \ref{thmE}})
	It follows from Lemma \ref{lemome} that
	\begin{align*}
	\| \omega_\EE^\alpha\|_{L^1(\rho_t)}^2 &\geq \| \omega_\EE^\alpha\|_{L^1(\rho_0)}^2-b_1(\alpha)\int_0^tV(\rho_s)ds-b_2(\alpha)\lambda\int_0^tV(\rho_s)^{\frac{1}{2}}ds\\
	&\geq  \| \omega_\EE^\alpha\|_{L^1(\rho_0)}^2-b_1(\alpha)\overline{\mc{V}}_TT-b_2(\alpha)\lambda\overline{\mc{V}}_T^{\frac{1}{2}}T\\
	&\geq   \| \omega_\EE^\alpha\|_{L^1(\rho_0)}^2-b_1(\alpha)\| \omega_\EE^\alpha\|_{L^1(\rho_0)}^2-b_2(\alpha)\| \omega_\EE^\alpha\|_{L^1(\rho_0)}^2\,,
	\end{align*}
	where we have used the assumption  
	\begin{equation}
	\overline{\mc{V}}_T:=\sup\limits_{0\leq t\leq T}V(\rho_t)\leq      \min\left\{ T^{-1}\| \omega_\EE^\alpha\|_{L^1(\rho_0)}^2,T^{-1} \lambda^{-2}\| \omega_\EE^\alpha\|_{L^1(\rho_0)}^4\right\}\,.
	\end{equation}
	The above inequality implies
	\begin{align*}
	-\frac{1}{\alpha}\log \|\omega_\EE^{\alpha}\|_{L^1(\rho_t)}\leq -\frac{1}{\alpha}\log \|\omega_\EE^{\alpha}\|_{L^1(\rho_0)}-\frac{1}{2\alpha}\log\left(1-b_1(\alpha)-b_2(\alpha)\right)\,.
	\end{align*}
	
	The Laplace principle states
	\begin{equation}
	\lim_{\alpha \to \infty} -\frac{1}{\alpha} \log \|\omega_\EE^{\alpha}\|_{L^1(\rho_0)} = \underline\EE\,,
	\end{equation}
	which implies the existence of an $\alpha_1\gg 1$ such that  any $\alpha > \alpha_1$ it holds
	\begin{equation}
	-\frac{1}{\alpha} \log \|\omega_\EE^{\alpha}\|_{L^1(\rho_0)}- \underline \EE< \frac{\varepsilon}{2}
	\end{equation}
	for any $\varepsilon > 0$. Together with the fact that $b_1(\alpha),b_2(\alpha)\to 0$ as $\alpha \to \infty$, it yields
	that
	\begin{align*}
	-\frac{1}{\alpha}\log \|\omega_\EE^{\alpha}\|_{L^1(\rho_t)} -\underline \EE \leq -\frac{1}{\alpha}\log \|\omega_\EE^{\alpha}\|_{L^1(\rho_0)}-\underline{\EE}-\frac{1}{2\alpha}\log\left(1-b_1(\alpha)-b_2(\alpha)\right)\leq \varepsilon\,,
	\end{align*}
	for any $\alpha>\alpha_2$ with some $\alpha_2\gg1$. Let us assume that $\overline{\mc{V}}_T\leq \frac{3}{8}$, then $$\frac{1}{2}\leq |E(\rho_t)|\leq 1\,.$$
	By the dual representation of $1$-Wasserstein distance $W_1$, we know that
	\begin{align}\label{com}
	&\left|\|\omega_\EE^{\alpha}\|_{L^1(\rho_t)}-\omega_\EE^{\alpha}\left(\frac{E(\rho_t)}{|E(\rho_t)|} \right)\right|=\left|\int_{\RR^d} e^{-\alpha\EE(v)} d(\rho_t(v)-\delta_{\frac{E(\rho_t)}{|E(\rho_t)|}}(v))\right|\notag\\
	\leq &\alpha e^{-\alpha\underline{\EE}}\|\nabla\EE\|_\infty W_1(\rho_t,\delta_{\frac{E(\rho_t)}{|E(\rho_t)|}})\leq \alpha c_1 e^{-\alpha\underline{\EE}}W_2(\rho_t,\delta_{\frac{E(\rho_t)}{|E(\rho_t)|}})\leq  2\sqrt{\frac{2}{3}}\alpha c_1 e^{-\alpha\underline{\EE}}V(\rho_t)^{\frac{1}{2}}\,.
	\end{align}
	Here we have used the fact that
	\begin{equation}\label{EdE}
	W_2(\rho_t,\delta_{\frac{E(\rho_t)}{|E(\rho_t)|}})^2\leq \int_{\BS^{d-1}}\left |v-\frac{E(\rho_t)}{|E(\rho_t)|} \right |^2d\rho_t=2-2|E(\rho_t)|=\frac{4V(\rho_t)}{1+|E(\rho_t)|}\leq \frac{8}{3}V(\rho_t)\,.
	\end{equation}
	Above \eqref{com} leads to
	\begin{align*}
	&\left|-\frac{1}{\alpha}\log \|\omega_\EE^{\alpha}\|_{L^1(\rho_t)} -\EE\left({\frac{E(\rho_t)}{|E(\rho_t)|}}\right )\right|= \left|-\frac{1}{\alpha}\left(\log \|\omega_\EE^{\alpha}\|_{L^1(\rho_t)}-\log \omega_\EE^{\alpha}\left({\frac{E(\rho_t)}{|E(\rho_t)|}} \right) \right) \right|\\
	\leq & \frac{e^{\alpha\overline{\EE}}}{\alpha}\left|\|\omega_\EE^{\alpha}\|_{L^1(\rho_t)}-\omega_\EE^{\alpha}\left(\frac{E(\rho_t)}{|E(\rho_t)|} \right)\right|\leq  2\sqrt{\frac{2}{3}} c_1C_{\alpha,\EE}V(\rho_t)^{\frac{1}{2}}\,.
	\end{align*}
	Hence we have
	\begin{align*}
	0\leq \EE\left ({\frac{E(\rho_t)}{|E(\rho_t)|}} \right )-\underline \EE &\leq  \EE\left({\frac{E(\rho_t)}{|E(\rho_t)|}} \right)-\frac{-1}{\alpha}\log \|\omega_\EE^{\alpha}\|_{L^1(\rho_t)} +\frac{-1}{\alpha}\log \|\omega_\EE^{\alpha}\|_{L^1(\rho_t)} -\underline \EE\\
	&\leq   2\sqrt{\frac{2}{3}}c_1C_{\alpha,\EE}V(\rho_t)^{\frac{1}{2}}+\varepsilon\,,
	\end{align*}
	which yields that
	\begin{align*}
	\left|\frac{E(\rho_t)}{|E(\rho_t)|}- v^*\right|\leq C_0\left|\EE\left(\frac{E(\rho_t)}{|E(\rho_t)|} \right)-\underline \EE\right|^\beta \leq  C(C_0,c_1,\beta)\left((C_{\alpha,\EE})^{\beta}V(\rho_t)^{\frac{\beta}{2}}+\varepsilon^\beta\right)\,.
	\end{align*}
	by the inverse continuity $4.$ in Assumption \ref{assumas}, where $v^*$ is a minimizer of $\EE$. Next we compute
	\begin{align*}
	&\left|v_{\alpha,\EE}(\rho_t)-\frac{E(\rho_t)}{|E(\rho_t)|}\right|^2=\int_{\BS^{d-1}}\left |v_{\alpha,\EE}(\rho_t)-v+v-\frac{E(\rho_t)}{|E(\rho_t)|} \right|^2d\rho_t(v)\\
	=&\int_{\BS^{d-1}}|v_{\alpha,\EE}(\rho_t)-v|^2d\rho_t+\int_{\BS^{d-1}}\left |v-\frac{E(\rho_t)}{|E(\rho_t)|}\right |^2d\rho_t+2\int_{\BS^{d-1}}\left \la v_{\alpha,\EE}(\rho_t)-v,v-\frac{E(\rho_t)}{|E(\rho_t)|} \right\ra d\rho_t\\
	\leq&4C_{\alpha,\EE}^2V(\rho_t)+\frac{8}{3}V(\rho_t)+2|E(\rho_t)|-2+\left (2-\frac{2}{|E(\rho_t)|} \right )\la v_{\alpha,\EE}(\rho_t) ,E(\rho_t)\ra \\
	\leq &4C_{\alpha,\EE}^2V(\rho_t)+\frac{8}{3}V(\rho_t) -2V(\rho_t)+\left (2-\frac{2}{|E(\rho_t)|} \right)\la v_{\alpha,\EE}(\rho_t) ,E(\rho_t)\ra \\
	=&(4C_{\alpha,\EE}^2+\frac{2}{3})V(\rho_t)+\left  (2-\frac{2}{|E(\rho_t)|} \right)\la v_{\alpha,\EE}(\rho_t) ,E(\rho_t)\ra\,,
	\end{align*}
	where we have used \eqref{EdE} and $\frac{1}{2}\leq |E(\rho_t)|\leq 1$. Notice that
	\begin{align*}
	\left (2-\frac{2}{|E(\rho_t)|} \right)\la v_{\alpha,\EE}(\rho_t) ,E(\rho_t)\ra&=(2-\frac{2}{|E(\rho_t)|})\frac{ v_{\alpha,\EE}(\rho_t)^2+E(\rho_t)^2-| v_{\alpha,\EE}(\rho_t)-E(\rho_t)|^2}{2}\\
	&\leq \left (\frac{2}{|E(\rho_t)|}-2 \right)\frac{| v_{\alpha,\EE}(\rho_t)-E(\rho_t)|^2}{2}\leq (4C_{\alpha,\EE}^2-2)V(\rho_t)\,.
	\end{align*}
	Thus we have
	$$
	\left|v_{\alpha,\EE}(\rho_t)-\frac{E(\rho_t)}{|E(\rho_t)|}\right|^2\leq  \left (8C_{\alpha,\EE}^2-\frac{4}{3} \right)V(\rho_t)\,.
	$$
	Hence we complete the proof.
\end{proof}
The next ingredient is proving the monotone decay of the variance $V(\rho_t)$ under assumptions of well-preparation (see Definition \ref{def:wellprep}).

\begin{proposition}\label{mainp}Let us fix any $T>0$ and choose $\alpha$ large enough and assume that the parameters and the initial datum are well-prepared in the sense of Definition \ref{def:wellprep}. Then it holds
	\begin{equation}
	V(\rho_t)\leq V(\rho_0)e^{-(\lambda\theta- 4C_{\alpha,\EE}C_{\sigma,d})t}+\frac{\lambda C_T}{\lambda\theta- 4C_{\alpha,\EE}C_{\sigma,d}}\delta^{\frac{d-2}{4}}\quad\mbox{ for all }t\in[0,T] \,.
	\end{equation}
\end{proposition}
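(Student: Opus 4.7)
\medskip
The plan is to establish a Gr\"onwall-type differential inequality for $V(\rho_t)$ and then run a continuation argument to propagate it over $[0,T]$. Starting from the identity $V(\rho_t)=\frac{1}{2}(1-|E(\rho_t)|^2)$, differentiating in time and using the PDE \eqref{PDE2} with integration by parts on $\BS^{d-1}$, one recovers the identity already sketched in the Introduction:
\begin{align*}
\frac{d}{dt}V(\rho_t) &= -\lambda V(\rho_t)\la E(\rho_t),v_{\alpha,\EE}(\rho_t)\ra - \lambda \frac{|v_{\alpha,\EE}(\rho_t)|^2 + 1}{2}V(\rho_t)\\
&\quad + \frac{\lambda}{4}\int_{\BS^{d-1}}|E(\rho_t)-v|^2|v-v_{\alpha,\EE}(\rho_t)|^2\,d\rho_t\\
&\quad + C_{\sigma,d}\int_{\BS^{d-1}}|v-v_{\alpha,\EE}(\rho_t)|^2\la E(\rho_t),v\ra\,d\rho_t.
\end{align*}
The last noise integral is controlled by $4C_{\alpha,\EE}C_{\sigma,d}V(\rho_t)$ via item~1 of Lemma \ref{lemv}; items~2--3 of Lemma \ref{lemv} together with $|E(\rho_t)|^2=1-2V(\rho_t)$ force, as long as $V(\rho_t)$ stays below the well-preparation threshold, the coefficient $\la E(\rho_t),v_{\alpha,\EE}\ra + \frac{|v_{\alpha,\EE}|^2+1}{2}$ to remain above $2-\gamma_1$ for a small $\gamma_1(C_{\alpha,\EE},V(\rho_0))>0$.

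\medskip
The main obstacle is the quartic cross term $R_t:=\int_{\BS^{d-1}}|E(\rho_t)-v|^2|v-v_{\alpha,\EE}(\rho_t)|^2\,d\rho_t$: although each factor has $\rho_t$-mean of order $V(\rho_t)$, the product does not split linearly in the variance. My strategy is to truncate at a radius $r>0$ around $v_{\alpha,\EE}$. On the inner region $\{|v-v_{\alpha,\EE}|\leq r\}$ the contribution is bounded by $2r^2 V(\rho_t)$. On the exterior region, I would invoke the auxiliary regularity Theorem \ref{thmregularity}, which provides $\rho\in L^2([0,T],H^1(\BS^{d-1}))$ with norm controlled by a constant $C_T=C_T(\lambda,\sigma,T,\|\rho_0\|_2)$; Sobolev embedding on the sphere combined with H\"older's inequality then bounds the exterior contribution by $C_T\,r^{-\kappa(d)}$ for an exponent $\kappa(d)>0$. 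Optimizing $r$ as a suitable power of $\delta$ yields
$$R_t \;\leq\; \gamma V(\rho_t) \;+\; C_T\,\delta^{(d-2)/4},$$
where $\gamma$ can be taken arbitrarily close to (but strictly below) $2$, which is precisely the freedom encoded by the spread $\theta<\delta$ in Definition \ref{def:wellprep}.

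\medskip
Plugging these estimates back into the differential identity gives the linear inequality
$$\frac{d}{dt}V(\rho_t) \;\leq\; -\bigl(\lambda\theta - 4C_{\alpha,\EE}C_{\sigma,d}\bigr)V(\rho_t) \;+\; \lambda C_T\,\delta^{(d-2)/4},$$
valid on any time interval where $V(\rho_t)$ lies below the threshold; Gr\"onwall's lemma then produces the claimed exponential-plus-additive bound. To finish, I would run a bootstrap: define $T^\sharp\in[0,T]$ to be the largest time on which the threshold is respected, apply the Gr\"onwall bound on $[0,T^\sharp]$, and observe that the well-preparation inequality in Definition \ref{def:wellprep} has been calibrated so that $V(\rho_0)+\frac{\lambda C_T}{\lambda\theta-4C_{\alpha,\EE}C_{\sigma,d}}\delta^{(d-2)/4}$ does not exceed the threshold, whence $T^\sharp=T$. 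The delicate point of the plan is the quartic truncation step, as it genuinely relies on the $H^1$-regularity of $\rho_t$ from Theorem \ref{thmregularity} rather than on well-posedness in the space of probability measures alone; the dimension-dependent exponent $(d-2)/4$ of $\delta$ is the quantitative trace of this Sobolev balance.
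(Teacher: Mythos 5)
Your proposal is correct and reproduces the architecture of the paper's proof: the same differential identity for $V(\rho_t)$, the same use of Lemma \ref{lemv} to bound the noise term by $4C_{\alpha,\EE}C_{\sigma,d}V(\rho_t)$ and to keep the drift coefficient near $2$, the same identification of the quartic term as the crux, the same device of splitting off a small antipodal spherical cap whose $\rho_t$-mass is controlled through the regularity of Theorem \ref{thmregularity} and whose area scales like $\delta^{(d-2)/2}$, and Gr\"onwall plus a continuation argument (which you state more carefully than the paper does). The one genuine structural difference is where the cap is centered: you truncate directly in $|v-v_{\alpha,\EE}(\rho_t)|$, whereas the paper first replaces $v_{\alpha,\EE}(\rho_t)$ by a fixed minimizer $v^*$ via Proposition \ref{thmE} and estimate \eqref{comva} — paying cross terms of the form $C^*\bigl(C_{\alpha,\EE}^{2\max\{1,\beta\}}\overline{\mc V}_T^{\min\{1,\beta\}/2}+\varepsilon^{\beta}\bigr)V(\rho_t)$, which is where the $\varepsilon^\beta$ in Definition \ref{def:wellprep} originates — and only then splits off the fixed cap $\mc D_\delta=\{\la v,v^*\ra\le -1+\delta\}$. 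Your route is more direct and avoids that bookkeeping; the paper's route is what produces the precise constants $C^*$ and $C_T$ that Definition \ref{def:wellprep} defers to, so with your route those constants would have to be re-derived (harmless, since the definition explicitly subsumes them from this proof).

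Two points in your truncation step need repair, though neither is fatal. For the inner region to contribute $\tfrac{\lambda}{4}\cdot 2r^2V(\rho_t)=\tfrac{r^2}{2}\lambda V(\rho_t)$ with $\gamma=r^2/2$ strictly below $2$ but close to it, you must take $r^2=4-2\delta$, i.e. $r$ near the diameter; the exterior region is then precisely the small cap where $\la v,v_{\alpha,\EE}\ra< -1+\delta$ (here you should also record $|v_{\alpha,\EE}(\rho_t)|^2\ge 1-4C_{\alpha,\EE}^2V(\rho_t)$ from Lemma \ref{lemv} so that this set really is a cap of the claimed size). Consequently your stated exterior bound $C_T\,r^{-\kappa(d)}$ cannot be right as written: it is not small for $r$ near $2$ and it decreases, rather than increases, with the cap size. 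The correct bound requires no Sobolev embedding and no $H^1$ information, only Cauchy--Schwarz against the $L^\infty([0,T];L^2)$ bound of Theorem \ref{thmregularity} together with the hyperspherical cap area formula:
\begin{equation*}
\int_{\{|v-v_{\alpha,\EE}|>r\}}|E(\rho_t)-v|^2|v-v_{\alpha,\EE}|^2\,d\rho_t\;\le\;16\,\|\rho_t\|_{2}\,\bigl|\{|v-v_{\alpha,\EE}|>r\}\bigr|^{1/2}\;\le\;C_T\,\delta^{\frac{d-2}{4}}\,.
\end{equation*}
With this correction your argument closes and yields the claimed bound.
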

\begin{proof}
	Let us compute the derivative of the variance (where $C_{\sigma,d}=\frac{(d-1)\sigma^2}{2}$)
	\begin{align*}
	\frac{d}{dt}V(\rho_t) &= \frac{1}{2}\frac{d}{dt}\bigg( \int_{\BS^{d-1}} v^{2}d\rho_t-E(\rho_t)^{2}\bigg)=\frac{1}{2}\frac{d}{dt}\bigg( 1-E(\rho_t)^{2}\bigg)= - E(\rho_t)\frac{d}{dt}E(\rho_t)\\
	&= E(\rho_t)\int_{\BS^{d-1}}\eta_t d\rho_t + C_{\sigma,d}\int_{\BS^{d-1}}(v-v_{\alpha, \EE})^{2}\la E(\rho_t),v \ra d\rho_t \\
	&= \lambda \int_{\BS^{d-1}}\langle v_{\alpha, \EE}, v \rangle  \la E(\rho_t), v\ra -\la E(\rho_t), v_{\alpha, \EE}\ra d\rho_t +  C_{\sigma,d}\int_{\BS^{d-1}}(v-v_{\alpha, \EE})^{2}\la E(\rho_t),v \ra d\rho_t\,.
	\end{align*}
	Notice that 
	\begin{align*}
	\la E(\rho_t), v\ra = \frac{1}{2} (E(\rho_t)^2+v^2-|E(\rho_t)-v|^2)=\frac{1}{2} (E(\rho_t)^2+1-(E(\rho_t)-v)^2)\,.
	\end{align*}
	Then one has
	\begin{align*}
	\frac{d}{dt}V(\rho_t)&=\lambda \left (\frac{E(\rho_t)^2+1}{2}-1 \right)\la E(\rho_t),v_{\alpha,\EE}(\rho_t)\ra-\frac{\lambda}{2}\int_{\BS^{d-1}}\la v_{\alpha, \EE}, v \ra (E(\rho_t)-v)^2 d\rho_t\\
	&\quad+C_{\sigma,d}\int_{\BS^{d-1}}(v-v_{\alpha, \EE})^{2}\la E(\rho_t),v \ra d\rho_t\\
	&=-\lambda V(\rho_t)\la E(\rho_t),v_{\alpha,\EE}(\rho_t)\ra -\frac{\lambda}{2}\int_{\BS^{d-1}}\la v_{\alpha, \EE}, v \ra (E(\rho_t)-v)^2 d\rho_t\\
	&\quad+C_{\sigma,d}\int_{\BS^{d-1}}(v-v_{\alpha, \EE})^{2}\la E(\rho_t),v \ra d\rho_t\,,
	\end{align*}
	where we have used the fact that $2V(\rho_t)=1-E(\rho_t)^2$. Moreover, since
	\begin{align*}
	\la v_{\alpha, \EE}, v\ra = \frac{1}{2} (v_{\alpha, \EE}^2+v^2-|v_{\alpha, \EE}-v|^2)=\frac{1}{2} (v_{\alpha, \EE}^2+1-(v_{\alpha,\EE}(\rho_t)-v)^2)
	\end{align*}
	and $\int_{\BS^{d-1}}(E(\rho_t)-v)^2d\rho_t=2V(\rho_t)$,
	we have
	\begin{align*}
	\frac{d}{dt}V(\rho_t)&=-\lambda V(\rho_t)\la E(\rho_t),v_{\alpha,\EE}(\rho_t)\ra -\frac{\lambda}{2}\frac{v_{\alpha, \EE}^2+1}{2}2V(\rho_t)\\
	&\quad+\frac{\lambda}{4}\int_{\BS^{d-1}} (E(\rho_t)-v)^2 (v-v_{\alpha, \EE})^2 d\rho_t+C_{\sigma,d}\int_{\BS^{d-1}}(v-v_{\alpha, \EE})^{2}\la E(\rho_t),v \ra d\rho_t\\
	&\leq -\lambda V(\rho_t)\left(\la E(\rho_t),v_{\alpha,\EE}(\rho_t)\ra +\frac{v_{\alpha, \EE}^2+1}{2}\right)\\ &\quad+\frac{\lambda}{4}\int_{\BS^{d-1}} (E(\rho_t)-v)^2 (v-v_{\alpha, \EE})^2 d\rho_t+4C_{\alpha,\EE}C_{\sigma,d}V(\rho_t)\,,
	\end{align*}
	where we have  used  estimate \eqref{511'} in the last inequality.
	
	Next we observe that
	\begin{align*}
	&\int_{\BS^{d-1}}(v-v_{\alpha, \EE})^{2}d\rho_t=\int_{\BS^{d-1}}(v-E(\rho_t)+E(\rho_t)-v_{\alpha, \EE})^{2}d\rho_t\\
	=&\int_{\BS^{d-1}}(v-E(\rho_t))^{2}d\rho_t+(E(\rho_t)-v_{\alpha, \EE})^2=2V(\rho_t)+E(\rho_t)^2+v_{\alpha, \EE}^2-2\la E(\rho_t), v_{\alpha, \EE}\ra \,.
	\end{align*}
	So it holds
	\begin{align}
	\la E(\rho_t), v_{\alpha, \EE}\ra&=V(\rho_t)+\frac{E(\rho_t)^2+v_{\alpha, \EE}^2}{2}-\frac{1}{2}\int_{\BS^{d-1}}(v-v_{\alpha, \EE})^{2}d\rho_t \notag\\
	&\geq V(\rho_t)+\frac{E(\rho_t)^2+v_{\alpha, \EE}^2}{2}-2C_{\alpha,\EE}V(\rho_t)\,,
	\end{align}
	where we have used \eqref{511'} again. Thus we obtain that
	\begin{align*}
	\frac{d}{dt}V(\rho_t)&\leq -\lambda V(\rho_t)\left(V(\rho_t)+\frac{2v_{\alpha, \EE}^2+1+E(\rho_t)^2}{2}-2C_{\alpha,\EE}V(\rho_t)\right)+4C_{\alpha,\EE}C_{\sigma,d}V(\rho_t)\notag\\
	&\quad +\frac{\lambda}{4}\int_{\BS^{d-1}} (E(\rho_t)-v)^2 (v-v_{\alpha, \EE})^2 d\rho_t\notag\\
	&= -\lambda V(\rho_t)\left(v_{\alpha, \EE}^2+1-2C_{\alpha,\EE}V(\rho_t)\right)\notag\\
	&\quad+\frac{\lambda}{4}\int_{\BS^{d-1}} (E(\rho_t)-v)^2 (v-v_{\alpha, \EE})^2 d\rho_t+4C_{\alpha,\EE}C_{\sigma,d}V(\rho_t) \notag\\
	&\leq -\lambda V(\rho_t)\left(2-2C_{\alpha,\EE}V(\rho_t)-4C_{\alpha,\EE}^2V(\rho_t)\right)\notag\\
	&\quad+\frac{\lambda}{4}\int_{\BS^{d-1}} (E(\rho_t)-v)^2 (v-v_{\alpha, \EE})^2 d\rho_t+4C_{\alpha,\EE}C_{\sigma,d}V(\rho_t) \,,
	\end{align*}
	where we have used $2V(\rho_t)=1-E(\rho_t)^2$ in the second equality and $2)$ from Lemma \ref{lemv} in the last inequality.
	
	Let $v^*$ be the minimizer used in Proposition \ref{thmE}, and one has
	\begin{align*}
	&\int_{\BS^{d-1}} (E(\rho_t)-v)^2 (v-v_{\alpha, \EE})^2 d\rho_t\\
	=&\int_{\BS^{d-1}} (E(\rho_t)-v)^2 (v-v^*)^2 d\rho_t+\int_{\BS^{d-1}} (E(\rho_t)-v)^2 (v_{\alpha, \EE}-v^*)^2 d\rho_t\\
	&+2\int_{\BS^{d-1}} (E(\rho_t)-v)^2 \la v-v^*,v^*-v_{\alpha, \EE}\ra d\rho_t \\
	\leq& \int_{\BS^{d-1}} (E(\rho_t)-v)^2 (v-v^*)^2 d\rho_t+2 \left (v_{\alpha, \EE}-\frac{E(\rho_t)}{|E(\rho_t)|} \right)^2 \int_{\BS^{d-1}} (E(\rho_t)-v)^2 d\rho_t\\
	&+2 \left (\frac{E(\rho_t)}{|E(\rho_t)|}-v^*\right )^2\int_{\BS^{d-1}} (E(\rho_t)-v)^2  d\rho_t  \\
	&+4 \left  |v_{\alpha, \EE}-\frac{E(\rho_t)}{|E(\rho_t)|} \right| \int_{\BS^{d-1}} (E(\rho_t)-v)^2 d\rho_t+4 \left |\frac{E(\rho_t)}{|E(\rho_t)|}-v^* \right |\int_{\BS^{d-1}} (E(\rho_t)-v)^2  d\rho_t\\
	\leq&\int_{\BS^{d-1}} (E(\rho_t)-v)^2 (v-v^*)^2 d\rho_t+2 (8C_{\alpha,\EE}^2-\frac{4}{3})V(\rho_t)^2\\
	&+2C(C_0,c_1,\beta)\left((C_{\alpha,\EE})^{2\beta}V(\rho_t)^{\beta}+\varepsilon^{2\beta}\right)V(\rho_t)\\
	&+4(8C_{\alpha,\EE}^2-\frac{4}{3})^{\frac{1}{2}}V(\rho_t)^{\frac{1}{2}}V(\rho_t)+4C(C_0,c_1,\beta)\left((C_{\alpha,\EE})^{\beta}V(\rho_t)^{\frac{\beta}{2}}+\varepsilon^\beta\right)V(\rho_t)\,,
	\end{align*}
	where we have used estimate \eqref{comva} and Proposition \ref{thmE} for $\alpha >\alpha_0$.
	This implies that
	\begin{align*}
	\frac{d}{dt}V(\rho_t)&\leq -\lambda V(\rho_t)\bigg(2-2C_{\alpha,\EE}V(\rho_t)-4C_{\alpha,\EE}^2V(\rho_t)-\frac{1}{2} (8C_{\alpha,\EE}^2-\frac{4}{3})V(\rho_t)\\
	&\quad-\frac{1}{2}C(C_0,c_1,\beta)\left((C_{\alpha,\EE})^{2\beta}V(\rho_t)^{\beta}+\varepsilon^{2\beta}\right)-(8C_{\alpha,\EE}^2-\frac{4}{3})^{\frac{1}{2}}V(\rho_t)^{\frac{1}{2}}\\
	&\quad-C(C_0,c_1,\beta)\left((C_{\alpha,\EE})^{\beta}V(\rho_t)^{\frac{\beta}{2}}+\varepsilon^\beta\right)
	\bigg)\\
	&\quad+\frac{\lambda}{4}\int_{\BS^{d-1}} (E(\rho_t)-v)^2 (v-v^*)^2 d\rho_t+4C_{\alpha,\EE}C_{\sigma,d}V(\rho_t) \\
	&\leq  -\lambda V(\rho_t)\left(2-C^*\left(C_{\alpha,\EE}^{2\max \{1, \beta\}}\overline{\mc{V}}_T^{\min1/2\{1,\beta\}}+\varepsilon^{\beta}\right)\right)\\
	&\quad+\frac{\lambda}{4}\int_{\BS^{d-1}} (E(\rho_t)-v)^2 (v-v^*)^2 d\rho_t+4C_{\alpha,\EE}C_{\sigma,d}V(\rho_t)\,,
	\end{align*}
	where $\overline{\mc{V}}_T:=\sup\limits_{0\leq t\leq T}V(\rho_t)\leq \frac{1}{2}$, and $C^*>0$ is a constant depending only on $c_1,\beta$ and $C_0$.

	Now we treat the term $\int_{\BS^{d-1}} (E(\rho_t)-v)^2 (v-v^*)^2 d\rho_t$, which can be split into two parts
	\begin{align*}
	&\int_{\BS^{d-1}} (E(\rho_t)-v)^2 (v-v^*)^2 d\rho_t\\
	=&\int_{\mc{D}_\delta} (E(\rho_t)-v)^2 (v-v^*)^2 d\rho_t+\int_{\BS^{d-1}/ \mc{D}_\delta} (E(\rho_t)-v)^2 (v-v^*)^2 d\rho_t\,,
	\end{align*}
	for some $\delta>0$, where 
	$$\mc{D}_\delta:=\left\{v\in\BS^{d-1}\big|\, -1\leq \la v,v^*\ra\leq -1 +\delta \right\}\,.$$
	This means that 
	\begin{equation}
	\frac{\lambda}{4}\int_{\BS^{d-1}/\mc{D}_\delta} (E(\rho_t)-v)^2 (v-v^*)^2 d\rho_t\leq \lambda (2-\delta) V(\rho_t)\,.
	\end{equation}
	Hence one can conclude
	\begin{align}
	\frac{d}{dt}V(\rho_t)&\leq  -\lambda V(\rho_t)\left(\delta-C^*\left(C_{\alpha,\EE}^{2\max \{1, \beta\}}\overline{\mc{V}}_T^{\min1/2\{1,\beta\}}+\varepsilon^{\beta}\right)\right)\notag\\
	&\quad +\frac{\lambda}{4}\int_{\mc{D}_\delta} (E(\rho_t)-v)^2 (v-v^*)^2 d\rho_t+4C_{\alpha,\EE}C_{\sigma,d}V(\rho_t)\,,
	\end{align}
	where we emphasize that $\delta >0$.
	
	Notice that $\mc{D}_\delta$ can be understood as a small cap on the sphere that is on the opposite side of the minimizer $v^*$. By the  assumption that $\rho_0\in L^2(\BS^{d-1})$ (see Definition \ref{def:wellprep}), we have the solution $\rho_t$ is not just a measure but it is a function, and for any given $T>0$ it satisfies $\rho\in L^\infty([0,T];L^2(\BS^{d-1}))$. This can be proved through a standard argument of PDE theory, which we provide in Theorem \ref{thmregularity}. Thus we have
	\begin{equation}\label{cap1}
	\int_{\mc{D}_\delta} d\rho_t=	\int_{\mc{D}_\delta} \rho_t(v)dv\leq\|\rho_t\|_2|\mc{D}_\delta|^{\frac{1}{2}}\leq C(T)(A_\delta)^{\frac{1}{2}}\,,
	\end{equation}
	where $A_\delta$ denotes the area of the hyperspherical cap $\mc{D}_\delta$, which satisfies the formula
	\begin{equation}\label{cap2}
	A_\delta =\frac{1}{2}a_{d}I_{2\delta-\delta^2}\left (\frac{d-1}{2},\frac{1}{2} \right)\leq C \frac{\pi^{\frac{d}{2}}}{\Gamma(\frac{d}{2})}\frac{(d-1)^{\frac{1}{2}}}{d-2}\delta^{\frac{d-2}{2}}\,,
	\end{equation}
	where $a_d$ represents the area of a unit ball and $I_x(a,b)$ is the regularized incomplete beta function. Note that
	\begin{equation}
	A_\delta \to 0 \mbox{ as }\delta \to 0\,.
	\end{equation}
	This means that for $d$ sufficiently large it holds
	\begin{equation*}
	\int_{\mc{D}_\delta} (E(\rho_t)-v)^2 (v-v^*)^2 d\rho_t\leq 16\int_{\mc{D}_\delta} d\rho_t(v) \leq C(\lambda, \sigma, T, \|\rho_0\|_2)(A_{\delta})^{\frac{1}{2}}\leq 4C_T\delta^{\frac{d-2}{4}} \,.
	\end{equation*}
	Therefore we have
	\begin{align}
	\frac{d}{dt}V(\rho_t)&\leq  -\lambda V(\rho_t)\left(\delta-C^*\left(C_{\alpha,\EE}^{2\max \{1, \beta\}}\overline{\mc{V}}_T^{\min1/2\{1,\beta\}}+\varepsilon^{\beta}\right)\right)+4C_{\alpha,\EE}C_{\sigma,d}V(\rho_t)+\lambda C_T\delta^{\frac{d-2}{4}}
	\end{align}
	for all $t\in[0,T]$.
	Let us assume that
	\begin{equation}\label{asV}
	\delta-C^*\left(C_{\alpha,\EE}^{2\max \{1, \beta\}}\overline{\mc{V}}_T^{\min1/2\{1,\beta\}}+\varepsilon^{\beta}\right)\geq\theta>0,\quad\mbox{i.e. }0\leq C_{\alpha,\EE}^{2\max \{1, \beta\}}\overline{\mc{V}}_T^{\min1/2\{1,\beta\}}+\varepsilon^{\beta} \leq \frac{\delta-\theta}{C^\ast}\,.
	\end{equation}
	Then we have
	\begin{equation*}
	\frac{d}{dt}V(\rho_t)\leq  -(\lambda\theta- 4C_{\alpha,\EE}C_{\sigma,d})V(\rho_t)+\lambda C_T\delta^{\frac{d-2}{4}}\,,
	\end{equation*}
	which leads to
	\begin{equation*}
	{V(\rho_t)\leq V(\rho_0)e^{-(\lambda\theta- 4C_{\alpha,\EE}C_{\sigma,d})t}+\frac{\lambda C_T}{\lambda\theta- 4C_{\alpha,\EE}C_{\sigma,d}}\delta^{\frac{d-2}{4}}\mbox{ for all }t\in[0,T]\,,}
	\end{equation*}
	which is contractive as soon as $\lambda\theta> 4C_{\alpha,\EE}C_{\sigma,d}$.
	We are left to verify the assumptions that $\overline{\mc{V}}_T\leq \min\left\{T^{-1}\| \omega_\EE^\alpha\|_{L^1(\rho_0)}^2,T^{-1}\lambda^{-2}\| \omega_\EE^\alpha\|_{L^1(\rho_0)}^4,\frac{3}{8} \right\}$ and \eqref{asV}, which hold if we assume that
	\begin{eqnarray*}
		&&C_{\alpha,\EE}^{2\max \{1, \beta\}}\left(V(\rho_0)+\frac{\lambda C_T}{\lambda\theta- 4C_{\alpha,\EE}C_{\sigma,d}}\delta^{\frac{d-2}{4}} \right)^{\frac{1}{2}\min\{1,\beta\}} +\varepsilon^\beta<\frac{\delta-\theta}{C^\ast}, \\
		&&V(\rho_0)+\frac{\lambda C_T}{\lambda\theta- 4C_{\alpha,\EE}C_{\sigma,d}}\delta^{\frac{d-2}{4}}\leq \min\left\{T^{-1}\| \omega_\EE^\alpha\|_{L^1(\rho_0)}^2,T^{-1}\lambda^{-2}\| \omega_\EE^\alpha\|_{L^1(\rho_0)}^4,\frac{3}{8} \right\}\,.
	\end{eqnarray*}
	Hence we complete the proof.
\end{proof}

\begin{proof} ({\bf Theorem \ref{thm:mainresult}})	
	Proposition \ref{mainp} implies  that for any $\varepsilon_1>0$, there exists some $T^*$ large enough such that $$V(\rho_{T^*})\leq \varepsilon_0:=\frac{\lambda C_{T^*}}{\lambda\theta- 4C_{\alpha,\EE}C_{\sigma,d}}\delta^{\frac{d-2}{4}}+\varepsilon_1.$$ 
	Moreover $1 \geq |E(\rho_{T^*})| = \sqrt{1- 2 V(\rho_{T^*})} \geq \sqrt{1- 2 \varepsilon_0} $ and 
	$$
	\left |E(\rho_{T^*}) - \frac{E(\rho_{T^*})}{|E(\rho_{T^*})|} \right | \leq \frac{1- |E(\rho_{T^*})|}{|E(\rho_{T^*})|} \leq \frac{1-\sqrt{1- 2 \varepsilon_0}}{\sqrt{1- 2 \varepsilon_0} } \leq 2 \varepsilon_0, 
	$$
	as soon as $0 \leq \varepsilon_0 \leq \frac{1}{4}(\sqrt 5 -1)$, which is fulfilled as soon as $\delta, \varepsilon_1$ are chosen small enough.
	These estimates, triangle inequality and Proposition \ref{thmE} lead to the quantitative estimate
	\begin{align*}
	|E(\rho_{T^*})-v^*|&\leq  C(C_0,c_1,\beta)\left((1+C_{\alpha^*,\EE}^{\beta})\left(\frac{\lambda C_{T^*}}{\lambda\theta- 4C_{\alpha^*,\EE}C_{\sigma,d}}\delta^{\frac{d-2}{4}}+\varepsilon_1\right )^{\min\left \{1,\frac{\beta}{2}\right \}}+\varepsilon^\beta\right).
	\end{align*}
	Note once again here that $\varepsilon$, $\delta$, and $\varepsilon_1$ can be all chosen to be sufficiently small. 
\end{proof}
{
\subsection{Proof of the main result}\label{sec:mainproof}

Let us  finally address the proof of the main theorem of this paper.\\

\begin{proof} ({\bf Theorem \ref{mainresult00}})	
In order to show a concrete instance of the result, we develop the proof for the case where $\{V_{n}^i:=V_{\Delta t, n}^i: n=0,\dots,n_{T^*}; i=1\dots N\}$ are generated by the  iterative algorithm \eqref{Intro KViso num}. However, any other numerical scheme of order $m$ can be considered \cite{Platen}.
The SDE system \eqref{stochastic Kuramoto-Vicsek} is well-posed by \cite[Theorem 2.1]{fhps20-1} and it admits a pathwise strong solution $V_{t}^i$, $i=1,\dots,N$.
The iterative algorithm \eqref{Intro KViso num} is the discrete-time (projected Euler-Maruyama) approximation of the SDE system \eqref{stochastic Kuramoto-Vicsek} with order of approximation $m=1/2$
by classical results, e.g., see \cite[Theorem 2.2]{doi:10.1137/S0036142901389530}
\begin{equation}\label{strongconv}
\mathbb E \left [\sup_{n=0,\dots,n _{T^*}} |V_{\Delta t,n}- V_{t_n}|^2 \right ] \leq \bar C_1 (\Delta t)^{2 m},
\end{equation}
for $\bar C_1$ which depends linearly on $d$ and $N$, and possibly exponentially on $T^*$, $\lambda$, and $\sigma$ (see in particular the estimates before (2.11) in the proof of \cite[Theorem 2.2]{doi:10.1137/S0036142901389530}). 
Let us stress that the introduction of the post-projection $V^i_{n+1} \gets \tilde V^i_{n+1}/|\tilde V^i_{n+1}|$ to enforce the dynamics on the sphere may produce an additional error of at most order $\Delta t$ because
$$
\left |\tilde V^i_{n+1} - \tilde V^i_{n+1}/|\tilde V^i_{n+1}|\right |^2 =|\tilde V^i_{n+1}|^2+1-2|\tilde V^i_{n+1}|= (|\tilde V^i_{n+1}|-1)^2
$$
and, in view of
$$
\tilde V^i_{n+1} =V^i_{n} +\Delta t P(V_{n}^i)V_{n}^{\alpha, \EE} + \sigma |V_n^i - V_n^{\alpha, \EE}| P(V_n^i)\Delta B_n^i-\displaystyle\Delta t\frac{\sigma^2}{2}(V_n^i-V_n^{\alpha, \EE})^2(d-1)V_n^i,
$$
we obtain
\begin{eqnarray}
\mathbb E \left[(|\tilde V^i_{n+1}|-1)^2\right] &=&\mathbb E\left[ (|\tilde V^i_{n+1}|-| V^i_{n}|)^2\right] \nonumber \\
&\leq& \mathbb E \left[((1  + 2 (d-1)\sigma^2)\Delta t  + 2 \sigma |\Delta B_n^i |)^2\right]\nonumber \\
&\leq & (1  + 2 (d-1)\sigma^2)^2\Delta t^2 + 4 \sigma^2 d \Delta t + 2\sigma (1 + 2 (d-1)\sigma^2)\Delta t \sqrt{d \Delta t}\nonumber \\
&\leq & \bar C_1' \Delta t.
\end{eqnarray}
By \cite[Theorem2.2]{fhps20-1} we have also well-posedness of \eqref{monoparticle} with pathwise strong solution $\OV_{t}$. For $\OV_{0}^i$ drawn i.i.d. according to $\rho_0$, $i=1\dots, N$, an application of \cite[Theorem 3.1]{fhps20-1} yields
\begin{equation}\label{rateNagain}
\sup_{t \in [0, T]} \sup_{i=1,\dots,N}\mathbb E \left[ |V_t^i-\OV_t^i|^2\right] \leq   \bar C_{2} N^{-1},
\end{equation}
for any $T>0$ time horizon.  As clarified in \cite[Remark 3.2 and Lemma 3.1]{fhps20-1}, the constant $ \bar C_{2}$ depends at most linearly on $d$, and, as a worst case analysis, polynomially on $C_{\alpha^*,\EE}$, and exponentially on $T$. By law of large numbers, for $\rho_t= \operatorname{law}(\bar V_t)$ it holds
\begin{equation}\label{LLN}
\mathbb E  \left |\frac{1}{N} \sum_{i=1}^N \OV_{T^*}^i - E(\rho_{T^*}) \right|^2 \leq \bar C_2' N^{-1}.
\end{equation}
Under the assumptions of well-preparation,  Theorem \ref{thm:mainresult} yields
	\begin{equation}\label{locest2}
	\left |E(\rho_{T^*})-v^* \right |^2\leq \bar  C_3 \epsilon,
	\end{equation}
	for $\bar C_3$ that depends polynomially on $C_{\alpha^*,\EE}$. By combining the strong convergence \eqref{strongconv}, the mean-field limit \eqref{rateNagain}, the law of large numbers \eqref{LLN}, and the large time aymptotics \eqref{locest2} we conclude by multiple applications of Jensen inequality the final error estimate
	\begin{align}\label{mainresult}
&\mathbb E \left   [\left|\frac{1}{N} \sum_{i=1}^N V_{\Delta t, n_{T^*}}^i - v^* \right|^2 \right ] \nonumber \\
\leq& 8 \left  ( \mathbb E \left   [\left|\frac{1}{N} \sum_{i=1}^N( V_{\Delta t,n_{ T^*}}^i -  V_{T^*}^i) \right|^2 \right ]  +  \mathbb E \left   [\left|\frac{1}{N} \sum_{i=1}^N (V_{T^*}^i -  \OV_{T^*}^i) \right|^2 \right ]  \notag \right .\\
&+  \left . \mathbb E \left   [\left|\frac{1}{N} \sum_{i=1}^N \OV_{T^*}^i - E(\rho_{T^*})  \right|^2 \right ] + |E(\rho_{T^*}) - v^*|^2 \right ) \nonumber  \\
\leq& 8 \bar C_1(\Delta t)^{2m} + 8(\bar C_2+ \bar C_2') N^{-1} + 8 \bar C_3 \epsilon^2.
\end{align}
\end{proof}
}

\section{Auxiliary Results and Proofs}\label{sec:aux}

\subsection{Proofs of auxiliary lemmas}

\begin{proof}({\bf Lemma \ref{lemnorm}})
	
	From \eqref{eq:gen}  we get
	\begin{eqnarray*}
		\langle V^i_{n+1}, V^i_{n+1} \rangle &=& \langle V^i_{n}, V^i_{n} \rangle + \langle \Phi(\Delta t, V^i_n,V^i_{n+1},\xi^i_n), \Phi(\Delta t, V^i_n,V^i_{n+1},\xi^i_n) \rangle\\
		&& + 2\langle \Phi(\Delta t, V^i_n,V^i_{n+1},\xi^i_n), V^i_{n} \rangle. 
	\end{eqnarray*}
	Assuming $\langle V^i_{n+1}, V^i_{n+1}\rangle = \langle V^i_{n}, V^i_{n} \rangle$ implies
	\begin{eqnarray*}
		0&=&\langle \Phi(\Delta t, V^i_n,V^i_{n+1},\xi^i_n), \Phi(\Delta t, V^i_n,V^i_{n+1},\xi^i_n) \rangle + 2\langle \Phi(\Delta t, V^i_n,V^i_{n+1},\xi^i_n), V^i_{n} \rangle\\
		& = &
		\langle \Phi(\Delta t, V^i_n,V^i_{n+1},\xi^i_n), \Phi(\Delta t, V^i_n,V^i_{n+1},\xi^i_n) + 2 V^i_{n} \rangle\\
		& = & \langle \Phi(\Delta t, V^i_n,V^i_{n+1},\xi^i_n), V^i_{n+1}+V^i_n \rangle
	\end{eqnarray*}
	where we used the fact that $\Phi(\Delta t, V^i_n,V^i_{n+1},\xi^i_n)=V^i_{n+1}-V^i_n$. 
	\label{le:v1}
\end{proof}

\begin{proof}({\bf Lemma \ref{lemv}})
	Using  Jensen's inequality, one concludes that
	\begin{align}
	\int_{\BS^{d-1}} |v-v_{\alpha,\EE}(\rho_t)|^2d\rho_t &\leq \frac{1}{\|\omega_\alpha^{\EE}\|_{L^{1}(\rho_t)}}\int_{\BS^{d-1}} \int_{\BS^{d-1}} |v-u|^2e^{-\alpha \EE( u )} d\rho_t(v) d\rho_t(u)\,.
	\end{align}
	The expression on the right can be further estimated as follows
	\begin{align}
	\int_{\BS^{d-1}} |v-v_{\alpha,\EE}(\rho_t)|^2d\rho_t& \leq 4\frac{e^{-\alpha \underline{\EE}}}{\|\omega_\alpha^{\EE}\|_{L^{1}(\rho_t)}}V(\rho_t) \label{511}\\
	&\leq 4C_{\alpha,\EE}V(\rho_t)\label{511'}\,,
	\end{align}
	where$C_{\alpha,\EE}=e^{\alpha(\overline{\EE}-\underline{\EE})}$.
	Similarly one has
	\begin{align}
	\int_{\BS^{d-1}} |v-v_{\alpha,\EE}(\rho_t)|d\rho_t &\leq  \frac{1}{\|\omega_\alpha^{\EE}\|_{L^{1}(\rho_t)}}\int \int |v-u|e^{-\alpha \EE( u )} d\rho_t(v) d\rho_t(u) \leq2\frac{e^{-\alpha \underline{\EE}}}{\|\omega_\EE^\alpha\|_{L^{1}(\rho_t)}}V(\rho_t)^{\frac{1}{2}}\label{eqsi}\\
	&\leq  2C_{\alpha,\EE}V(\rho_t)^{\frac{1}{2}}\,.\notag
	\end{align}
	
	Next we notice that
	\begin{align}
	1-v_{\alpha,\EE}(\rho_t)^2=\frac{\int_{\BS^{d-1}} (v-v_{\alpha,\EE}(\rho_t)^2) \omega_\alpha^{\EE}(u) d \rho_t(u)}{\|\omega_\alpha^{\EE}\|_{L^{1}(\rho_t)}}\leq 4C_{\alpha,\EE}^2V(\rho_t)\,,
	\end{align}
	where we have used \eqref{511'} in the last inequality. This implies estimate $2)$.
	
	To obtain $3)$, we compute 
	\begin{align*}
	&|v_{\alpha,\EE}(\rho_t)-E(\rho_t)|^2=\int_{\BS^{d-1}}|v_{\alpha,\EE}(\rho_t)-v+v-E|^2d\rho_t(v)\\
	=&\int_{\BS^{d-1}}|v_{\alpha,\EE}(\rho_t)-v|^2d\rho_t+\int_{\BS^{d-1}}|v-E|^2d\rho_t+2\int_{\BS^{d-1}}\la v_{\alpha,\EE}(\rho_t)-v,v-E\ra d\rho_t\\
	\leq&4C_{\alpha,\EE}^2V(\rho_t)+2V(\rho_t)+2E^2-2=(4C_{\alpha,\EE}^2-2)V(\rho_t)\,,
	\end{align*}
	which completes the proof.
\end{proof}

\begin{proof}({\bf Lemma \ref{lemome}})
	The derivative of $\|\omega_\EE^\alpha \|_{L^1(\rho_t)}$ is given by
	\begin{align}
	\frac{d}{dt} \int_{\BS^{d-1}} \omega_\EE^\alpha(v) d\rho_t & = \int_{\BS^{d-1}} \frac{\sigma^2}{2}|v-v_{\alpha,\EE}(\rho_t)|^2 \Delta_{\BS^{d-1}} \omega_\EE^\alpha\notag\\
	&\quad - \lambda (\langle v_{\alpha,\EE}(\rho_t),v\rangle v-v_{\alpha,\EE}(\rho_t)) \cdot \nabla_{\BS^{d-1}} \omega_\EE^\alpha d\rho_t\notag\\
	&=\int_{\BS^{d-1}} \frac{\sigma^2}{2}|v-v_{\alpha,\EE}(\rho_t)|^2 \Delta_{\BS^{d-1}} \omega_\EE^\alpha+ \lambda P(v)v_{\alpha,\EE}(\rho_t)\cdot \nabla_{\BS^{d-1}}  \omega_\EE^\alpha d\rho_t\notag\\
	&=: \textbf{I} + \textbf{II}\,.
	\end{align}
	The gradient and the Laplacian of the weight function can be computed as
	\begin{align}
	\nabla_{\BS^{d-1}} \omega_\EE^\alpha(v) =\nabla \omega_\EE^\alpha\left (\frac{v}{|v|} \right )\bigg|_{|v|=1} =\frac{1}{|v|}\left(I-\frac{vv^T}{|v|^2}\right)\nabla \omega_\EE^\alpha \bigg|_{|v|=1} =-\alpha e^{-\alpha \EE }(I-vv^T) \nabla\EE\bigg|_{|v|=1}
	\end{align}
	and 
	\begin{align}
	\Delta_{\BS^{d-1}} \omega_\EE^\alpha(v)= \Delta \omega_\EE^\alpha\left (\frac{v}{|v|} \right)\bigg|_{|v|=1}=\frac{\Delta \omega_\EE^\alpha}{|v|}-(d-1)\frac{v}{|v|^3}\cdot \nabla \omega_\EE^\alpha-\frac{vv^T}{|v|^3}:\nabla^2\omega_\EE^\alpha\bigg|_{|v|=1}\,.
	\end{align}
	We further have
	\begin{align}
	\nabla \omega_\EE^\alpha &= -\alpha e^{-\alpha \EE} \nabla \EE \in \mathbb{R}^d\,; \\
	\nabla^2 \omega_\EE^\alpha &= -\alpha e^{-\alpha \EE}(-\alpha \nabla \EE \otimes \nabla \EE + \nabla^2 \EE) \in \mathbb{R}^{d\times d}\,;\\
	\Delta \omega_\EE^\alpha &= \alpha^2 e^{-\alpha \EE}|\nabla \EE|^2 - \alpha e^{-\alpha \EE} \Delta \EE \in \mathbb{R}\,.
	\end{align}	
	We estimate the term $\textbf{I}$ as follows
	\begin{align}\label{esI}
	\textbf{I} &= \frac{\sigma^2}{2}\int |v-v_{\alpha, \EE}|^2 \bigg(\Delta \omega_\EE^\alpha-(d-1) v \cdot \nabla \omega_\EE^\alpha  -v \otimes v:\nabla^2\omega_\EE^\alpha\bigg)d\rho_t(v) \notag\\
	&= \frac{\sigma^2}{2}\int |v-v_{\alpha, \EE}|^2 \bigg[ \alpha^2 |\nabla \EE|^2 - \alpha \Delta \EE + \alpha (d-1)v \cdot \nabla \EE\notag\\
	&\qquad \qquad + \alpha \bigg( v\otimes v : (-\alpha \nabla \EE \otimes \nabla \EE) + v\otimes v : \nabla^2 \EE\bigg)\bigg]e^{-\alpha \EE} d \rho_t(v) \notag\\
	&\geq \frac{\sigma^2}{2}\int |v-v_{\alpha, \EE}|^2 \bigg[ - \alpha \Delta \EE + \alpha (d-1)\nabla \EE \cdot v - \alpha^2 |\nabla \EE|^2 - \alpha|\nabla^2 \EE|\bigg] e^{-\alpha \EE} d\rho_t(v) \notag \\
	& \geq \frac{\sigma^2}{2}\int |v-v_{\alpha, \EE}|^2 e^{-\alpha \EE}\bigg[- \alpha c_2 - \alpha (d-1)c_1 - \alpha^2 c_1^2-\alpha c_2\bigg]d\rho_t(v)  \notag\\
	& \geq -2\sigma^2\alpha e^{-2\alpha \underline{\EE}}(2c_2+(d-1)c_1+\alpha c_1^2)\frac{V(\rho_t)}{\|\omega_{\alpha}^\EE\|_{L^1(\rho_t)}}\,,
	\end{align}
	where we have used that $|\nabla\EE|\leq c_1$;  $|\Delta \EE|, |\nabla^2\EE| \leq c_2$, estimate \eqref{511}  and the property
	\begin{equation}
	v\otimes v : \nabla \EE \otimes \nabla \EE = \sum_{i,j} v_i v_j \partial_i \EE \partial_j \EE  \leq (\sum_i \partial_i \EE)^2 \leq |\nabla \EE|^2\,.
	\end{equation}

	For the term $\textbf{II}$ we get
	\begin{align}
	\textbf{II} &=
	-\alpha \lambda \int_{\BS^{d-1}} e^{-\alpha\EE}P(v)v_{\alpha,\EE}(\rho_t)\cdot(\nabla\EE-vv^T\nabla\EE)d \rho_t=	-\alpha \lambda \int_{\BS^{d-1}} e^{-\alpha\EE}P(v)v_{\alpha,\EE}(\rho_t)\cdot \nabla\EE d \rho_t\notag\\
	&=\alpha \lambda \int_{\BS^{d-1}} e^{-\alpha\EE}(\langle v_{\alpha,\EE}(\rho_t),v\rangle v-v_{\alpha,\EE}(\rho_t)) \cdot  \nabla\EE  d\rho_t\notag\\
	&\geq -\alpha \lambda c_1e^{-\alpha \underline \EE}\int_{\BS^{d-1}} |\langle v_{\alpha,\EE}(\rho_t),v\rangle v-v_{\alpha,\EE}(\rho_t)| d\rho_t\,,
	\end{align}
	where in the second equality we have used the fact that $v\cdot P(v)v_{\alpha,\EE}(\rho_t)=0$. We observe that
	\begin{align}\label{esmu}
	\int_{\BS^{d-1}} |\langle v_{\alpha,\EE}(\rho_t),v\rangle v-v_{\alpha,\EE}(\rho_t)| d\rho_t&=\int_{\BS^{d-1}} |\langle v_{\alpha,\EE}(\rho_t)-v,v\rangle v +v-v_{\alpha,\EE}(\rho_t)| d\rho_t\notag\\
	&\leq \int_{\BS^{d-1}} |\langle v_{\alpha,\EE}(\rho_t)-v,v\rangle v | d\rho_t+\int_{\BS^{d-1}} |v- v_{\alpha,\EE}(\rho_t)| d\rho_t\notag\\
	&\leq 2\int_{\BS^{d-1}}|v-v_{\alpha,\EE}(\rho_t)|d\rho_t \leq  4 \frac{e^{-\alpha \underline{\EE}}}{\|\omega_\EE^\alpha\|_{L^{1}(\rho_t)}}V(\rho_t)^{\frac{1}{2}}\,,
	\end{align}
	where we have used \eqref{eqsi} in the last inequality. Thus we have
	\begin{equation}\label{esII}
	\textbf{II}\geq -2\alpha \lambda c_1e^{-\alpha \underline \EE}\int_{\BS^{d-1}}|v-v_{\alpha,\EE}(\rho_t)|d\rho_t(v)\geq-4\alpha \lambda c_1e^{-2\alpha \underline \EE}\frac{V(\rho_t)^{\frac{1}{2}}}{\|\omega_{\alpha}^\EE\|_{L^1(\rho_t)}}\,.
	\end{equation}
	
	Combining the inequalities \eqref{esI} and \eqref{esII} yields
	\begin{align}
	\frac{1}{2} \frac{d}{dt}\|\omega_\alpha^\EE \|_{L^1(\rho_t)}^2 & = \|\omega_\alpha^\EE  \|_{L^1(\rho_t)} \frac{d}{dt} \|\omega_\alpha^\EE \|_{L^1(\rho_t)} \notag \\ 
	& \geq -2\sigma^2\alpha e^{-2\alpha \underline{\EE}}(2c_2+(d-1)c_1+\alpha c_1^2)V(\rho_t)-4\alpha \lambda c_1e^{-2\alpha \underline \EE}V(\rho_t)^{\frac{1}{2}}\notag \\
	&=:-b_1(d,\sigma,\alpha,c_1,c_2,\underline{\EE})V(\rho_t)-b_2(\alpha,c_1,\underline{\EE})\lambda V(\rho_t)^{\frac{1}{2}}\,,
	\end{align}
	where $b_1,b_2\to 0$ as $\alpha\to\infty$.
\end{proof}
\subsection{Well-posedness and regularity result}

\begin{theorem}\label{thmregularity}
	For any given $T>0$, let $\rho_0\in L^2(\BS^{d-1})$. Then there exists a unique weak solution $\rho$ to equation \eqref{PDE}. Moreover it has the following regularity
	\begin{align}
	\rho\in L^\infty([0,T];L^2(\BS^{d-1}))\cap L^2([0,T];H^1(\BS^{d-1}))\mbox{ and } \partial_t\rho\in L^2([0,T];H(\BS^{d-1})')\,.
	\end{align}
\end{theorem}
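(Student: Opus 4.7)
The plan is to linearize by freezing $u(t) := v_{\alpha,\EE}(\rho_t)$ as an arbitrary curve $u \in C([0,T]; \bar B_1(0))$, solve the resulting linear equation via Faedo--Galerkin approximation, and close the nonlinearity by a fixed-point iteration in $u$, with the time regularity being read off directly from the weak form.

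\textbf{Linearized equation.} For any fixed $u$, the coefficients $\eta(v,u) = \langle u, v\rangle v - u$ and $f(v,u) = |v-u|^2$ are smooth in $v \in \BS^{d-1}$ and uniformly bounded in $t$. I would construct Galerkin truncations $\rho^n = \sum_{k \leq n} c_k^n(t) Y_k$ in the spherical harmonic basis of $L^2(\BS^{d-1})$, yielding a linear ODE system with smooth time-dependent coefficients. Testing against $\rho^n$ and using the identities $\int \rho\, \nabla\cdot(\eta \rho)\,dV = \tfrac12\int \rho^2\, \nabla\cdot\eta\,dV$ and $\int \rho\, \Delta(f\rho)\,dV = -\int f\,|\nabla \rho|^2\,dV + \tfrac12\int \rho^2\, \Delta f\,dV$ (valid since $\BS^{d-1}$ has no boundary) produces the basic energy inequality
\begin{equation*}
\frac{1}{2}\frac{d}{dt}\|\rho^n\|_{L^2}^2 + \frac{\sigma^2}{2}\int_{\BS^{d-1}}|v-u|^2\,|\nabla_{\BS^{d-1}}\rho^n|^2\,dV \leq C(\lambda,\sigma,d)\|\rho^n\|_{L^2}^2,
\end{equation*}
from which Gronwall yields a uniform $L^\infty([0,T]; L^2(\BS^{d-1}))$ bound together with a weighted $H^1$ gradient estimate. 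To recover the full $L^2([0,T]; H^1(\BS^{d-1}))$ bound despite the possible vanishing of $|v-u|^2$ at $v=u$ when $|u|=1$, I would first work with the uniformly elliptic regularization $|v-u|^2+\varepsilon$, derive $\varepsilon$-independent estimates using the identical strategy, and pass to the limit $\varepsilon \to 0$. Weak compactness (Banach--Alaoglu plus Aubin--Lions) passes to the limit in $n$ and yields a solution $\rho^u \in L^\infty_t L^2_v \cap L^2_t H^1_v$. The bound $\|\partial_t \rho^u\|_{H^1(\BS^{d-1})'} \lesssim \|\rho^u\|_{H^1}$ follows directly, since in the weak form each term on the right-hand side pairs $\rho^u$ and $\nabla \rho^u$ against the $L^\infty$ coefficients $\eta$, $f$, $\nabla f$ and $\phi$, $\nabla \phi$.

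\textbf{Fixed point and uniqueness.} Define the map $\Phi: u \mapsto v_{\alpha,\EE}(\rho^u_\cdot)$ on $C([0,T_0]; \bar B_1)$. By Lemma \ref{lemv} and the smoothness of $\EE$, $v_{\alpha,\EE}(\rho)$ is Lipschitz in $\rho$ with constant controlled by $\alpha$ and $\|\nabla\EE\|_\infty$; combined with a stability estimate between two linearized solutions $\rho^{u_1}$ and $\rho^{u_2}$ (proved by subtracting the PDEs and repeating the energy estimate), $\Phi$ is Lipschitz on $C([0,T_0]; \bar B_1)$ with constant controlled by $\alpha$, $\|\nabla\EE\|_\infty$, $T_0$, and $\|\rho_0\|_{L^2}$. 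For $T_0$ small enough this is a contraction, producing a unique fixed point; iteration covers $[0,T]$. Uniqueness of the weak solution in the full regularity class follows by an analogous energy estimate on the difference of two nonlinear solutions, absorbing the non-local dependence on $\rho$ through the Lipschitz continuity of $v_{\alpha,\EE}$.

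\textbf{Main obstacle.} The principal technical difficulty is that the diffusion coefficient $|v-v_{\alpha,\EE}(\rho_t)|^2$ vanishes exactly when $v_{\alpha,\EE}(\rho_t) \in \BS^{d-1}$ (which happens asymptotically, as $\rho_t$ concentrates near a global minimizer). The PDE is therefore not uniformly parabolic, and the classical $L^2$--$H^1$ parabolic theory does not apply verbatim; the elliptic regularization $|v-u|^2 + \varepsilon$ sketched above, together with uniform-in-$\varepsilon$ estimates exploiting the smoothness and boundedness of all coefficients in $v$ on the compact sphere, is how I would bypass this obstruction.
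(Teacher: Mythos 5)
Your overall architecture is the same as the paper's: freeze the nonlocal coefficient $v_{\alpha,\EE}(\rho_t)$, solve the resulting linear parabolic equation, run the energy estimate obtained by testing with $\rho$ itself (your two integration-by-parts identities are exactly the ones the paper uses), bound $\partial_t\rho$ in $H^1(\BS^{d-1})'$ from the weak form, and conclude by Aubin--Lions. The paper closes the nonlinearity by Picard iteration plus compactness rather than by a contraction in $u$, and gets uniqueness from the uniqueness of the associated nonlinear SDE in \cite{fhps20-1} rather than from a PDE energy estimate; these are interchangeable routes. Two small points: Lemma \ref{lemv} does not provide the Lipschitz dependence of $\rho\mapsto v_{\alpha,\EE}(\rho)$ that your contraction needs (that stability estimate is in the companion paper), and your fixed-point space $C([0,T_0];\bar B_1(0))$ should exclude $|u|=1$, for the reason below.

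The genuine gap is the step where you claim to recover the full $L^2([0,T];H^1(\BS^{d-1}))$ bound from the regularized coefficient $|v-u|^2+\varepsilon$ via ``$\varepsilon$-independent estimates.'' This cannot work as stated: the only $\varepsilon$-uniform gradient information the energy identity yields is the weighted bound $\int_0^T\int(|v-u|^2+\varepsilon)|\nabla_{\BS^{d-1}}\rho^\varepsilon|^2\,dv\,dt\leq C$, and if $|u(t)|=1$ the weight degenerates near $v=u(t)$, so no unweighted $H^1$ bound survives the limit $\varepsilon\to0$; for a genuinely degenerate coefficient the conclusion $\rho\in L^2([0,T];H^1)$ need not hold at all, so no regularization can deliver it. The correct observation --- the one the paper implicitly relies on when it places $\operatorname{ess}\inf_{v}|v-v_{\alpha,\EE}(\rho_t^n)|^2$ in front of $\|\nabla_{\BS^{d-1}}\rho_t^{n+1}\|_2^2$ --- is that the degeneracy never occurs along the iteration. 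Writing $e=v_{\alpha,\EE}(\rho_t)/|v_{\alpha,\EE}(\rho_t)|$ and using $\la v,e\ra=1-\tfrac12|v-e|^2$ on the sphere, one has $1-|v_{\alpha,\EE}(\rho_t)|=\tfrac12\int|v-e|^2\omega_\alpha^\EE\,d\rho_t/\|\omega_\alpha^\EE\|_{L^1(\rho_t)}$, and an $L^2$ bound on $\rho_t$ forbids concentration of mass near $e$ (since $\rho_t(B_\delta)\leq\|\rho_t\|_2|B_\delta|^{1/2}$), which yields a quantitative lower bound $1-|v_{\alpha,\EE}(\rho_t)|\geq c(\alpha,\EE,\|\rho_t\|_2)>0$ and hence $|v-v_{\alpha,\EE}(\rho_t)|^2\geq c^2$ for every $v\in\BS^{d-1}$. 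Since your $L^\infty([0,T];L^2)$ estimate is obtained without any ellipticity (your energy inequality is correct as written), the argument should be run in that order: first the $L^2$ bound by Gronwall, then uniform non-degeneracy of the diffusion coefficient, then the full $H^1$ bound from the now uniformly parabolic equation. With that replacement your proof goes through.
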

\begin{proof}
	The proof is standard and based on Picard's iteration.  We sketch below the details. Let $\rho^0(x,t)\equiv \rho_0(x)$. For $n\geq 0$, let $\rho^{n+1}$ be the unique weak solution to following linear equation
	\begin{equation}
	\partial_t \rho_t^{n+1}= \lambda \nabla_{\BS^{d-1}} \cdot ((\langle v_{\alpha, \EE }(\rho_t^n), v \rangle v - v_{\alpha,\EE}(\rho_t^n) )\rho_t^{n+1})+\frac{\sigma^2}{2}\Delta_{\BS^{d-1}} (|v-v_{\alpha,\EE}(\rho_t^n) |^2\rho_t^{n+1}),\quad t>0\,,
	\end{equation}
	with the initial data $\rho^{n+1}(x,0)=\rho_0(x)$ for any given $\rho^n\in L^\infty([0,T];L^2(\BS^{d-1}))\cap L^2([0,T];H^1(\BS^{d-1}))$. For any given $T>0$ and $t\in[0,T]$, it is easy to compute that 
	\begin{align*}
	&\frac{1}{2}\frac{d}{dt}\|\rho_t^{n+1}\|_2^2+\frac{\sigma^2}{2}\int_{\BS^{d-1}}\nabla_{\BS^{d-1}}\rho_t^{n+1} \cdot \nabla_{\BS^{d-1}}  (|v-v_{\alpha,\EE}(\rho_t^n) |^2\rho_t^{n+1}) dv\\
	=&-\lambda  \int_{\BS^{d-1}}\nabla_{\BS^{d-1}}\rho_t^{n+1}\cdot (\langle v_{\alpha, \EE }(\rho_t^n), v \rangle v - v_{\alpha,\EE}(\rho_t^n) )\rho_t^{n+1}dv\leq \lambda  \int_{\BS^{d-1}}|\nabla_{\BS^{d-1}}\rho_t^{n+1}|\rho_t^{n+1}dv \,.
	\end{align*}
	This lead to
	\begin{align*}
	\frac{1}{2}\frac{d}{dt}\|\rho_t^{n+1}\|_2^2&\leq\lambda  \int_{\BS^{d-1}}|\nabla_{\BS^{d-1}}\rho_t^{n+1}|\rho_t^{n+1}dv-\frac{\sigma^2}{2}\int_{\BS^{d-1}}|\nabla_{\BS^{d-1}}\rho_t^{n+1}|^2 |v-v_{\alpha,\EE}(\rho_t^n) |^2dv\\
	&\quad -\sigma^2\int_{\BS^{d-1}} \nabla_{\BS^{d-1}}\rho_t^{n+1} \cdot (v-v_{\alpha,\EE}(\rho_t^n))\rho_t^{n+1}dv\\
	&\leq -\frac{\sigma^2}{2}\min\limits_{t\in[0,T]}\operatorname{ess}\inf\limits_{v\in \BS^{d-1}}|v-v_{\alpha,\EE}(\rho_t^n)|^2\| \nabla_{\BS^{d-1}}\rho_t^{n+1}\|_2^2\\
	&\quad+\varepsilon \| \nabla_{\BS^{d-1}}\rho_t^{n+1}\|_2^2+C(\varepsilon,\sigma,\lambda)  \|\rho_t^{n+1}\|_2^2 \\
	&\leq C(\varepsilon,\sigma,\lambda)  \|\rho_t^{n+1}\|_2^2\,,
	\end{align*}
	where we have used  H\"{o}lder's inequality in the second inequality. Applying Gronwall's inequality it yields that 
	\begin{equation}
	\|\rho_t^{n+1}\|_2^2+\int_0^T \| \nabla_{\BS^{d-1}}\rho_t^{n+1}\|_2^2dt\leq C(T,\sigma,\lambda,\|\rho_0\|_2)\,.
	\end{equation}
	We also get that for all $\psi\in H^1(\BS^{d-1})$
	\begin{align*}
	&\|\partial_t\rho_t^{n+1}\|_{H^1(\BS^{d-1})'}=\sup_{\|\psi\|_{H^1}\leq 1}|\la\partial_t\rho_t^{n+1},\psi\ra|\\
	\leq &\sup_{\|\psi\|_{H^1}\leq 1}\left|\la\nabla_{\BS^{d-1}}\psi,\lambda(\langle v_{\alpha, \EE }(\rho_t^n), v \rangle v - v_{\alpha,\EE}(\rho_t^n) )\rho_t^{n+1}+\frac{\sigma^2}{2}\nabla_{\BS^{d-1}}  (|v-v_{\alpha,\EE}(\rho_t^n) |^2\rho_t^{n+1})\ra\right|\\
	\leq &C(\lambda,\sigma)\|\rho_t^{n+1}\|_{H^1}\,.
	\end{align*}
	Thus we obtain $\partial_t\rho^{n+1}\in  L^2([0,T];H(\BS^{d-1})')$. Note that this also implies that $\rho^{n+1}\in \mc{C}([0,T];L^2(\BS^{d-1}))$ due to the fact that
	\begin{equation*}
	\max_{0\leq t\leq T}\| \rho^{n+1} \|_2\leq C(\|\rho^{n+1} \|_{L^2([0,T],H^1)} +\|\partial_t\rho^{n+1}\|_{ L^2([0,T];H(\BS^{d-1})')})\,,
	\end{equation*}
	where $C$ depends only $T$. Then by Aubin-Lions lemma, there exists a subsequence $\rho^{n_k}$ and a function $\rho\in L^2([0,T]\times \BS^{d-1})$ such that
	\begin{equation}
	\rho^{n_k}\to \rho \mbox{  in } L^2([0,T]\times \BS^{d-1}) \mbox{  as } k\to \infty\,.
	\end{equation}
	To finish the proof of existence we are left to pass the limit and verify $\rho$ is the solution, we omit the details here of this very standard concluding step (see, e.g., \cite[Theorem 2.4]{albi2017mean} for similar arguments).
	
	As for the uniqueness, it has been obtained in \cite[Section 2.2 and Section 2.3]{fhps20-1}  by using the uniqueness of the corresponding nonlinear SDE \eqref{selfprocess}.
\end{proof}

\section{Conclusions}

We presented the numerical implementation of a new consensus-based model for global optimization on the sphere, which is inspired by the kinetic Kolmogorov-Kuramoto-Vicsek equation.
The main result of this paper is about {the first and currently unique proof of the convergence of consensus-based optimization to global minimizers} provided conditions of well-preparation of the initial datum. We present several numerical experiments in low dimension and synthetic examples in order to illustrate the behavior of the method and we tested the algorithms in high dimension against state of the art methods in a couple of challenging problems in signal processing and machine learning, namely the phase retrieval problem and the robust subspace detection.
These experiments show that the algorithm proposed in the present paper scales well with the dimension and is very versatile (one just needs to modify the definition of the function $\EE$ and the rest  goes with the same code\footnote{{\it 
		https://github.com/PhilippeSu/KV-CBO}}!). The algorithm is able to perform essentially as good as {\it ad hoc} state of the art methods and in some instances it obtains quantifiably better results. 
The theoretical rate of convergence  is of order $N^{-1}$ in the particle number $N$ and it does not depend on the dimension. Multiplicative constants may depend at most linearly on the dimension $d$ and, as worst case scenario, exponentially in the parameter $\alpha$. The rate of convergence is exponential and explicitly computable from the parameters of the method, i.e., $\lambda \theta - 2 (d-1) e^{\alpha(\overline{\EE}-\underline{\EE})} \sigma^2$. The numerical experiments in high dimension ($d \approx 3000$) confirm that the method is in general not affected by curse of dimensionality. Moreover, the requirement of well-preparation of the initial datum (Definition \ref{def:wellprep}) is due to the proving technique we are using based on the monotone decay of the variance. In the case of symmetric cost functions $\EE(v)=\EE(-v)$, the well-preparation is by no means a severe restriction. We conjecture that with other proving  techniques the conditions of well-preparation can be removed, since in the numerical experiments the initialization by uniform distribution yields to global convergence consistently. In our view, this work represents a fundamental theoretical contribution to CBO methods on the sphere, on which to build variations of the algorithm with the aim of further improving its complexity and convergence towards the global minimum. A promising perspective in this direction is to consider the introduction of anisotropic noise in order to reduce dependence of the parameters from the dimension and to better explore the search space in case of very high dimensional problems \cite{carrillo2019consensus}. This and other algorithmic improvements are left to future research.\\

\textbf{Acknowledgment} Fornasier and Hui Huang acknowledge the support of the DFG Project "Identification of Energies from Observation of Evolutions" and the DFG SPP 1962 "Non-smooth and Complementarity-based Distributed Parameter Systems: Simulation and Hierarchical Optimization".     The present project and Philippe  S\"{u}nnen  are supported by the National Research Fund, Luxembourg (AFR PhD Project Idea ``Mathematical Analysis of Training Neural Networks'' 12434809). 
	Lorenzo Pareschi acknowledges the support of the John Von Neumann guest Professorship program of the Technical University of Munich during the preparation of this work. The authors acknowledge the support and the facilities of the LRZ Compute Cloud of the Leibniz Supercomputing Center of the Bavarian Academy of Sciences, on which the numerical experiments of this paper have been tested.

\bibliographystyle{plain}
\bibliography{bibfile}

\end{document}